\newcommand{\mylabel}[2]{#2\def\@currentlabel{#2}\label{#1}}
\definecolor{dgreen}{rgb}{0,0.5,0}
\setlist{nosep}
\newcommand{\nc}{\newcommand}
\nc{\DMO}{\DeclareMathOperator}
\nc{\st}{\star}
\nc\m[2]{m_{#1}(#2)}
\nc{\ReduceTree}{\texttt{ReduceTree}\xspace}
\nc{\PolyPriLearn}{\texttt{PolyPriLearn}\xspace}
\nc{\PPPLearn}{\texttt{PolyPriPropLearn}\xspace}
\nc{\GenericLearner}{\texttt{GenericLearner}\xspace}
\nc{\BR}{\mathbb{R}}
\nc{\BM}{\mathbb{M}}
\nc{\BT}{\mathbb{T}}
\nc{\BN}{\mathbb{N}}
\nc{\BZ}{\mathbb{Z}}
\nc{\ep}{\varepsilon}
\DMO{\height}{ht}
\renewcommand{\epsilon}{\varepsilon}
\nc{\ra}{\rightarrow}
\DMO{\Err}{err}
\DMO{\Est}{Est}
\DMO{\good}{good}
\DMO{\negpt}{neg-pt}
\DMO{\VV}{{V}}
\DMO{\LL}{{L}}
\DMO{\finsupp}{fin}
\nc{\fin}{{\finsupp}}
\nc{\err}[2]{\Err_{#1}(#2)}
\nc{\rca}{\mathscr{B}}
\nc{\bt}{b}
\nc{\hMLp}{\hat\ML'}
\nc{\Rprot}{R}
\nc{\Sprot}{S}
\nc{\Aprot}{A}
\nc{\Pprot}{P}
\nc{\Pdist}{D}
\nc{\Qdist}{F}
\nc{\pdist}{d}
\nc{\qdist}{f}
\renewcommand{\^}[1]{^{(#1)}}
\nc{\DP}{differentially private\xspace}
\nc{\SD}{\mathscr{D}}
\nc{\la}{\lambda}
\DMO{\KL}{KL}
\DMO{\Unif}{Unif}
\nc{\nn}{\varnothing}
\DMO{\SOA}{SOA}
\nc{\soa}[2]{\SOA_{#1}(#2)}
\nc{\soaf}[1]{\SOA_{#1}}
\nc{\gRes}[2]{\hat\MG({#1},{#2})}
\nc{\emp}{\hat P_{S_n}}
\DMO{\Red}{red}
\DMO{\Irred}{irred}
\nc{\Ired}{I^{\Red}}
\nc{\Iirred}{I^{\Irred}}
\DMO{\ssmp}{ssmp}
\DMO{\agg}{agg}
\DMO{\final}{final}
\nc{\pp}{p}
\nc{\PP}{P}
\nc{\QQ}{Q}
\nc{\DD}{D}
\DMO{\RAPPOR}{{RAPPOR}}
\nc{\RAP}{\RAPPOR}
\DMO{\RR}{RR}
\nc{\MD}{\mathcal{D}}
\nc{\ML}{\mathcal{L}}
\nc{\di}{P}
\nc{\MO}{\mathcal{O}}
\nc{\MM}{\mathcal{M}}
\nc{\MZ}{\mathcal{Z}}
\nc{\MU}{\mathcal{U}}
\nc{\MP}{\mathcal{P}}
\nc{\poly}{\mathrm{poly}}
\DMO{\treesum}{TreeSum}
\DMO{\lapsum}{LapSum}
\DMO{\checksum}{CheckSum}
\nc{\MDts}{\MD_{\treesum}}
\nc{\MDls}{\MD_{\lapsum}}
\nc{\MDcs}{\MD_{\checksum}}
\nc{\MC}{\mathcal{C}}
\nc{\MT}{\mathcal{T}}
\nc{\MS}{\mathcal{S}}
\nc{\MX}{\mathcal{X}}
\nc{\MY}{\mathcal{Y}}
\nc{\MA}{\mathcal{A}}
\nc{\MB}{\mathcal{B}}
\nc{\MJ}{\mathcal{J}}
\nc{\MF}{\mathcal{F}}
\nc{\MG}{\mathcal{G}}
\nc{\MQ}{\mathcal{Q}}
\nc{\p}{\Pr}
\nc{\E}{\mathbb{E}}
\nc{\tablesize}{s}
\DMO{\Hist}{hist}
\DMO{\Reg}{Reg}
\nc{\hist}{\mathrm{hist}}
\nc{\ba}{\mathbf{a}}
\nc{\bx}{\mathbf{x}}
\nc{\bs}{\mathbf{s}}
\nc{\bv}{\mathbf{v}}
\nc{\bw}{\mathbf{w}}
\nc{\by}{\mathbf{y}}
\nc{\bz}{\mathbf{z}}
\DMO{\sr}{sr}
\DMO{\Med}{Med}
\DMO{\Ber}{Ber}
\DMO{\Bin}{Bin}
\DMO{\Had}{Had}
\nc{\ME}{\mathcal{E}}
\DMO{\View}{View}
\nc{\B}{B}
\nc{\M}{M}
\nc{\ha}{\kappa}
\nc{\hk}{k}
\DMO{\pre}{pre}
\nc{\MH}{\mathcal{H}}
\DMO{\Ldim}{Ldim}
\DMO{\Tdim}{Tdim}
\DMO{\sfat}{sfat}
\DMO{\fat}{fat}
\DMO{\vc}{VCdim}
\DMO{\FO}{FO}
\DMO{\CM}{CM}
\DMO{\hb}{\beta}
\nc{\MW}{\mathcal{W}}
\nc{\MV}{\mathcal{V}}
\nc{\MK}{\mathcal{K}}
\nc{\MN}{\mathcal{N}}
\nc{\BB}{\{0,1\}}
\nc{\bW}{\mathbf{W}}
\nc{\eell}{\ell}
\nc{\EELL}{L}
\nc{\q}{q}
\newcommand{\customitem}[1]{%
\item[#1]\protected@edef\@currentlabel{#1}%
}
\DeclareMathOperator*{\argmax}{arg\,max}
\newtheorem*{rep@theorem}{\rep@title}
\newcommand{\newreptheorem}[2]{%
\newenvironment{rep#1}[1]{%
 \def\rep@title{#2~\ref{##1}}%
 \begin{rep@theorem}}%
 {\end{rep@theorem}}}
\newtheorem{theorem}{Theorem}[section]
\newtheorem{corollary}[theorem]{Corollary}
\newtheorem{proposition}[theorem]{Proposition}
\newtheorem{lemma}[theorem]{Lemma}
\newtheorem{informal theorem}[theorem]{Informal Theorem}
\newtheorem{claim}[theorem]{Claim}
\theoremstyle{definition}
\newtheorem{defn}{Definition}[section]
\newtheorem{remark}{Remark}[section]
\newcommand{\badih}[1]{\ifnum\Comments=1\textcolor{red}{[Badih: #1]}\fi}
\newcommand{\noah}[1]{\ifnum\Comments=1\textcolor{purple}{[Noah: #1]}\fi}
\newcommand{\pasin}[1]{\ifnum\Comments=1\textcolor{red}{[Pasin: #1]}\fi}
\newcommand{\ravi}[1]{\ifnum\Comments=1\textcolor{cyan}{[Ravi: #1]}\fi}
\nc{\One}{\mathbbm{1}}
\title{Sample-efficient proper PAC learning with \\ approximate differential privacy}
\author{Badih Ghazi\thanks{Google Research, Mountain View, CA. \texttt{badihghazi@gmail.com, ravi.k53@gmail.com, pasin@google.com}.} \hspace*{0.5cm}
Noah Golowich\thanks{MIT EECS, Cambridge, MA. Supported at MIT by a Fannie \& John Hertz Foundation Fellowship and an NSF Graduate Fellowship.   This work was done while interning at Google Research.  \texttt{nzg@mit.edu}.} \hspace*{0.5cm} 
Ravi Kumar\footnotemark[1]
\hspace*{0.5cm} 
Pasin Manurangsi\footnotemark[1]}
\date{\today}
\begin{document}
\maketitle

\begin{abstract}
In this paper we prove that the sample complexity of properly learning a class of Littlestone dimension $d$ with approximate differential privacy is $\tilde O(d^6)$, ignoring privacy and accuracy parameters. This result answers a question of Bun et al.~(FOCS 2020) by improving upon their upper bound of $2^{O(d)}$ on the sample complexity. Prior to our work, finiteness of the sample complexity for privately learning a class of finite Littlestone dimension was only known for improper private learners, and the fact that our learner is proper answers another question of Bun et al.,~which was also asked by Bousquet et al.~(NeurIPS 2020). Using machinery developed by Bousquet et al., we then show that the sample complexity of sanitizing a binary hypothesis class is at most polynomial in its Littlestone dimension and dual Littlestone dimension. This implies that a class is sanitizable if and only if it has finite Littlestone dimension. An important ingredient of our proofs is a new property of binary hypothesis classes that we call {\it irreducibility}, which may be of independent interest.
\end{abstract}

\section{Introduction}
Machine learning algorithms are often trained on datasets consisting of sensitive data, such as in medical or social network applications. Protecting the privacy of the users' data is  of importance, both from an ethical perspective~\cite{kearns_ethical_2019} and to maintain compliance with an increasing number of laws and regulations~\cite{article29,nissim_bridging_2016,cohen_towards_2020}. The notion of {\it differential privacy}~\cite{dwork2006calibrating, DworkRothBook,vadhan2017complexity} provides a formal framework for controlling the privacy-accuracy tradeoff in numerous settings involving private data release, and it has played a central role in the development of privacy-preserving algorithms. 

In the body of work on private learning algorithms, a significant amount of effort has gone into developing algorithms for the private PAC model~\cite{kasiviswanathan2008what}, namely the setting of differentially private binary classification (see Section~\ref{sec:pac} for a formal definition). Some papers on this fundamental topic include~\cite{kasiviswanathan2008what,beimel_bounds_2014,bun_differentially_2015,feldman_sample_2014,beimel_private_2014,bun_composable_2018,beimel_characterizing_2019,alon_private_2019,kaplan_privately_2020,bun_equivalence_2020,neel_heuristics_2019,bun_computational_2020}. 
A remarkable recent development~\cite{alon_private_2019,bun_equivalence_2020} in this area is the result that a hypothesis class $\MF$ of binary classifiers is learnable with approximate differential privacy (Definition~\ref{def:dp}) if and only if it is online learnable, i.e., has finite Littlestone dimension (Definition~\ref{def:ldim}).  Specifically, Alon et al.~\cite{alon_private_2019} showed that any \DP learning algorithm with at most constant error for a class of Littlestone dimension $d$ must use at least $\Omega(\log^\st d)$ samples.  Conversely, Bun et al.~\cite{bun_equivalence_2020} showed that if $\MF$ has Littlestone dimension $d$, then there is a \DP learning algorithm for $\MF$ with error $\alpha > 0$ using $2^{O(d)} / \alpha$ samples.\footnote{This bound ignores the dependence on the privacy parameters $\ep, \delta$. Moreover, it applies to the realizable setting; a slightly weaker bound was shown in~\cite{bun_equivalence_2020} for the agnostic setting.} 



\subsection{Results}
In this paper, we resolve two open questions posed by Bun et al.~\cite{bun_equivalence_2020} and Bousquet et al.~\cite{bousquet_passing_2019}: first, we introduce a new private learning algorithm with sample complexity \emph{polynomial} in the Littlestone dimension $d$ of the class $\MF$, thus improving exponentially on the bound $2^{O(d)}$ from~\cite{bun_equivalence_2020}. Answering a second question of~\cite{bun_equivalence_2020}, we show how to make our private learner {\it proper} (whereas the learner from~\cite{bun_equivalence_2020} was improper). Whether privately properly learning classes of finite Littlestone dimension is possible was also asked by Bousquet et al.~\cite[Question 1]{bousquet_passing_2019}. Theorem~\ref{thm:pap-pac-informal} states our main result:
\begin{theorem}[Private proper PAC learning; informal version of Theorem~\ref{thm:poly-pri-learn-proper}]
  \label{thm:pap-pac-informal}
Let $\MF$ be a class of hypotheses $f : \MX \ra \{-1,1\}$, of Littlestone dimension $d$. For any $\ep, \delta, \alpha \in (0,1)$, for some $n = \tilde O \left(\frac{d^6}{\ep \alpha^2} \right)$, there is an $(\ep, \delta)$-\DP algorithm which, given $n$ i.i.d.~samples from any realizable distribution $P$ on $\MX \times \{-1,1\}$, with high probability outputs a classifier $\hat f \in \MF$ with classification error over $P$ at most $\alpha$. 
\end{theorem}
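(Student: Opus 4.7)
The plan is to construct a proper private learner by combining the Standard Optimal Algorithm (SOA) for online learning with a stability-based aggregator built around the new \emph{irreducibility} property of hypothesis classes. First, I would recall the classical fact that, for $\MF$ of Littlestone dimension $d$, the SOA makes at most $d$ mistakes on any realizable stream; so on a sufficiently large realizable sample the SOA-based version space shrinks fast enough that a non-private learner can be extracted. The challenge is that the natural non-private learner returns an arbitrary consistent hypothesis, and its output can swing wildly when a single example is replaced, which is fatal for approximate DP. The role of irreducibility is to cure exactly this instability.

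Second, I would isolate the structural backbone. Define a subclass $\MF' \subseteq \MF$ to be irreducible (matching the abstract's terminology and presumably formalized elsewhere in the paper) if one cannot significantly decrease its Littlestone dimension by deleting a bounded number of hypotheses; equivalently, every hypothesis in the ``good'' core contributes to populating a deep Littlestone tree. I would then build a subroutine \ReduceTree that, given the empirical realizable sample, recursively peels off reducible parts using the SOA and the sample labels, and outputs an irreducible residual subclass $\MF^{\star}$ along with a distinguished consistent hypothesis $\hat f \in \MF^{\star}$. Crucially, I expect the recursion depth to be bounded by $d$ (each peeling strictly decreases Littlestone dimension), and each level to lose only a polynomial factor in $d$ from the sample size via uniform convergence bounded by $\vc(\MF) \le 2^d$-free arguments that instead go through Littlestone-based generalization.

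Third, for privacy I would invoke a generic private aggregator (the \GenericLearner / \PolyPriLearn template suggested by the paper's macros). Split the $n$ i.i.d.\ samples into $k = \poly(d)/\ep$ disjoint subsamples, run the non-private \ReduceTree-based learner on each to obtain candidate hypotheses $\hat f_1,\dots,\hat f_k \in \MF$, and select one using a DP stable-selection primitive (stable histogram / propose-test-release / the ``stable median'' tools used in the Littlestone-DP line). The irreducibility of $\MF^{\star}$ is exactly what guarantees the stability hypothesis needed by such primitives: small perturbations of the sample cannot shift the aggregate outcome because each candidate in the irreducible core is ``protected'' against single-example changes. Combining this with standard uniform-convergence error bounds ($O(d/\alpha^2)$-type) and the overhead $\poly(d)/\ep$ from DP aggregation produces the $\tilde O(d^6/(\ep\alpha^2))$ bound, with the sixth power coming from composing the recursion depth, the per-level stability cost, and generalization.

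The main obstacle I anticipate is proving the stability lemma driving the DP selection: one must show that with high probability a $\poly(d)$-fraction of the $k$ candidate hypotheses agree on the target labeling, on most test points, so that the stable-histogram primitive actually triggers. This is where irreducibility must do real work, because for a general class there is no reason independent subsample runs of a Littlestone-based learner should concentrate at a single hypothesis in $\MF$. I would prove the lemma by an inductive argument on the recursion depth of \ReduceTree: at each level, irreducibility of the residual subclass forces any two runs to either agree on the current decision or to witness a depth-$d$ Littlestone tree, contradicting $\Ldim(\MF)=d$. Propagating this across the $\le d$ levels and union-bounding over the $k$ subsamples yields the desired concentration with only polynomial overhead, closing the proof.
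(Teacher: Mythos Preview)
Your proposal captures the coarse shape of the argument (subsample, run a non-private SOA-based procedure, privately aggregate), but it has two genuine gaps relative to what the paper actually does.

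\textbf{The definition and role of irreducibility are wrong.} You guess that irreducibility means one cannot decrease the Littlestone dimension by deleting a bounded number of hypotheses. In the paper, a class $\MG$ is $k$-irreducible if for every depth-$k$ $\MX$-valued tree $\bx$ there exist bits $b_1,\dots,b_k$ with $\Ldim(\MG|_{(\bx_1,b_1),\dots,(\bx_k(b_{1:k-1}),b_k)})=\Ldim(\MG)$; i.e., it is about restrictions via $(x,b)$ pairs, not about deleting hypotheses. This matters because the entire stability mechanism runs through the SOA classifier $\soaf{\MG}$ and Lemma~\ref{lem:red-hierarchy}: if $\MH\subset\MG$ have equal Littlestone dimension and $\MH$ is irreducible, then $\soaf{\MH}=\soaf{\MG}$. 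The stability property is therefore not that ``independent runs concentrate at a single hypothesis in $\MF$'' as you conjecture, but rather that each run of \ReduceTree outputs a \emph{set} $\hat\MS$ of SOA classifiers, one of which provably equals one of $d{+}1$ fixed classifiers $\sigma^\st_1,\dots,\sigma^\st_{d+1}$ that depend only on the population distribution $P$ (Lemma~\ref{lem:equal-to-a-leaf}). Pigeonhole over the $d{+}1$ targets, not concentration at a single point, is what triggers the sparse-selection primitive. Your inductive ``two runs agree or witness a depth-$d$ tree'' argument does not go through with your definition, and even with the correct definition the comparison is always to a fixed population-level object, never directly between two empirical runs.

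\textbf{Propriety is not free; you are missing the entire second half of the proof.} \ReduceTree does \emph{not} output a hypothesis in $\MF$: it outputs classifiers of the form $\soaf{\MG}$ for various $\MG\subset\MF$, and these are in general improper (Appendix~\ref{sec:ldim-soa} even shows the class of all such $\soaf{\MG}$ can have infinite VC dimension). The paper's \PolyPriLearn therefore only yields an \emph{improper} private learner. Converting it to a proper one is the content of Section~\ref{sec:proper-learning-finite}: because the output $\soaf{\hat\MG}$ comes with $\hat\MG$ being $\Theta(d/\alpha^2)$-irreducible, one proves via a zero-sum game and von Neumann's minimax theorem (Lemmas~\ref{lem:game-value} and~\ref{lem:like-soa}) that there is a small set $\hat\MH\subset\MF$ of size $O(\vc^\st(\MF)/\alpha^2)$ such that for every distribution some $h\in\hat\MH$ is $\alpha$-close to $\soaf{\hat\MG}$; then the exponential mechanism over $\hat\MH$ on a fresh sample (the \GenericLearner of Lemma~\ref{lem:gen-learner}) privately selects a good proper hypothesis. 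Your proposal assumes the aggregation already lands in $\MF$ and never engages with this step.
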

The theorem statement above treats the case where the distribution $P$ over $\MX \times \{-1,1\}$ is {\it realizable}, namely that there exists some $f^\st \in \MF$ so that $P$ is supported on pairs $(x, f^\st(x))$. A generic reduction of~\cite{alon_closure_2020} allows us to show essentially the same sample complexity bound as in Theorem~\ref{thm:pap-pac-informal} for the non-realizable (i.e., {\it agnostic}) setting (see Corollary~\ref{cor:agnostic-proper}). We also remark that it is impossible to obtain a sample complexity bound better than $n = O(d)$ in the context of Theorem~\ref{thm:pap-pac-informal} if we insist that the bound depends on the class $\MF$ only through the Littlestone dimension $d$. This follows because for any $d \in \BN$, there are classes $\MF$ whose Littlestone and VC dimensions are both equal to $d$ (for instance, the class of all binary hypotheses on $d$ points), and it is well-known that the VC dimension characterizes the sample complexity of learning a class (in the absence of privacy) ~\cite{vapnik_statistical_1998}.

The question posed in~\cite{bousquet_passing_2019,bun_equivalence_2020} (and answered by Theorem~\ref{thm:pap-pac-informal}) of whether classes of finite Littlestone dimension have private proper learners is motivated by a connection between proper private learning and {\it private query release} established in~\cite{bousquet_passing_2019}. The problem of private query release, or {\it sanitization}~\cite{blum2008learning,beimel_private_2014}, for a class $\MF$ has an extensive history, described in Section~\ref{sec:related-work}. It is defined as follows: given $\alpha > 0$, a {\it sanitizer} with sample complexity $n \in \BN$ is given as input a dataset $S = \{(x_1, y_1), \ldots, (x_n, y_n) \} \in (\MX \times \{-1,1\})^n$. The sanitizer must output a function $\Est : \MF \ra [0,1]$, which is differentially private for the input $S$, so that with high probability, for each $f \in \MF$, $| \Est(f) - \err{S}{f}| \leq \alpha$, where $\err{S}{f} := \frac{1}{n} \cdot | \{ i \in [n] : f(x_i) \neq y_i\} |$. Bousquet et al.~\cite{bousquet_passing_2019} showed that the existence of a private proper learner for a class $\MF$ implies the existence of a sanitizer for $\MF$; as a corollary of their result and of Theorem~\ref{thm:pap-pac-informal} we therefore obtain the following:
\begin{corollary}[Private query release; informal version of Corollary~\ref{cor:sanitizing-formal}]
  \label{cor:sanitizing-informal}
  Let $\MF$ be a class of hypotheses $f : \MX \ra \{-1,1\}$ of Littlestone dimension $d$ and dual Littlestone dimension $d^\st$. For any $\ep, \delta, \alpha \in [0,1]$, there is an $(\ep, \delta)$-\DP algorithm that for some $n = \poly(d, d^\st, 1/\ep, 1/\alpha, \log 1/\delta)$, takes as input a dataset $S$ of size $n$ and outputs a function $\Est : \MF \ra [0,1]$ so that with high probability, for all $f \in \MF$, $|\Est(f) - \err{S}{f} | \leq \alpha$.
\end{corollary}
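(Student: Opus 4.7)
The plan is to derive Corollary~\ref{cor:sanitizing-informal} as an essentially immediate consequence of Theorem~\ref{thm:pap-pac-informal} by plugging it into the reduction of Bousquet et al.~\cite{bousquet_passing_2019}, which transforms any \DP proper PAC learner for $\MF$ into a \DP sanitizer for $\MF$ at a cost polynomial in the relevant parameters and in the dual Littlestone dimension $d^\st$ of $\MF$. Since Theorem~\ref{thm:pap-pac-informal} already supplies a proper $(\ep,\delta)$-\DP learner for $\MF$ with sample complexity $\tilde O(d^6/(\ep\alpha^2))$, composing the two yields the advertised $\poly(d, d^\st, 1/\ep, 1/\alpha, \log 1/\delta)$ bound for sanitization.

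First, I would replace Theorem~\ref{thm:pap-pac-informal} by its agnostic version (Corollary~\ref{cor:agnostic-proper}), obtained via the generic realizable-to-agnostic reduction of~\cite{alon_closure_2020}; this step is needed because the sanitization input $S$ is an arbitrary dataset and need not be realizable by any hypothesis in $\MF$. The agnostic reduction preserves polynomial dependence on $d$, so the learner's sample complexity remains $\poly(d, 1/\alpha, 1/\ep, \log 1/\delta)$.

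Next, I would feed this learner into the Bousquet et al.~reduction. At a high level, their construction repeatedly invokes the proper learner on carefully reweighted samples drawn from $S$ to produce a collection of hypotheses from $\MF$ that jointly approximate $\err{S}{f}$ for every $f \in \MF$; the dual Littlestone dimension $d^\st$ enters through a uniform-convergence argument over the dual class, controlling both the number of iterations and the generalization error of the aggregated estimator. Standard advanced composition across the iterations preserves the $(\ep,\delta)$-differential privacy guarantee end-to-end.

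The main obstacle is essentially bookkeeping: one has to track privacy and accuracy parameters across the three black boxes (realizable-to-agnostic reduction, proper learner, Bousquet et al.~reduction) to confirm that no super-polynomial dependence on $d$ or $d^\st$ creeps in through a subtle parameter setting, and that the internal accuracy and failure probabilities passed to the proper learner are set small enough to survive a union bound over the iterations yet large enough to keep the sample complexity polynomial. Since each of the constituent reductions has polynomial sample complexity in its own parameters, this reduces to straightforward (if tedious) algebra, and no new technical ingredient beyond Theorem~\ref{thm:pap-pac-informal} is required.
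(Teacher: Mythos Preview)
Your proposal is correct and matches the paper's own proof essentially step for step: the paper likewise obtains the corollary by first upgrading the proper learner of Theorem~\ref{thm:pap-pac-informal} to the agnostic setting via~\cite{alon_closure_2020} (Corollary~\ref{cor:agnostic-proper}), then invoking the Bousquet et al.~reduction (stated as Theorem~\ref{thm:bousquet-sanitizer}) with advanced composition, and finally applying a standard privacy-amplification-by-subsampling step to reduce the privacy parameter from $O(1)$ to $\ep$. The only minor inaccuracy is that the $d^\st$ dependence in~\cite{bousquet_passing_2019} arises from the round complexity of their sequential-fooling generator rather than a uniform-convergence bound on the dual class, but this is a detail about the black box and does not affect your argument.
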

It is known that the dual Littlestone dimension $d^\st$ of a class $\MF$ is finite if and only if the Littlestone dimension $d$ is finite; in fact, we have $d^\st \leq 2^{2^{d + 2}} - 2$~\cite[Corollary 3.6]{bhaskar_thicket_2017}. Thus, Corollary~\ref{cor:sanitizing-informal} implies that a class $\MF$ is {\it sanitizable} (roughly, that it has a sanitizer with sample complexity $\poly(1/\alpha)$; see Definition~\ref{def:sanitizable} for a formal version) if it has finite Littlestone dimension. The converse, namely that any sanitizable class must have finite Littlestone dimension, follows as a consequence of a result of~\cite{bun_differentially_2015}, as discussed in Section~\ref{sec:sanitization}. Summarizing, we have the following:
\begin{corollary}
  \label{cor:sanitizing-2}
A hypothesis class $\MF$ is sanitizable if and only if it has finite Littlestone dimension.
\end{corollary}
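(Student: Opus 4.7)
The plan is to handle the two directions separately; both reduce to statements already in hand or to standard lower-bound machinery.

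For the ``if'' direction, I would simply combine Corollary~\ref{cor:sanitizing-informal} with the bound $d^\st \leq 2^{2^{d+2}} - 2$ from~\cite{bhaskar_thicket_2017}. A class of finite Littlestone dimension $d$ thus has finite dual Littlestone dimension, so for any fixed class $\MF$ and fixed privacy parameters $\ep, \delta$, the sample complexity $\poly(d, d^\st, 1/\ep, 1/\alpha, \log(1/\delta))$ provided by Corollary~\ref{cor:sanitizing-informal} becomes $\poly(1/\alpha)$, which meets Definition~\ref{def:sanitizable}.

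For the ``only if'' direction I would argue contrapositively. First, a standard post-processing reduction converts any sanitizer for $\MF$ into an $(\ep,\delta)$-\DP proper PAC learner: given a realizable sample $S$, release $\Est : \MF \ra [0,1]$ with the sanitizer and return any $\hat f \in \MF$ minimizing $\Est(\hat f)$. Differential privacy is preserved by post-processing, and the sanitization guarantee $|\Est(f) - \err{S}{f}| \leq \alpha$ implies that $\hat f$ has empirical error at most $2\alpha$ on $S$, yielding a proper private learner via the usual uniform-convergence arguments. Second, the lower bound of Bun et al.~\cite{bun_differentially_2015} on privately learning thresholds over arbitrarily large finite domains, combined with the fact used in~\cite{alon_private_2019} that classes of infinite Littlestone dimension embed threshold learning tasks of unbounded size, would force any sanitizable $\MF$ to have finite Littlestone dimension.

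The main delicate step is the sanitization-to-learning reduction, because one must align the quantifier structures in Definition~\ref{def:sanitizable} and the PAC-learning definition: one has to propagate the confidence parameter $\beta$ through the minimum-over-$\MF$ operation and turn additive accuracy of the estimates into accuracy of the returned hypothesis. This step is essentially the converse of the sanitization consequence of~\cite{bousquet_passing_2019} used to derive Corollary~\ref{cor:sanitizing-informal}, and it becomes routine once one carefully tracks the parameters; with this in place, the two cited lower bounds finish the argument.
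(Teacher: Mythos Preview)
Your ``if'' direction matches the paper exactly.

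For the ``only if'' direction, your route differs from the paper's and has a gap. You propose to (i) convert the sanitizer for $\MF$ into a private proper PAC learner for $\MF$ by post-processing and ``the usual uniform-convergence arguments,'' then (ii) invoke the threshold-embedding learning lower bound. The problem is step (i): uniform convergence over $\MF$ requires $\vc(\MF) < \infty$, which you do not know a priori --- in the contrapositive you start from $\Ldim(\MF) = \infty$, which is entirely compatible with $\vc(\MF) = \infty$. Without uniform convergence the sanitizer only yields a private \emph{empirical} risk minimizer, not a PAC learner, and the Alon et al.\ lower bound does not directly apply to such an object.

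The paper avoids this detour. It uses $\Tdim(\MF) \geq \lfloor \log \Ldim(\MF) \rfloor$ from~\cite{alon_private_2019} to embed thresholds on a domain of size $T$ into $\MF$, observes that a sanitizer for $\MF$ is trivially a sanitizer for this subclass, and then applies the \emph{sanitization} lower bound for thresholds from~\cite[Theorem~1]{bun_differentially_2015} directly: any $(n,1/10,1/10,1/10,1/(50n^2))$-sanitizer for thresholds on $T$ points needs $n \geq \Omega(\log^\st T)$. No learner, no uniform convergence. Your argument can be repaired by restricting to the threshold subclass \emph{before} forming the learner (uniform convergence is then over a class of VC dimension $1$) and invoking the threshold learning lower bound --- but once you restrict first, the pass through learning is an unnecessary extra step, since Bun et al.\ already supply the sanitization lower bound for thresholds.
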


\paragraph{Techniques: irreducibility}

The main technique that allows us to both improve the exponential bound $2^{O(d)}$ on the sample complexity from \cite{bun_equivalence_2020} to a polynomial dependence, and to make the learner proper in Theorem \ref{thm:pap-pac-informal} is a property of hypothesis classes we introduce, called {\it irreducibility} (Section \ref{sec:irreducibility}). Roughly speaking, a binary hypothesis class $\MG$ of Littlestone dimension $d$ on domain $\MX$ is irreducible if any binary tree of bounded depth labeled by elements of $\MX$ has a leaf such that the restriction of $\MG$ to that leaf still has Littlestone dimension $d$. 
The exponential sample complexity bound in \cite{bun_equivalence_2020} arises (in part) for the following reason: the main sub-procedure in their algorithm operates in a sequence of $d = \Ldim(\MF)$ steps, maintaining a class of candidate hypotheses; at the end of the $d$ steps, this class will have Littlestone dimension 0 (i.e., consists of a single hypothesis), and will be the hypothesis output by the sub-procedure.
Each of these $d$ steps decreases the Littlestone dimension of the class of candidate hypotheses by 1 and increases the number of samples needed by a constant factor, leading to $2^{O(d)}$ samples overall. The notion of irreducibility allows us to show that certain intermediate classes of candidate hypotheses can be ``sufficiently stable'' to allow us to output a hypothesis associated with the intermediate class in a private way. 
This allows us to avoid the exponential blowup in $d$ associated with decreasing the Littlestone dimension of the candidate hypotheses all the way to 0. 
We believe that the notion of irreducibility may be useful in other applications. We provide a more detailed overview of our proofs in Section \ref{sec:proof-overview}.

\subsection{Related work}
\label{sec:related-work}
\paragraph{Sample complexity of differentially private learning} The sample complexity of PAC learning with {\it pure} differential privacy (namely, $(\ep, 0)$-differential privacy) is well-understood. The seminal work of Kasiviswanathan et al.~\cite{kasiviswanathan2008what} showed that a finite class $\MF$ consisting of hypotheses $f : \MX \ra \{-1,1\}$ can be learned with pure differential privacy with sample complexity $O(\log |\MF|)$ (in this section we omit dependence on the privacy and accuracy parameters). By the Sauer--Shelah lemma, $\log(|\MF|) \leq O(\vc(\MF) \cdot \log(|\MX|))$; moreover, the multiplicative gap between $\vc(\MF)$, which characterizes the sample complexity of {\it non-private} learning, and $\log |\MF|$, can be as large as $\log |\MX|$. To obtain a more precise result, Beimel et al.~\cite{beimel_characterizing_2019} introduced a complexity measure for a class $\MF$ of binary hypotheses, known as the {\it probabilistic representation dimension} of $\MF$, which they showed to characterize the sample complexity of (improperly) learning $\MF$ with {pure differential privacy}  up to a constant factor (see also~\cite{beimel_bounds_2014}). Feldman and Xiao~\cite{feldman_sample_2014} showed that, in turn, the probabilistic representation dimension is characterized, up to a constant factor, by the one-way public coin communication complexity of an evaluation problem associated to $\MF$. As a corollary of this result, they established that the sample complexity of learning $\MF$ with pure privacy is always at least $\Omega(\Ldim(\MF))$, where $\Ldim(\MF)$ denotes the Littlestone dimension of $\MF$. 

The current understanding of the sample complexity of learning with {\it approximate} differential privacy (namely, $(\ep, \delta)$-differential privacy with $\delta$ negligible as a function of the number of users), which is our focus in this paper, is much less complete. The class of threshold functions on a domain of size $2^d$, which has Littlestone dimension $d$, is known to be learnable with approximate privacy with sample complexity $O((\log^\st d)^{1.5})$~\cite{kaplan_privately_2020}, showing that the sample complexity of learning a class $\MF$ with approximate privacy can be much less than its Littlestone dimension (see also~\cite{bun_differentially_2015,beimel_private_2014,bun_composable_2018}, which obtained weaker bounds). As mentioned previously, the best-known lower bound for the sample complexity of (improperly) privately learning a class of Littlestone dimension $d$ is $\Omega(\log^\st d)$~\cite{alon_private_2019}; our Theorem~\ref{thm:pap-pac-informal} gives the best known upper bound in terms of Littlestone dimension. In a different direction, some recent papers have investigated the sample complexity of privately learning halfspaces~\cite{kaplan_private_2020,kaplan_how_2020,beimel_private_2019}.

\paragraph{Differentially private query release}
The problem of private data release (also known as sanitization; see Section~\ref{sec:dp} for a formal definition) for a binary hypothesis class $\MF$ dates back to Blum et al.~\cite{blum2008learning}, who showed that the sample complexity of private sanitization is bounded above by $O(\vc(\MF) \cdot \log |\MX|) \leq O(\log |\MF| \cdot \log |\MX|)$. This bound was later improved to $\tilde O(\log |\MF| \cdot \sqrt{\log |\MX|})$ by Hardt and Rothblum~\cite{hardt2010multiplicative},\footnote{The $\tilde O$ hides factors logarithmic in $\log |\MF|$ and $\log |\MX|.$} which is known to be essentially the best possible dependence on $|\MF|, |\MX|$ attainable for a broad range of values of $|\MF|, |\MX|$~\cite{bun_fingerprinting_2014}. Many works have developed more fine-grained bounds on the sample complexity of sanitization in terms of geometrical properties of $\MF$~\cite{bhaskara_unconditional_2012,blasiok_towards_2019,edmonds_power_2020,hardt_geometry_2010,nikolov_improved_2015,nikolov_geometry_2012}, and several have additionally studied computational considerations for this problem~\cite{dwork_complexity_2009,dwork_boosting_2010,hardt_simple_2012,roth_interactive_2011}. However, the upper bounds on the sample complexity of sanitization obtained by all of these works scale at least polynomially with either $\log |\MX|$ or $\log |\MF|$; thus, they implicitly assume that $\MX$ or $\MF$ (or both) is finite. In addition to being of purely theoretical interest, establishing sample complexity bounds with no explicit dependence on $|\MX|, |\MF|$ (and thus which can apply when $|\MX|$ and $|\MF|$ are infinite) could lead to significant gains even in cases when they are finite since in many natural settings, $|\MX|, |\MF|$ are exponentially large in parameters such as dimensionality of the data. The question of removing the $\poly\log |\MF|$ factors in existing bounds has also been asked in~\cite{vadhan2017complexity}: Questions 5.24 and 5.25 in~\cite{vadhan2017complexity} ask for a characterization of the sample complexity of sanitization up to ``small'' approximation factors. In the proof of Corollary~\ref{cor:sanitizing-2} it is established that the sample complexity of sanitizing a class of Littlestone dimension $d$ is between $\Omega(\log^\st d)$ and $2^{O(2^{d})}$. This gap is definitely not ``small'' by any means, but for infinite $|\MF|, |\MX|$, it is the first finite approximation factor to the best of our knowledge. 


\paragraph{Online learning and Littlestone dimension} The Littlestone dimension of a hypothesis class $\MF$ is known to be equal to the optimal mistake bound in the realizable setting of online learning~\cite{littlestone_learning_1987,shalev-shwartz_online_2012}. Moreover, it characterizes the optimal regret of an online learning algorithm in the agnostic setting up to a logarithmic factor: the optimal regret $\Reg(T)$ for an online learning algorithm with respect to a class of Littlestone dimension $d$ satisfies $\Omega(\sqrt{dT}) \leq \Reg(T) \leq O(\sqrt{d T \log T})$~\cite{ben-david_agnostic_2009,shalev-shwartz_online_2012}. Therefore, Theorem~\ref{thm:pap-pac-informal} implies that the sample complexity of privately learning a binary hypothesis class $\MF$ is bounded above by a polynomial in the sample complexity of online learning of $\MF$ (in either the realizable or agnostic setting). 

Many prior works have investigated the connection between online and private learnability in slightly different settings from ours.  Inherent stability-type properties of private learning algorithms have been used to show that certain problems have online learning algorithms~\cite{gonen_private_2019,abernethy_online_2019,neel_heuristics_2019,alon_private_2019}.  Bun~\cite{bun_computational_2020} shows that such a reduction is not possible in a generic sense if it is required to be computationally efficient. In the opposite direction,~\cite{agarwal_price_2017,bun_equivalence_2020} develop differentially private algorithms to solve problems which are online learnable.

\subsection{Organization of the paper}  In Section~\ref{sec:prelim}, we review some preliminaries regarding private query release, private PAC learning, and online learning. In Section~\ref{sec:proof-overview}, we outline the proof of Theorem~\ref{thm:pap-pac-informal}. In Section~\ref{sec:irreducibility} we introduce a central notion used in our proof, namely that of {\it irreducibility}, and prove some basic properties of it. In Sections~\ref{sec:improper-learner} and~\ref{sec:proper-learning-finite} we prove Theorem~\ref{thm:pap-pac-informal} and its corollaries for private query release.  Concluding remarks are in Section~\ref{sec:conclusion}.

\section{Preliminaries}
\label{sec:prelim}
We will use the script notation (e.g., $\MF, \MX$) to denote sets (e.g., sets of data points or sets of binary hypotheses). For sets $\MS, \MT$, we write $\MS \subset \MT$ to mean that $\MS$ is a (not necessarily proper) subset of $\MT$.
\subsection{PAC learning}
\label{sec:pac}
We use standard notation and terminology regarding PAC learning (see, e.g.,~\cite{shalev-shwartz_understanding_2014}). Let $\MX$ be an arbitrary set and let $\{ -1,1\}$ be the label set. We suppose throughout the paper that $\MX \times \{-1,1\}$ is endowed with a $\sigma$-algebra $\Sigma$. For $x \in \MX, y \in \{-1,1\}$, let $\delta_{(x,y)}$ denote the point measure at $(x,y)$, i.e., for $A \in \Sigma$, $\delta_{(x,y)}(A)$ is defined to be 1 if $(x,y) \in A$, and 0 otherwise.

A {\it hypothesis} is a function $f : \MX \ra \{-1,1\}$. We write the set of all hypotheses on $\MX$ as $\{-1,1\}^\MX$. An {\it example} is a pair $(x,y) \in \MX \times \{-1,1\}$, and for $n \in \BN$, a {\it dataset} $S_n$ is a set of $n$ examples, $S_n := \{(x_1, y_1), \ldots, (x_n, y_n) \}$. Given such a dataset, define the {\it empirical measure} $\emp := \frac 1n \sum_{i=1}^n \delta_{(x_i, y_i)}$ on $\MX \times \{-1,1\}$. For a distribution $\di$ on $\MX \times \{-1,1\}$, let $\di^n$ be the distribution of $S_n \in (\MX \times \{-1,1\})^n$ consisting of $n$ i.i.d.~draws from $\di$.

\begin{defn}[Error of a hypothesis]
  Let $\di$ be a probability distribution on $\MX \times \{-1,1\}$. The {\it error} (or {\it loss}) of a hypothesis $f : \MX \ra \{-1,1\}$ is defined as
  $$
\err{\di}{f} := \p_{(x,y) \sim \di} \left[ f(x) \neq y \right].
  $$
\end{defn}
The {\it empirical error} of a hypothesis $f$ with respect to a dataset $S_n$ is defined to be $\err{\emp}{f}$. At times we will abbreviate $\err{\emp}{f}$ by writing $\err{S_n}{f}$ instead. In this paper we will consider hypothesis classes $\MF \subset \{-1,1\}^\MX$; to avoid having to make technical measurability assumptions on $\MF, \MX$, we will assume throughout that $\MF$ and $\MX$ are countable.  (We refer the reader to~\cite[Chapter 5]{dudley_uniform_1999} for a discussion of such assumptions in the case that countability does not hold.  We remark that it is necessary to make such measurability assumptions for standard arguments (e.g., regading uniform convergence) to hold even in the non-private case: without such assumptions, there are (uncountably) infinite classes of VC dimension 1, which empirical risk minimization fails to learn~\cite{ben-david_notes_2015}.)

For any $x \in \MX, b \in \{-1,1\}$, write $\MF|_{(x,b)} := \{ f \in \MF : f(x) = b\}$.

\subsection{Differential privacy and sanitization}
\label{sec:dp}
While our main focus in this paper is on PAC learning, we will additionally discuss implications of our results to differentially private data release. Therefore, in the below definition of differential privacy, we allow each user's example to belong to an arbitrary set $\MZ$ (in PAC learning we have $\MZ = \MX \times \{-1,1\}$). 
\begin{defn}[Differential privacy,~\cite{Dwork2006DP}]
  \label{def:dp}
Fix sets $\MZ, \MW$ and $n \in \BN$, and suppose $\MW$ is countable.\footnote{The restriction of countability may be readily removed by fixing a $\sigma$-algebra $\Sigma$ on $\MW$ and letting $A$ be a mapping from $\MZ^n$ to the space $\Delta(\MW)$ of probability measures on the measure space $(\MW, \Sigma)$.} A randomized algorithm $A : \MZ^n \ra \MW$ is {\it $(\ep, \delta)$-\DP} if the following holds: for any datasets $S, S' \in \MZ^n$ differing in a single example and for all subsets $\MT \subset \MW$, 
$$
\p[A(S) \in \MT] \leq e^{\ep} \cdot \p[A(S') \in \MT] + \delta.
$$
\end{defn}
The {\it sanitization} (or {\it private query release}) problem was introduced in~\cite{blum2008learning} and has been central in many works in differential privacy:
\begin{defn}[Sanitization,~\cite{blum2008learning,beimel_private_2014}]
\label{def:sanitizable}
Fix $n \in \BN$ and $\alpha, \beta, \ep, \delta \in (0,1)$, and suppose $\MF \subset \{-1,1\}^\MX$ is a binary hypothesis class. A randomized algorithm $A : (\MX \times \{-1,1\})^n \ra [0,1]^{\MF}$ is an {\it $(n, \alpha, \beta, \ep, \delta)$-sanitizer} if $A$ is $(\ep,\delta)$-\DP and for all datasets $S = ((x_1, y_1), \ldots, (x_n,y_n)) \in (\MX \times \{-1,1\})^n$, $A(S)$ outputs a function $\Est : \MF \ra [0,1]$ so that with probability at least $1-\beta$, for all $f \in \MF$,
$$
\left| \Est(f) - \frac{| \{ i \in [n] : f(x_i) = y_i \}|}{n} \right| \leq \alpha.
$$

Following~\cite{bousquet_passing_2019}, we say that a class $\MF$ is {\it sanitizable} if there exists a bound $n_\MF(\alpha, \beta) = \poly(1/\alpha, 1/\beta)$ 
so that for every $\alpha,\beta > 0$, there exists an algorithm $A$ on datasets of size $n = n_\MF(\alpha,\beta)$ which is an $(n, \alpha, \beta, \ep, \delta)$-sanitizer for some $\ep = O(1)$ and $\delta$ negligible as a function of $n$. 
\end{defn}

\subsection{VC dimension and uniform convergence}
We will denote {\it hypothesis classes}, namely subsets of $\{-1,1\}^\MX$, with the letters $\MF, \MG, \MH$. A class $\MF \subset \{-1,1\}^\MX$ is said to {\it shatter} a set $\{ x_1, \ldots, x_n\} \subset \MX$ if for each choice $(\ep_1, \ldots, \ep_n) \in \{-1,1\}^n$, there is some $f \in \MF$ so that for all $i \in [n]$, $f(x_i) = \ep_i$.
\begin{defn}[VC dimension]
  \label{def:vc-dim}
The {\it VC dimension} of the class $\MF$, denoted $\vc(\MF)$, is the largest positive integer $n$ so that $\MF$ shatters a set of size $n$.
\end{defn}

We need the following standard fact that finite VC dimension is a sufficient condition for uniform convergence with respect to arbitrary distributions:
\begin{theorem}[e.g.,~\cite{bartlett_rademacher_2003}, Theorems 5 \& 6]
  \label{thm:unif-conv}
  Suppose that $\MF$ is countable and $\vc(\MF) =d_{\VV} \geq 1$. Then there is a constant $C_0 \geq 1$ such that for any distribution $\di$ on $\MX \times \{-1,1\}$ and any $\gamma \in (0,1)$, it holds that
  $$
\p_{S_n \sim \di^n} \left[ \sup_{f \in \MF} \left| \err{\di}{f} - \err{\emp}{f} \right| > C_0 \sqrt{\frac{ d_{\VV}  + \log 1/\gamma}{n}} \right] \leq \gamma.
  $$
\end{theorem}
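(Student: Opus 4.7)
The plan is to reduce to a classical template: concentrate $Z_n := \sup_{f\in\MF}|\err{\di}{f}-\err{\emp}{f}|$ around its mean, and then bound the mean by the Rademacher complexity of the loss class, which is in turn controlled by the VC dimension via Dudley's chaining integral. First I would note that replacing a single example in the dataset changes $Z_n$ by at most $1/n$, so McDiarmid's bounded differences inequality gives $\p[Z_n \geq \E[Z_n] + t] \leq \exp(-2nt^2)$ for every $t > 0$. Choosing $t = \sqrt{\log(1/\gamma)/(2n)}$ yields $Z_n \leq \E[Z_n] + \sqrt{\log(1/\gamma)/(2n)}$ with probability at least $1-\gamma$, reducing the task to a deterministic upper bound on $\E[Z_n]$.

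To bound $\E[Z_n]$, I would introduce an independent ghost sample $S_n'\sim \di^n$ and Rademacher signs $\sigma_1,\ldots,\sigma_n \in \{\pm 1\}$, and apply the standard symmetrization inequality to obtain
$$
\E[Z_n] \leq 2\,\E_{S_n,\sigma}\!\left[\sup_{f\in\MF}\left|\frac{1}{n}\sum_{i=1}^n \sigma_i\,\One[f(x_i)\ne y_i]\right|\right] =: 2\,R_n(\MF).
$$
The loss class $\{(x,y)\mapsto\One[f(x)\ne y]:f\in\MF\}$ has VC dimension equal to $d_{\VV}$, so by the Sauer--Shelah lemma its projection onto any $m$ points has at most $(em/d_{\VV})^{d_{\VV}}$ elements; this controls the $L^2(\emp)$-covering numbers of the loss class uniformly over the empirical measure. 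Plugging those covering numbers into Dudley's entropy integral then yields $R_n(\MF) \leq C\sqrt{d_{\VV}/n}$ for an absolute constant $C$. Combining with the concentration bound from the previous paragraph, and using $d_{\VV} \geq 1$ to merge the two additive terms under a single square root, produces the stated inequality for a sufficiently large $C_0$.

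The only nontrivial ingredient is the chaining step: a naive union bound over the Sauer--Shelah growth function $(en/d_{\VV})^{d_{\VV}}$ would introduce a spurious $\sqrt{\log n}$ factor and give only the weaker rate $\sqrt{d_{\VV}\log n/n}$. Chaining (or equivalently a ``VC type'' entropy integral bound) is what shaves off the $\sqrt{\log n}$ and delivers the clean $\sqrt{d_{\VV}/n}$ rate appearing in the theorem. In a formal write-up I would simply invoke the Bartlett--Mendelson Theorems 5 and 6 cited in the statement, since those results package exactly this combination of symmetrization, chaining over a VC-type class, and bounded-differences concentration in the form needed here; the countability assumption on $\MF$ is what guarantees that all suprema are measurable so that the probabilities above are well-defined.
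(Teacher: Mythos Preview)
Your sketch is correct and follows the standard route (bounded differences for concentration, symmetrization to Rademacher averages, and Dudley chaining over a VC-type class to remove the $\sqrt{\log n}$), but there is nothing to compare against: the paper does not prove this theorem at all. It is stated as a known fact with a citation to Bartlett--Mendelson and used as a black box throughout; your final paragraph anticipating this is exactly right.
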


For a class $\MF \subset \{-1,1\}^\MX$ and a distribution $\di$ on $\MX \times \{-1,1\}$, define 
$$
\MF_{\di,\alpha} := \{ f \in \MF : \err{\di}{f} \leq \alpha\}.
$$
Note that for any $0 \leq \alpha \leq \beta \leq 1$, we have $\MF_{\di,\alpha} \subset \MF_{\di,\beta}$.

For any $\gamma > 0$ and $n \in \BN$, write $\alpha(n,\gamma) := C_0 \sqrt{\frac{d_{\VV} + \log 1/\gamma}{n}}$, so that by Theorem~\ref{thm:unif-conv} we have that $\p_{S_n} \left[ \sup_{f \in \MF} \left| \err{\di}{f} - \err{\emp}{f} \right| > \alpha(n,\gamma) \right] \leq \gamma$.

Note that, under the event $\sup_{f \in \MF} \left| \err{\di}{f} - \err{\emp}{f} \right| \leq \alpha_0$, we have that, for each $\alpha \in [0,1]$,
\begin{equation}
  \label{eq:emp-exp-compare}
\MF_{\emp,\alpha -2 \alpha_0} \subset \MF_{\di, \alpha- \alpha_0} \subset \MF_{\emp,\alpha}.
\end{equation}

Given a class $\MF \subset \{-1,1\}^\MX$, its {\it dual class}, denoted by $\MF^\st$, is defined as follows: $\MF^\st \subset \{-1,1\}^\MF$ and is indexed by $\MX$. For each $x \in \MX$, the corresponding function in $\MF^\st$ is the function $x : \MF \ra \{-1,1\}$, defined by $x(f) := f(x)$. The {\it dual VC dimension} of $\MF$, denoted by $\vc^\st(\MF)$, is the VC dimension of $\MF^\st$: i.e., $\vc^\st(\MF) := \vc(\MF^\st)$.

\subsection{Littlestone dimension}
\label{sec:littlestone-prelim}
To introduce the Littlestone dimension, we need some notation regarding binary trees. For a positive integer $t$ and a sequence $\bt_1, \bt_2, \ldots, \bt_t, \ldots \in \{-1,1\}$, write $\bt_{1:t} := (\bt_1, \ldots, \bt_t)$. As a convention, let $\bt_{1:0}$ denote the empty sequence. For $n \in \BN$, an {\it $\MX$-valued binary tree} $\bx$ of {\it depth} $n$ is a collection of partial functions $\bx_t : \{-1,1\}^{t-1} \ra \MX$ for $1 \leq t \leq {n}$, each with nonempty domain, so that for all $\bt_{1:t}$ in the domain of $\bx_{t+1}$, $\bt_{1:t-1}$ is in the domain of $\bx_t$ and $(\bt_1, \ldots, \bt_{t-1}, -\bt_t)$ is in the domain of $\bx_{t+1}$. If $\bx_t$ is a total function for all $t$, then we say that $\bx$ is a {\it complete} tree; otherwise, we say that $\bx$ is {\it incomplete}. By default we will use the term ``tree'' to refer to complete binary trees; when we wish to refer to incomplete trees (or the notion of {\it generalized trees} in Definition~\ref{def:generalized-tree}), we will use the appropriate adjective.

Associated with each sequence $\bt_{1:t} \in \{-1,1\}^t$ so that either $t = 0$ or $b_{1:t-1}$ is in the domain of $\bx_t$, for some $1 \leq t \leq n$, is a {\it node} of the (possibly incomplete) tree. We say that this node is a {\it leaf} if $b_{1:t}$ is not in the domain of $\bx_{t+1}$; in particular, for complete trees, the nodes associated to each $\bt_{1:n} \in \{-1,1\}^{n}$ are the leaves. Suppose $b_{1:t-1}$ is in the domain of $\bx_t$, for some $t$; then the node associated with $b_{1:t-1}$ is not a leaf, and we say that this node is {\it labeled} by $\bx_t(\bt_{1:t-1})$. For any such non-leaf node $v$, the two nodes associated with $(b_1, \ldots, b_{t-1}, -1)$ and $(b_1, \ldots, b_{t-1}, 1)$ are the {\it children} of $v$ {\it corresponding to the bits $-1$ and $1$}, respectively. Note that a node is a leaf if and only if it has no children. Note also that any non-leaf node has exactly 2 children.

A class $\MF \subset \{-1,1\}^\MX$ is said to {\it shatter} a (complete) tree $\bx$ of depth $n$ if for all sequences $(\bt_1, \ldots, \bt_n) \in \{-1,1\}^n$, there is some $f \in \MF$ so that for each $t \in [n]$, $f(\bx_t(\bt_{1:t-1})) = \bt_t$.
\begin{defn}[Littlestone dimension]
  \label{def:ldim}
The {\it Littlestone dimension} of a class $\MF \subset \{-1,1\}^\MX$ is the largest positive integer $n$ so that there exists a tree $\bx$ of depth $n$ that is shattered by $\MF$.
\end{defn}
The Littlestone dimension is known to exactly characterize the optimal mistake bound for online learnability of the class $\MF$ in the realizable setting~\cite{littlestone_learning_1987}, as well as to characterize the optimal regret bound for online learnability of $\MF$ in the agnostic setting up to a logarithmic factor~\cite{ben-david_agnostic_2009}.

Similar to the case for VC dimension, the dual Littlestone dimension of a class $\MF$, denoted by $\Ldim^\st(\MF)$, is the Littlestone dimension of $\MF^\st$: i.e., $\Ldim^\st(\MF) := \Ldim(\MF^\st)$.

\section{Proof overview}
\label{sec:proof-overview}
In this section we overview the proof of Theorem~\ref{thm:pap-pac-informal}. The proof is in two parts:
\begin{enumerate}
\item The first part is a private {\it improper} learner, \PolyPriLearn (Algorithm~\ref{alg:poly-pri-learn}), with sample complexity $\tilde O \left( \frac{d^6}{\ep \alpha^2} \right)$.  The hypothesis $\hat f \in \{-1,1\}^\MX$ output by \PolyPriLearn also satisfies an additional property, namely, it is associated with an {\it irreducible} subclass of $\MF$ (a notion that we introduce and explain below), with high probability.
\item The second part is a technique, \PPPLearn (Algorithm~\ref{alg:poly-pri-prop-learn}), to convert the improper learner from the first part to a proper learner using the irreducibility property of the hypothesis $\hat f$. 
\end{enumerate}
We now elaborate further on the two parts of the proof.

\paragraph{Part 1: Improper learner and irreducibility}
Besides allowing us to convert an improper learner to a proper one, the notion of {\it irreducibility} is central in allowing us to find a private improper learner for $\MF$ with sample complexity polynomial in $\Ldim(\MF)$. 
Before defining irreducibility and explaining how it is useful, we first outline the overall approach. Given a dataset $S_n = \{(x_1, y_1), \ldots, (x_n, y_n)\}$ drawn i.i.d.~from some distribution $P$ over $\MX \times \{-1,1\}$, we will find several subclasses $\hat \MG_1, \ldots, \hat \MG_J \subset \MF$,\footnote{As a general convention we use a hat for quantities that depend on the dataset.} for some $J \in \BN$ so that for each $1 \leq j \leq J$, $\hat \MG_j$ consists entirely of functions with low empirical error on the dataset $S_n$ (this task is performed by the sub-routine \ReduceTree (Algorithm~\ref{alg:reduce-tree}) of \PolyPriLearn). We will then consider the {\it SOA classifier}\footnote{
  As an aside, the SOA classifier achieves the optimal mistake bound in the realizable setting of online learning \cite{littlestone_learning_1987,shalev-shwartz_online_2012}. 
}
for each subclass $\hat\MG_j$; the SOA classifier for a class $\MG$, denoted by $\soaf{\MG} \in \{-1,1\}^\MX$, is defined as follows: for $x \in \MX$, $\soa{\MG}{x} = 1$ if $\Ldim(\MG|_{(x,1)}) \geq \Ldim(\MG|_{(x,-1)})$, and $\soa{\MG}{x} = -1$ otherwise. The crux of the proof rests on two facts: 
\begin{enumerate}
\customitem{(a)} There are $d+1$ ``special'' classifiers $\sigma_1^\st, \ldots, \sigma_{d+1}^\st \in \{-1,1\}^\MX$ (which depend on $P$ but not any particular dataset) so that with high probability, at least one of $\soaf{\hat \MG_1}, \ldots, \soaf{\hat \MG_J}$ is equal to one of $\sigma_1^\st, \ldots, \sigma_{d+1}^\st$. \label{it:special-classes}
\customitem{(b)} For each class $\hat \MG_j$ that is found in the sub-routine \ReduceTree, with high probability it holds that $\soaf{\hat \MG_j}$ has low population error (i.e., $\err{\di}{\soaf{\hat \MG_j}}$ is small). \label{it:soaf-low-error} 
\end{enumerate}
If properties~\ref{it:special-classes} and~\ref{it:soaf-low-error} are given, then the construction of a private learner is fairly straightforward: if $J$ were a constant, then we could draw $m = \tilde O(d)$ independent datasets $S\^1_n, \ldots, S\^m_n$ and use the private stable histogram of~\cite[Proposition 2.20]{bun_simultaneous_2016} together with property~\ref{it:special-classes} to privately output some $\soaf{\MG} \in \{-1,1\}^\MX$ that belongs to $\{\soaf{\hat \MG_1\^i}, \ldots, \soaf{\hat \MG_J\^i}\}$ for many of the independent datasets $S\^i_n$ (we denote the subclasses corresponding to the $i$th dataset, $i \in [m]$, by $\hat \MG_j\^i$, for $j \in [J]$). By property~\ref{it:soaf-low-error}, such $\soaf{\MG}$ would then have low population error. As it turns out, we will only be able to guarantee that $J = 2^{\tilde O(d^2)}$; we can still guarantee sample complexity polynomial in $d$, though, by using a variant of the stable histogram based on the exponential mechanism~\cite{ghazi_differentially_2020} in Algorithm~\ref{alg:poly-pri-learn}. This will necessitate an increase in $m$ by a factor of $\log(2^{\tilde O(d^2)})$, so that we draw a total of $m = \tilde O(d^3)$ independent datasets; each will be of size $\tilde O(d^3)$, leading to the overall sample complexity bound of $\tilde O(d^6)$. 

Next we discuss the proofs of properties~\ref{it:special-classes} and~\ref{it:soaf-low-error} of the subclasses $\hat \MG_1, \ldots, \hat \MG_J$ that the sub-routine \ReduceTree outputs. The proofs of both of these properties depend on irreducibility, which we now define. 
We say that a hypothesis class $\MG \subset \{-1,1\}^\MX$ is {\it irreducible} if for any $x \in \MX$, it holds that for some $b \in \{-1,1\}$, we have $\Ldim(\MF|_{(x,b)}) = \Ldim(\MF)$. Definition~\ref{def:higher-irred} introduces the generalization of {\it $k$-irreducibility} for all $k \in \BN$ (irreducibility corresponds to 1-irreducibility), but in this section we exhibit the main ideas behind the proof using $k = 1$. To explain how we obtain property~\ref{it:special-classes}, first suppose that the following holds, for some fixed $\alpha_\Delta < \alpha_0$:
\begin{equation}
\tag{A}\label{eq:irred-assumption}
\parbox{15cm}{\centering\text{\sl With high probability over the sample $S_n$, it holds that} \\ 
\text{\sl $\Ldim(\MF_{\hat P_{S_n}, \alpha}) = \Ldim(\MF_{\hat P_{S_n}, \alpha - \alpha_\Delta})$ and $\MF_{\hat P_{S_n}, \alpha-\alpha_\Delta}$ is irreducible. }}
\end{equation}
By Theorem~\ref{thm:unif-conv} and (\ref{eq:emp-exp-compare}) with $\alpha_0 = \alpha_\Delta/2$, as long as $n \geq \tilde \Omega \left( \frac{d}{\alpha_\Delta^2} \right)$, then with high probability we have $\MF_{\emp, \alpha-\alpha_\Delta} \subset \MF_{\di, \alpha-\alpha_\Delta/2} \subset \MF_{\emp, \alpha}$, and so 
\begin{equation}
\label{eq:diff-ldim-equal}
\Ldim(\MF_{\di, \alpha-\alpha_\Delta/2}) = \Ldim(\MF_{\emp, \alpha - \alpha_\Delta})
\end{equation}
by (\ref{eq:irred-assumption}). Using irreducibility of $\MF_{\di, \alpha-\alpha_\Delta}$ and (\ref{eq:diff-ldim-equal}), it is straightforward to show (Lemma~\ref{lem:red-hierarchy}) that 
\begin{equation}
\label{eq:soas-stability}
\soaf{\MF_{\di, \alpha-\alpha_\Delta/2}} = \soaf{\MF_{\emp, \alpha-\alpha_\Delta}}.
\end{equation}
Thus, we have shown, assuming (\ref{eq:irred-assumption}), that a quantity that can be computed from the empirical data, namely $\soaf{\MF_{\emp, \alpha-\alpha_\Delta}}$, is equal with high probability to a fixed quantity, namely $\soaf{\MF_{\di,\alpha-\alpha_\Delta/2}}$, which we may take to be, say $\sigma_1^\st$, in property~\ref{it:special-classes}. 

Of course, we must also deal with the case where (\ref{eq:irred-assumption}) does not hold. There are two possible reasons for this: the first is that $\Ldim(\MF_{\emp, \alpha-\alpha_\Delta}) < \Ldim(\MF_{\emp,\alpha})$. In this case, as long as $\alpha_\Delta$ is sufficiently small, we may replace $\alpha$ with $\alpha - \alpha_\Delta$ and recurse (i.e., check if (\ref{eq:irred-assumption}) holds with the new value of $\alpha$, and act accordingly). Since $\Ldim(\MF_{\emp, \alpha}) \leq \Ldim(\MF) \leq d$, the Littlestone dimension can decrease at most $d$ times and therefore it is sufficient to choose $\alpha_\Delta \approx \alpha / d$ (and so we may take $n = \tilde O(d^3)$).

The other reason that (\ref{eq:irred-assumption}) may fail to hold is that $\MF_{\emp, \alpha - \alpha_\Delta}$ is not irreducible. In such a case, by definition of irreducibility, there exists some $x \in \MX$ so that $$\max \{ \Ldim(\MF_{\emp, \alpha - \alpha_\Delta}|_{(x,1)}), \Ldim(\MF_{\emp, \alpha-\alpha_\Delta}|_{(x,-1)}) \} < \Ldim(\MF).$$ The idea is to now make {\it two} recursive calls, replacing $\alpha$ with $\alpha - \alpha_\Delta$ (as before) and using each of the classes $\MF|_{(x,1)}$ and $\MF|_{(x,-1)}$ in place of $\MF$. A clear issue with this approach is that $x$ may depend on the dataset $S_n$, and so the crucial ``stability'' property of (\ref{eq:soas-stability}) may fail to hold in the recursive call, even if (\ref{eq:irred-assumption}) holds with the new $\alpha$ and for the class $\MF|_{(x, \pm1)}$. It turns out that we can amend this issue by replacing irreducibility in (\ref{eq:irred-assumption}) with the stronger property of $k$-irreducibility for $k > 1$; the details can be found in Sections~\ref{sec:irreducibility} and~\ref{sec:reducetree}. 

This process of decreasing the Littlestone dimension by at least 1 and then making some number of ``recursive'' calls results in a tree with at most ${2^{\tilde{O}(d^2)}}$ leaves (Definition~\ref{def:generalized-tree} describes the specific tree structure). Each of these leaves determines a class $\hat \MG_j$, and using a generalization of (\ref{eq:soas-stability}), we can ensure that the classes $\hat \MG_j$ satisfy property~\ref{it:special-classes}. Moreover, we will be able to ensure that for a sufficiently large integer $k$, each $\hat \MG_j$ is $k$-irreducible; this will be enough to show that property~\ref{it:soaf-low-error} above holds via a fairly straightforward argument (carried out in Lemma~\ref{lem:gen-soa} and Claim~\ref{cl:empirical-error}).

\paragraph{Part 2: Making the improper learner proper}
Let $\soaf{\hat \MG} \in \{-1,1\}^\MX$ be the classifier output by the private improper learner \PolyPriLearn described above. The idea to make this learner proper is to find a small set $\hat\MH \subset \MF$ (in particular, of size bounded by $O(\vc^\st(\MF)/\alpha^2)$), such that for any distribution $Q$ over $\MX$, there is some $\hat h \in \hat\MH$ such that $\p_{x \sim Q}[\soa{\hat \MG}{x} \neq \hat h(x)] \leq \alpha$. In particular, this holds for $Q = P$, the true population distribution. Thus, since the improper learner from above guarantees that $\soaf{\hat \MG}$ has low population error under $P$ with high probability, we can choose some $\hat h \in \hat\MH$ with not much higher error using the exponential mechanism on a fresh set of samples of size roughly $\log |\hat\MH| \leq \tilde O(\log \vc^\st(\MF)) \leq \tilde O(\vc(\MF))$ (this is explained in detail in \PPPLearn, Algorithm~\ref{alg:poly-pri-prop-learn}).

It remains to show the existence of a small $\hat \MH \subset \MF$. To do so, we consider the zero-sum game with action spaces $\MF$ and $\MX$, where the row player chooses $h \in \MF$, the column player chooses $x \in \MX$, and the value of the game is $\One[h(x) \neq \soa{\hat\MG}{x}]$. By von Neumann's minimax theorem\footnote{The application of von Neumann's minimax theorem assumes that $\MF, \MX$ are finite; the infinite (countable) case is handled in Appendix~\ref{sec:topology} using basic ideas from topology.}, we have
\begin{equation}
\label{eq:0sum-game}
\inf_{D \in \Delta(\MF)} \sup_{P \in \Delta(\MX)} \E_{x \sim P, h \sim D}[\One[\soa{\hat\MG}{x} \neq h(x)]] = \sup_{P \in \Delta(\MX)} \inf_{D \in \Delta(\MF)} \E_{x \sim P, h \sim D} [\One[\soa{\hat\MG}{x} \neq h(x)]].
\end{equation}
Using the fact that the class $\hat \MG$ corresponding to the classifier $\soaf{\hat\MG}$ output by \PolyPriLearn is $k$-irreducible for a sufficiently large integer $k$, we show in Lemma~\ref{lem:game-value} that the right-hand side of (\ref{eq:0sum-game}) is bounded above by the desired accuracy $\alpha$. Thus the same holds for the left-hand side of (\ref{eq:0sum-game}). Now take a distribution $\hat D \in \Delta(\MF)$ attaining the infimum on the left-hand side of (\ref{eq:0sum-game}); using a uniform convergence argument applied to the {\it dual class} of $\MF$ (Lemma~\ref{lem:like-soa}), we may choose a multiset $\hat \MH \subset \MF$ of size $O(\vc^\st(\MF)/\alpha^2)$ so that the uniform distribution over $\hat \MH$ comes close to the infimum on the left-hand side of (\ref{eq:0sum-game}). Such an $\hat \MH$ satisfies the property we desired.

\section{Irreducibility}
\label{sec:irreducibility}
In this section we make a definition which is central to our algorithm and its analysis, namely that of {\it irreducibility} of a hypothesis class. We then prove some basic properties of irreducible classes. 

Fix some set $\MX$ and a space $\MG$ of hypotheses on $\MX$. 
For any $x \in \MX, b \in \{-1,1\}$, set
$$
\MG|_{(x,b)} := \{ f \in \MG : f(x) = b\}.
$$
For a set $S = \{ (x_1, b_1), \ldots, (x_n, b_n)\}$, similarly set
$$
\MG|_{S} := \bigcap_{i \in [n]} \MG|_{(x_i,b_i)} = \{ f \in \MG : f(x_i) = b_i \ \forall i \in [n]\}.
$$
For $S$ as above, we will at times abuse notation slightly and write $\MG|_S = \MG|_{(x_1, b_1), \ldots, (x_n, b_n)}$.
\begin{defn}[Irreducibility]
  \label{def:higher-irred}
  A class $\MG$ is defined to be {\it $k$-irreducible} if for any depth-$k$ tree $\bx$, there is some choice of bits $b_1, \ldots, b_k \in \{-1,1\}$ such that
  $$
\Ldim(\MG|_{(\bx_1,b_1), (\bx_2(b_1), b_2)\ldots, (\bx_k(b_{1:k-1}), b_k)}) = \Ldim(\MG).
$$
We say that the class $\MG$ is {\it irreducible} if it is $1$-irreducible.
\end{defn}
Note that $k$-irreducibility implies $k'$-irreducibility for $k' < k$. The following lemma shows that the choice of bits $b_1, \ldots, b_k$ in Definition~\ref{def:higher-irred} is unique: 
\begin{lemma}
\label{lem:unique-leaf}
Suppose $\MG$ is $k$-irreducible. Then for any depth-$k$ (possibly incomplete) tree $\bx$, there is a {\it unique} $t \in [k]$ and leaf associated to some $(b_1, \ldots, b_t) \in \{-1,1\}^t$ so that \\
$\Ldim(\MG|_{(\bx_1,b_1), (\bx_2(b_1), b_2)\ldots, (\bx_t(b_{1:t-1}), b_t)}) = \Ldim(\MG)$.
\end{lemma}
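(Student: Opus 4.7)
The plan is to split the lemma into existence (at least one full-Ldim leaf) and uniqueness (at most one such leaf). Since the definition of $k$-irreducibility already supplies existence for \emph{complete} depth-$k$ trees, the real content is (a) transferring existence to incomplete trees and (b) proving uniqueness. Both parts rely on two standard facts about $\Ldim$: monotonicity under further restriction (removing constraints can only increase $\Ldim$), and the recursive lower bound $\Ldim(\MG) \geq m+1$ whenever some $x \in \MX$ satisfies $\Ldim(\MG|_{(x,+1)}), \Ldim(\MG|_{(x,-1)}) \geq m$, which follows directly from the definition by attaching shattered subtrees at $x$.

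For (a), I would complete the incomplete tree $\bx$ to a total depth-$k$ tree $\bx'$ by attaching arbitrary labels from $\MX$ at every previously undefined node. Applying $k$-irreducibility to $\bx'$ gives bits $b'_1, \ldots, b'_k$ for which the full length-$k$ path-restriction of $\MG$ has Littlestone dimension $d := \Ldim(\MG)$. Now walk from the root of $\bx$ along $b'_1, b'_2, \ldots$ and let $t \in [k]$ be the largest integer such that the node at depth $t$ associated with $b'_{1:t}$ still lies in $\bx$; this node is a leaf of $\bx$ by construction of $t$. Truncating the list of $k$ constraints down to the first $t$ of them can only enlarge the restricted class, so this shorter restriction has Ldim at least $d$, and, being a subclass of $\MG$, exactly $d$. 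Thus $(b_1, \ldots, b_t) := (b'_1, \ldots, b'_t)$ witnesses existence.

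For (b) I would argue by contradiction. Suppose there are distinct leaves $L_1, L_2$ at depths $t_1, t_2$ with bit sequences $b^{(1)}, b^{(2)}$, both yielding the full Ldim $d$. Since neither leaf is an ancestor of the other, there is a unique depth $s \le \min(t_1, t_2)$ at which the two paths agree on their first $s-1$ bits and disagree at step $s$. Let $\MG'$ be the restriction of $\MG$ by the common length-$(s-1)$ prefix, and set $x^* := \bx_s(b^{(1)}_{1:s-1})$, which is defined because its node is an internal ancestor of both leaves. By monotonicity, $\Ldim(\MG'|_{(x^*, b^{(j)}_s)}) \geq d$ for each $j \in \{1,2\}$, so both children of $x^*$ in $\MG'$ achieve Ldim $\geq d$. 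The recursive lower bound then forces $\Ldim(\MG') \geq d+1$, contradicting $\MG' \subset \MG$ and $\Ldim(\MG) = d$.

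The only real obstacle is the bookkeeping around incomplete trees in step (a): I need to ensure that the witness path produced by $k$-irreducibility for the artificially completed $\bx'$ can be truncated back into $\bx$ without losing the full-Ldim property. This is handled purely by monotonicity, so once $t$ is defined as above the argument is immediate. Notably, the uniqueness argument itself does not use $k$-irreducibility at all beyond the fact that the two candidate leaves lie in the same tree $\bx$ and achieve $\Ldim = \Ldim(\MG)$; $k$-irreducibility only feeds in through the assertion that such leaves exist.
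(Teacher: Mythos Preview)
Your proposal is correct and follows essentially the same approach as the paper: existence via monotonicity of $\Ldim$ under removing constraints, and uniqueness via the fact that if both restrictions $\MG|_{(x,+1)}$ and $\MG|_{(x,-1)}$ have $\Ldim \ge d$ then $\Ldim(\MG) \ge d+1$. Your existence argument (complete the incomplete tree, apply the definition, truncate) is actually more explicit than the paper's, which simply asserts existence for incomplete trees without spelling out the completion step; your uniqueness argument via the first point of disagreement is logically equivalent to the paper's inductive ``$b_s = b_s'$ for all $s$'' phrasing.
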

\begin{proof}
Since $\MG$ is $k$-irreducible, there is some $t \in [k]$ and leaf associated to some $b_{1:t} \in \{-1,1\}^t$ so that $\Ldim(\MG|_{(\bx_1,b_1), (\bx_2(b_1), b_2)\ldots, (\bx_t(b_{1:t-1}), b_t)}) = \Ldim(\MG)$. Suppose there were some other pair $(t', b'_{1:t'})$ so that $\Ldim(\MG|_{(\bx_1,b_1'), (\bx_2(b_1'), b_2')\ldots, (\bx_t(b'_{1:t'-1}), b'_{t'})}) = \Ldim(\MG)$. Then $\Ldim(\MG|_{(\bx_1, b_1)}) = \Ldim(\MG) = \Ldim(\MG|_{(\bx_1, b_1')})$, and thus $b_1 = b_1'$. We proceed by induction: for $1 \leq s \leq \min\{t,t'\}$, if $b_1 = b_1', \ldots, b_s = b_s'$, then 
\begin{align*}
\Ldim(\MG|_{(\bx_1, b_1), \ldots, (\bx_{s+1}(b_{1:s}), b_{s+1})})
& = \Ldim(\MG|_{(\bx_1, b_1), \ldots, (\bx_s(b_{1:s-1}), b_s)}) \\
& = \Ldim(\MG|_{(\bx_1, b_1), \ldots, (\bx_{s}(b_{1:s-1}), b_s), (\bx_{s+1}(b_{1:s}), b_{s+1}')}),
\end{align*}
and hence $b_{s+1} = b_{s+1}'$. Since $b_{1:t}$ and $b'_{1:t'}$ both are associated to leaves of the tree $\bx$, we must have $t = t'$ and $b_{1:t} = b'_{1:t'}$. 
\end{proof}

The following lemma shows that $k$-irreducibility satisfies a sort of ``monotonicity'' property among classes of the same Littlestone dimension.
\begin{lemma}
  \label{lem:red-order}
Suppose $\MH \subset \MG$, and $\Ldim(\MH) = \Ldim(\MG)$. If $\MH$ is $k$-irreducible, then so is $\MG$. 
\end{lemma}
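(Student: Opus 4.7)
The plan is to directly unpack the definition of $k$-irreducibility and exploit the inclusion $\MH \subset \MG$ together with the assumed equality of Littlestone dimensions. Let $d := \Ldim(\MH) = \Ldim(\MG)$. Given an arbitrary depth-$k$ tree $\bx$, I want to exhibit bits $b_1, \ldots, b_k \in \{-1, 1\}$ witnessing $k$-irreducibility for $\MG$, i.e., such that the restriction of $\MG$ along the root-to-leaf path determined by $b_{1:k}$ has Littlestone dimension $d$.

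The natural candidate is to simply reuse the bits supplied by the $k$-irreducibility of $\MH$ on the same tree $\bx$: there exist $b_1, \ldots, b_k$ with $\Ldim(\MH|_{(\bx_1, b_1), (\bx_2(b_1), b_2), \ldots, (\bx_k(b_{1:k-1}), b_k)}) = d$. Writing $T := \{(\bx_1, b_1), (\bx_2(b_1), b_2), \ldots, (\bx_k(b_{1:k-1}), b_k)\}$ for brevity, the inclusion $\MH \subset \MG$ yields $\MH|_T \subset \MG|_T$, so by monotonicity of Littlestone dimension under inclusion,
\[
d = \Ldim(\MH|_T) \leq \Ldim(\MG|_T) \leq \Ldim(\MG) = d,
\]
where the second inequality uses $\MG|_T \subset \MG$. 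Hence $\Ldim(\MG|_T) = \Ldim(\MG)$, which is exactly the condition required for $k$-irreducibility of $\MG$.

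Since the bits supplied by $\MH$ already work for $\MG$, there is no real obstacle: the proof is just a two-line sandwich argument on Littlestone dimension. The only thing worth double-checking is the convention on what counts as a depth-$k$ tree in Definition~\ref{def:higher-irred} — as the definition quantifies over trees on which we need to produce a single root-to-leaf sequence, the fact that $\MH$'s $k$-irreducibility is stated for the same tree class makes the transfer immediate.
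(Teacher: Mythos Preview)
Your proof is correct and follows essentially the same argument as the paper: reuse the bits $b_1,\ldots,b_k$ witnessing $k$-irreducibility of $\MH$ on the given tree, and apply the sandwich $\Ldim(\MH|_T) \leq \Ldim(\MG|_T) \leq \Ldim(\MG) = \Ldim(\MH|_T)$ to conclude.
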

\begin{proof}
  If $\MH$ is irreducible, then for any depth-$k$ $\MX$-valued tree $\bx$, we have that for some $b_1, \ldots, b_k \in \{-1,1\}$,
  $$
\Ldim(\MG|_{(\bx_1,b_1), \ldots, (\bx_k(b_{1:k-1}), b_k)}) \geq  \Ldim(\MH|_{(\bx_1,b_1), \ldots, (\bx_k(b_{1:k-1}),b_k)}) = \Ldim(\MH) = \Ldim(\MG),
$$
where the first inequality follows since any $f \in \MH$ with $f(\bx_i(b_{1:i-1})) = b_i$ for $1 \leq i \leq k$ is also in $\MG$. But since $\MG|_{(\bx_1, b_1), \ldots, (\bx_k(b_{1:k-1}),b_k)} \subset \MG$, we have $\Ldim(\MG|_{(\bx_1, b_1), \ldots, (\bx_k(b_{1:k-1}),b_k)}) \leq \Ldim(\MG)$, and so equality holds above, i.e., $\Ldim(\MG|_{(\bx_1, b_1), \ldots, (\bx_k(b_{1:k-1}),b_k)}) = \Ldim(\MG)$. 
\end{proof}

We next define the {\it SOA classifier} associated with a function class $\MG$; the choice of name is due to its similarity to the classifiers used in the standard optimal algorithm (SOA) in online learning~\cite{littlestone_learning_1987,ben-david_agnostic_2009}. 
\begin{defn}[SOA classifier]
\label{def:soa}
  For a class $\MG$, define the function $\soaf{\MG} : \MX \ra \{-1,1\}$ as follows:
  $$
  \soa{\MG}{x} := \begin{cases}
    1 \quad: \Ldim(\MG|_{(x,1)}) \geq \Ldim(\MG|_{(x,-1)}) \\
    -1 \quad : \Ldim(\MG|_{(x,1)}) < \Ldim(\MG|_{(x,-1)}).
  \end{cases}
  $$
\end{defn}

Lemma~\ref{lem:red-hierarchy} establishes an important ``stability-type'' property satisfied by SOA classifiers of irreducible classes.
\begin{lemma}
  \label{lem:red-hierarchy}
  Suppose $\MH \subset \MG, \Ldim(\MH) = \Ldim(\MG)$, and that $\MH$ is irreducible. Then for all $x \in \MX$,
  $
\soa{\MH}{x} = \soa{\MG}{x}.
  $
\end{lemma}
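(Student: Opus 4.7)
Set $d := \Ldim(\MH) = \Ldim(\MG)$ and fix an arbitrary $x \in \MX$. The plan is to determine $\soa{\MH}{x}$ and $\soa{\MG}{x}$ separately and show that both equal the same bit $b^\st \in \{-1,1\}$, where $b^\st$ is the ``correct'' direction distinguished by irreducibility of $\MH$.

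First I would apply Lemma~\ref{lem:unique-leaf} to $\MH$ with the depth-$1$ tree consisting of the single node labeled $x$: since $\MH$ is $1$-irreducible, there exists a \emph{unique} $b^\st \in \{-1,1\}$ such that $\Ldim(\MH|_{(x,b^\st)}) = d$, and consequently $\Ldim(\MH|_{(x,-b^\st)}) < d$. Unwrapping Definition~\ref{def:soa}, this immediately gives $\soa{\MH}{x} = b^\st$ (in both possible values of $b^\st$, including the tie-breaking case, since one of the two Littlestone dimensions is strictly larger than the other).

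Next I would argue the same for $\MG$. Because $\MH \subset \MG$, we have $\MH|_{(x,b)} \subset \MG|_{(x,b)}$ and hence $\Ldim(\MG|_{(x,b)}) \geq \Ldim(\MH|_{(x,b)})$ for $b \in \{-1,1\}$, while $\MG|_{(x,b)} \subset \MG$ gives the matching upper bound $\Ldim(\MG|_{(x,b)}) \leq d$. Combining these with $\Ldim(\MH|_{(x,b^\st)}) = d$ yields $\Ldim(\MG|_{(x,b^\st)}) = d$. The main (mild) obstacle is ruling out $\Ldim(\MG|_{(x,-b^\st)}) = d$, which would spoil the conclusion. For this I would use the standard fact that if both restrictions $\MG|_{(x,1)}$ and $\MG|_{(x,-1)}$ had Littlestone dimension $d$, then hanging two shattered depth-$d$ trees for these restrictions under a root labeled $x$ produces a shattered depth-$(d+1)$ tree for $\MG$, contradicting $\Ldim(\MG) = d$. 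Therefore $\Ldim(\MG|_{(x,-b^\st)}) < d = \Ldim(\MG|_{(x,b^\st)})$, and Definition~\ref{def:soa} again gives $\soa{\MG}{x} = b^\st$.

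Putting the two halves together yields $\soa{\MH}{x} = b^\st = \soa{\MG}{x}$, as desired. The proof is short: the substantive inputs are the uniqueness of $b^\st$ from Lemma~\ref{lem:unique-leaf} and the gluing argument bounding $\Ldim$ via attaching shattered subtrees, neither of which requires any calculation beyond a one-line check.
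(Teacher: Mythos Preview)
Your proposal is correct and essentially mirrors the paper's proof: both use irreducibility of $\MH$ to pin down the direction $b^\st$ with $\Ldim(\MH|_{(x,b^\st)})=\Ldim(\MH)$, transfer this to $\MG$ via the inclusion $\MH\subset\MG$, and invoke the gluing-of-shattered-trees argument to rule out $\Ldim(\MG|_{(x,-b^\st)})=d$. The only cosmetic differences are that the paper does a case split on $\soa{\MH}{x}$ (exploiting the tie-break toward $1$ so that the gluing step is needed only in the $-1$ case), whereas you invoke Lemma~\ref{lem:unique-leaf} to handle both signs uniformly.
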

\begin{proof}
  Fix any $x \in \MX$. First suppose that $\soa{\MH}{x} = 1$, i.e., $\Ldim(\MH|_{(x,1)}) \geq \Ldim(\MH|_{(x,-1)})$. Then since $\MH$ is irreducible and $\MH \subset \MG$,
  $$
\Ldim(\MG|_{(x,1)})  \geq \Ldim(\MH|_{(x,1)}) = \Ldim(\MH) = \Ldim(\MG),
$$
which means that $\Ldim(\MG|_{(x,1)}) = \Ldim(\MG)$, and thus $\soa{\MG}{x} = 1$.

Next suppose that $\soa{\MH}{x} = -1$, i.e., $\Ldim(\MH|_{(x,1)}) < \Ldim(\MH|_{(x,-1)})$. Again using irreducibility of $\MG$, we see that
$$
\Ldim(\MG|_{(x,-1)}) \geq \Ldim(\MH|_{(x,-1)}) = \Ldim(\MH) = \Ldim(\MG),
$$
which means that $\Ldim(\MG|_{(x,-1)}) = \Ldim(\MG)$. We must have $\Ldim(\MG|_{(x,1)}) \leq \Ldim(\MG) - 1$, else it would be the case that $\Ldim(\MG) \geq 1 + \Ldim(\MG|_{(x,-1)})$. Hence $\soa{\MG}{x} = -1$. 
\end{proof}
The below lemma implies generalization bounds for the family of hypotheses $\soaf{\MG}$, for $\MG \subset \MF$ that are irreducible of sufficiently high order.
\begin{lemma}
  \label{lem:gen-soa}
  For a class $\MF$ with $\Ldim(\MF) = d$, set
  \begin{equation}
  \label{eq:tildef-d1}
\tilde \MF_{d+1} := \{ \soaf{\MG} : \MG \subset \MF, \MG \text{ is nonempty and $(d+1)$-irreducible}.\}
\end{equation}
Then $\Ldim(\tilde \MF_{d+1}) = d$ as well.
\end{lemma}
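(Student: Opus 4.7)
The plan is to prove $\Ldim(\tilde \MF_{d+1}) \geq d$ and $\Ldim(\tilde \MF_{d+1}) \leq d$ separately. For the lower bound, I would show $\MF \subset \tilde \MF_{d+1}$ directly: for any $f \in \MF$, the singleton $\{f\}$ is trivially $(d+1)$-irreducible (following $b_t = f(\bx_t(b_{1:t-1}))$ in any tree preserves $\Ldim(\{f\}) = 0$), and $\soaf{\{f\}} = f$ because the complementary side $(x,-f(x))$ yields the empty class whose Littlestone dimension is conventionally $-1$. Hence $\Ldim(\tilde \MF_{d+1}) \geq \Ldim(\MF) = d$.

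For the upper bound, I would proceed by contradiction: suppose some depth-$(d+1)$ tree $\bx$ is shattered by $\tilde \MF_{d+1}$. For each $b \in \{-1,1\}^{d+1}$ pick a nonempty $(d+1)$-irreducible $\MG_b \subset \MF$ with $\soa{\MG_b}{\bx_t(b_{1:t-1})} = b_t$ for all $t \in [d+1]$. The aim is to produce some $f_b \in \MG_b$ that actually labels $\bx$ according to $b$; collecting these $f_b$ would witness shattering of $\bx$ by $\MF$, contradicting $\Ldim(\MF) = d$. Everything reduces to the following restriction lemma, which I would state and prove first: if $\MG$ is $(k+1)$-irreducible and $b = \soa{\MG}{x}$, then $\MG|_{(x,b)}$ is $k$-irreducible with $\Ldim(\MG|_{(x,b)}) = \Ldim(\MG)$. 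The $\Ldim$ part uses 1-irreducibility together with the observation that $\Ldim(\MG|_{(x,1)}) = \Ldim(\MG|_{(x,-1)}) = \Ldim(\MG)$ is impossible (gluing the two shattered trees under root $x$ would yield a shattered tree of depth $\Ldim(\MG)+1$), so exactly one side preserves $\Ldim$ and SOA picks it. For the irreducibility part, given any depth-$k$ tree $\by$, embed $\by$ as the $b$-subtree of a fresh depth-$(k+1)$ tree $\bx'$ rooted at $x$, with any depth-$k$ tree in the $-b$-subtree; apply $(k+1)$-irreducibility of $\MG$ to $\bx'$ to obtain a preserving path, which must begin with $b$ by the same no-simultaneous-preservation fact, hence descends into $\by$ and furnishes the required preserving path for $\MG|_{(x,b)}$.

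Given the restriction lemma, I would set $\MG_b^{(0)} := \MG_b$ and $\MG_b^{(t)} := \MG_b|_{(\bx_1, b_1), (\bx_2(b_1), b_2), \ldots, (\bx_t(b_{1:t-1}), b_t)}$ for $t \in [d+1]$, and prove by induction on $t$ that $\Ldim(\MG_b^{(t)}) = \Ldim(\MG_b)$ and $\MG_b^{(t)}$ is $(d+1-t)$-irreducible. In the inductive step, Lemma~\ref{lem:red-hierarchy} applied to $\MG_b^{(t)} \subset \MG_b$ (both with the same $\Ldim$ and $\MG_b^{(t)}$ at least $1$-irreducible) yields $\soa{\MG_b^{(t)}}{\bx_{t+1}(b_{1:t})} = \soa{\MG_b}{\bx_{t+1}(b_{1:t})} = b_{t+1}$, so the next restriction is precisely along the SOA direction of $\MG_b^{(t)}$ and the restriction lemma closes the induction at level $t+1$. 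At $t = d+1$ the class $\MG_b^{(d+1)}$ has nonnegative Littlestone dimension, hence is nonempty, so any $f_b \in \MG_b^{(d+1)}$ completes the shattering and the contradiction.

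The main obstacle I anticipate is the irreducibility half of the restriction lemma, specifically ruling out that the preserving path in $\bx'$ could begin with $-b$. Isolating the simple but crucial fact that a class cannot have both sides of a single split simultaneously preserve $\Ldim$, and then threading it carefully through the tree-gluing step, is where the argument is most likely to slip; once that observation is in hand, the rest reduces to keeping careful track of the $k+1 \to k$ drop in irreducibility through the induction, which is exactly what makes $(d+1)$-irreducibility (rather than mere $1$-irreducibility) the right hypothesis on $\MG_b$.
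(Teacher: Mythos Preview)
Your proposal is correct and follows the same overall strategy as the paper: show $\MF\subset\tilde\MF_{d+1}$ for the lower bound, and for the upper bound argue by contradiction that a depth-$(d+1)$ tree $\bx$ shattered by $\tilde\MF_{d+1}$ would also be shattered by $\MF$, by exhibiting for each $b\in\{-1,1\}^{d+1}$ some $f_b\in\MG_b|_{(\bx_1,b_1),\ldots,(\bx_{d+1}(b_{1:d}),b_{d+1})}$.

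The difference is purely in packaging. You isolate a ``restriction lemma'' (restricting a $(k+1)$-irreducible class along its SOA direction yields a $k$-irreducible class of the same Littlestone dimension) and then run an explicit induction on $t$, invoking Lemma~\ref{lem:red-hierarchy} at each step to transfer the SOA value from $\MG_b$ to $\MG_b^{(t)}$. The paper bypasses both the restriction lemma and the appeal to Lemma~\ref{lem:red-hierarchy}: it simply notes that $(d+1)$-irreducibility guarantees \emph{some} preserving path $b'$ through $\bx$, and that $b'$ must equal $b$ because at the first index $t$ where $b_t\neq b_t'$ one would get $\Ldim(\MG|_{(\bx_t(b_{1:t-1}),b_t)})=\Ldim(\MG|_{(\bx_t(b_{1:t-1}),-b_t)})=\Ldim(\MG)$ (the former from irreducibility plus $\soa{\MG}{\bx_t(b_{1:t-1})}=b_t$, the latter from the path $b'$), which is impossible. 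This is exactly your ``no-simultaneous-preservation'' observation, applied once globally rather than threaded through an induction. Your route has the minor advantage of making the role of the ``$(d+1)$'' explicit via the $k+1\to k$ decrement; the paper's is shorter and entirely self-contained.
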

Note that $\MF \subset \tilde \MF_{d+1}$, since for any $f \in \MF$, $\{ f \}$ is $k$-irreducible for all $k \in \BN$, and $\soaf{\{ f \}} = f$. It is natural to wonder whether one can upper-bound $\Ldim(\tilde \MF_{d+1})$ if one drops the requirement that $\MG$ is $(d+1)$-irreducible in (\ref{eq:tildef-d1}); in Appendix~\ref{sec:ldim-soa}, we show that this is not possible.
\begin{proof}[Proof of Lemma~\ref{lem:gen-soa}]
  That $\Ldim(\tilde \MF_{d+1}) \geq d$ follows from $\MF \subset \tilde \MF_{d+1}$ 
  To see the upper bound on $\Ldim(\tilde \MF_{d+1})$, suppose for the purpose of contradiction that $\tilde \MF_{d+1}$ shatters an $\MX$-valued tree $\bx$ of depth $d+1$. We will show that $\MF$ also shatters $\bx$, which leads to the desired contradiction.

  Fix any sequence $\bt = (\bt_1, \ldots, \bt_{d+1}) \in \{-1,1\}^{d+1}$. There must be some $\MG \subset \MF$ that is $(d+1)$-irreducible so that for $1 \leq t \leq d+1$, $\soa{\MG}{\bx_t(\bt_{1:t-1})} = \bt_t$, which, by irreducibility of $\MG$, implies that $\Ldim(\MG|_{(\bx_t(\bt_{1:t-1}), \bt_t)})  = \Ldim(\MG)$. Since $\MG$ is $(d+1)$-irreducible, it follows that
  $$
\Ldim(\MG|_{(\bx_1, \bt_1), (\bx_2(\bt_1), \bt_2), \ldots, (\bx_{d+1}(\bt_{1:d}), \bt_{d+1})}) = \Ldim(\MG).
$$
(Indeed, by $(d+1)$-irreducibility of $\MG$, there must be {\it some} sequence $(\bt_1', \ldots, \bt_{d+1}') \in \{-1,1\}^{d+1}$ for which $\Ldim(\MG|_{(\bx_1, \bt_1'), (\bx_2(\bt_1'), \bt_2'), \ldots, (\bx_{d+1}(\bt_{1:d}'), \bt_{d+1}')}) = \Ldim(\MG)$; the smallest $t$ so that $b_t \neq b_t'$ satisfies $\Ldim(\MG|_{(\bx_t(\bt_{1:t-1}), \bt_t)}) = \Ldim(\MG|_{(\bx_t(\bt_{1:t-1}), -\bt_t)}) = \Ldim(\MG)$, which is impossible. Thus $b_t = b_t'$ for all $t$.)
Since $\MG$ is nonempty, there must be some $f \in \MG \subset \MF$ such that for $1 \leq t \leq d+1$, $f(\bx_t(\bt_{1:t-1})) = \bt_t$. It follows that $\MF$ shatters the tree $\bx$, as desired. 
\end{proof}

In Definitions~\ref{def:reducing-array} and~\ref{def:generalized-tree} below, we generalize the notion of tree to include those in which each node may have more than 2 children. The scheme by which we label nodes is somewhat non-standard so as to more closely correspond to the types of trees constructed in Algorithm~\ref{alg:reduce-tree} in the following section.
\begin{defn}[Reducing arrays]
  \label{def:reducing-array}
A {\it reducing array} of {\it depth} $k$ is a collection of $k+1$ tuples $\bt\^j := (\bt\^j_{1}, \ldots, \bt\^j_{j \wedge k}) \in \{-1,1\}^{j \wedge k}$ for $1 \leq j \leq k+1$, which satisfy the following property: $\bt\^{j+1}_{j'} = \bt\^{j}_{j'}$ for all $j' < j \leq k$, and $\bt\^{j+1}_{j} = -\bt\^j_{j}$ for $j \leq k$.\footnote{For real numbers $a,b$, we use the notation $a \wedge b$ and $a \vee b$ to denote $\min\{a, b\}$ and $\max\{a, b\}$ respectively.}
\end{defn}

\begin{defn}[Generalized trees]
  \label{def:generalized-tree}
  A {\it generalized tree} $\bx$ with {\it values in $\MX$} of {\it depth} $d$ and {\it branching factor} $k \in \BN$ is a rooted tree of depth at most $d$\footnote{By depth at most $d$, we mean that the number of edges in the path from the root to any leaf is at most $d$; this aligns with the meaning of depth for binary trees in Section \ref{sec:littlestone-prelim}.} in which each node has at most $k+1$ children. 
  Nodes of the tree without children are called its {\it leaves}. Moreover, the nodes and edges of the tree are labeled as follows:
  \begin{enumerate}
  \item Each non-leaf node $v$ is labeled with an ordered tuple of some number $k_v \leq k$ of points in $\MX$, denoted by $(\bx(v)_1, \ldots, \bx(v)_{k_v}) \in \MX^{k_v}$.
  \item The non-leaf node $v$ has $k_v+1$ children; the edge between $v$ and the $j$th child, $1 \leq j \leq k_v+1$, is labeled by a tuple $\bt\^j$, where the tuples $\bt\^j \in \{-1,1\}^{j \wedge k_v}$ form a reducing array of depth $k_v$ (Definition~\ref{def:reducing-array}).
  \end{enumerate}
  Moreover, for any node $v$ (perhaps a leaf), define $\ba(v) \in (\MX \times \{-1,1\})^\st$ (called the {\it ancestor set} of $v$) as follows: let $v\^1, \ldots, v\^{t-1}$ be the root-to-leaf path for the node $v$ and $v\^t := v$. For each $1 \leq i \leq t-1$, let $\bt\^i \in \{-1,1\}^{k\^i}$ be the label of the edge between $v\^i$ and $v\^{i+1}$, where $k\^i \leq k_{v\^{i}}$ is some positive integer. Then
  $$
\ba(v) := \{(\bx(v\^1)_1, \bt\^1_1), \ldots, (\bx(v\^1)_{k\^1}, \bt\^1_{k\^1})\} \cup \cdots \cup \{(\bx(v\^{t-1})_1, \bt\^{t-1}_1), \ldots, (\bx(v\^{t-1})_{k\^{t-1}}, \bt\^{t-1}_{k\^{t-1}})\}.
$$
The {\it height} of the node $v$ is defined to be 
$k\^1 + \cdots + k\^{t-1}$, where $k\^1, \ldots, k\^{t-1}$ are defined given $v$ as above. Note that the height of $v$ is at least 
the size of (i.e., number of tuples in) the ancestor set $\ba(v)$; the height may be even greater if there are duplicates in $\ba(v)$. 
The {\it height} of the tree $\bx$, denoted by $\height(\bx)$, is the maximum height of any node $v$ of $\bx$. Note that we must have $\height(\bx) \geq d$ if the depth of $\bx$ is $d$. To avoid ambiguity, when we wish to refer to a generalized tree, we will always use the adjective ``generalized''; ``tree'' will continue to mean ``complete binary tree''.
\end{defn}
Figure~\ref{fig:generalized-tree} shows an example of a generalized tree.

\begin{figure}[!ht]
\begin{subfigure}{0.5\textwidth}
\includegraphics[trim=150 244 150 0,clip,scale=0.55]{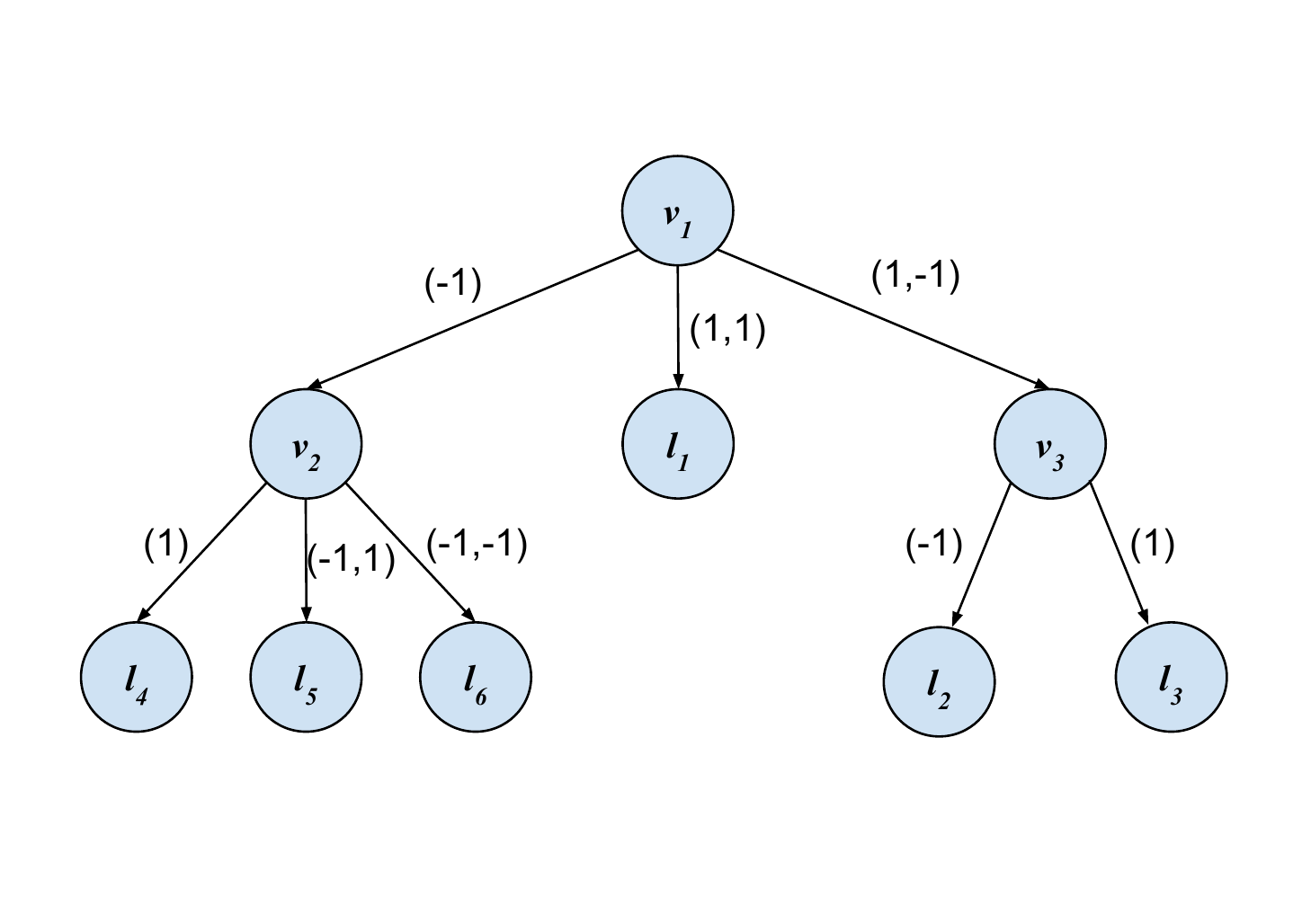}
\caption{Example of a generalized tree
}
\label{fig:generalized-tree}
\end{subfigure}
\begin{subfigure}{0.5\textwidth}
\includegraphics[trim=100 200 100 40,clip,scale=0.55]{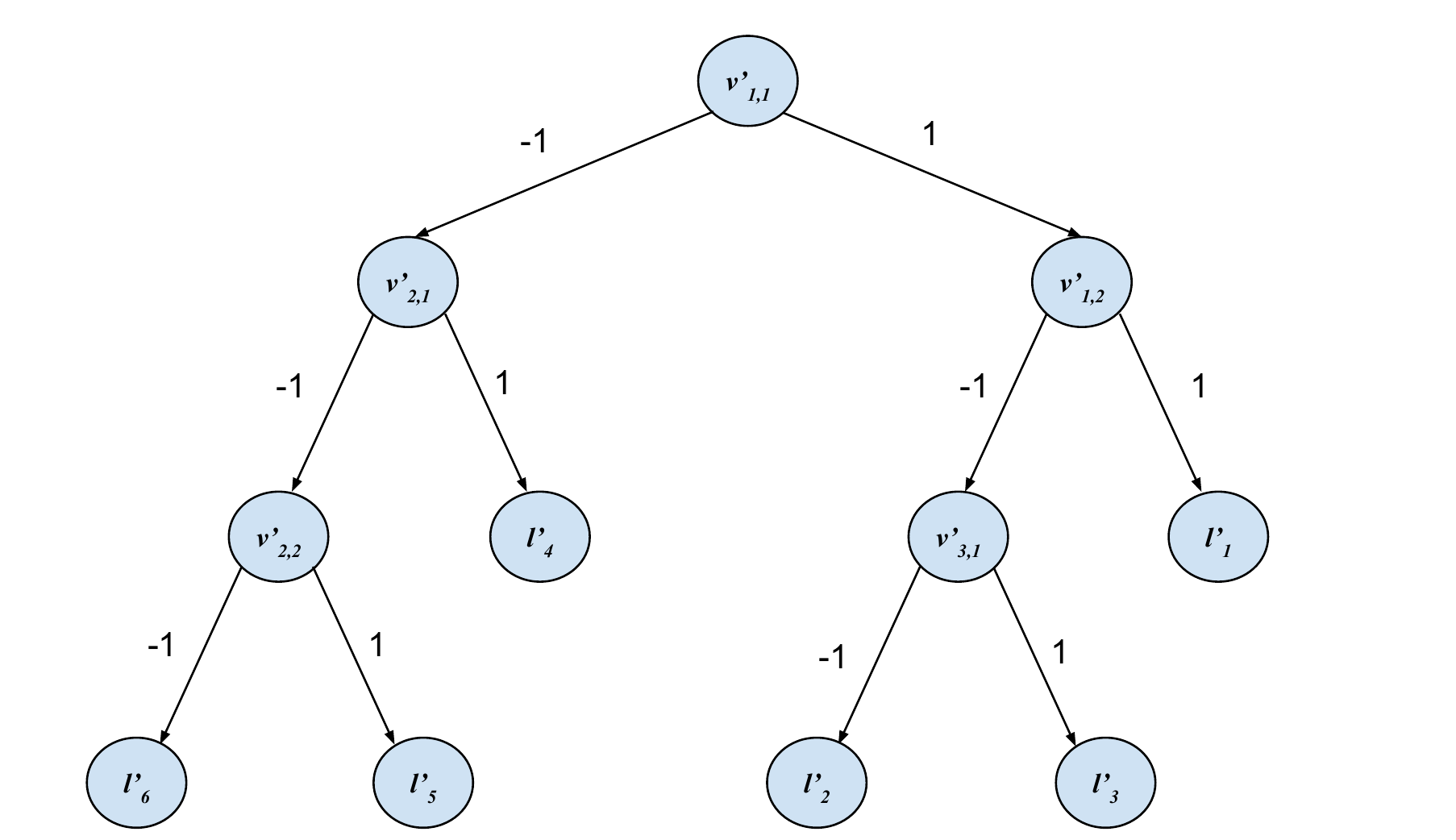}
\caption{Transformed tree
}
\label{fig:transformed-tree}
\end{subfigure}
\caption{(a) Example of a generalized tree $\bx$ of depth 2 and height 3. The tree $\bx$ has 3 non-leaf nodes, $v_1, v_2, v_3$, and 6 leaves, $\ell_1, \ldots, \ell_6$. We have $k_{v_1} = 2, k_{v_2} = 2, k_{v_3} = 1$. The tuples labeling the edges are the tuples of the reducing array corresponding to each non-leaf node $v_i$. For a few examples of ancestor sets, note that $\ba(\ell_5) = \{ (\bx_1(v_1), -1), (\bx_1(v_2), -1), (\bx_2(v_2), 1)\}$, $\ba(\ell_1) = \{(\bx_1(v_1), 1), (\bx_2(v_1), 1) \}$, and $\ba(\ell_3) = \{ (\bx_1(v_1), 1), (\bx_2(v_1), -1), (\bx_1(v_3), 1)\}$. (b) The tree $\bx'$ resulting from applying the transformation in Lemma~\ref{lem:gen-tree-leaf} to $\bx$. For each non-leaf node $v_i$, the $k_{v_i}$ nodes constructed for $v_i$ in $\bx'$ are denoted by $v'_{i,1}, \ldots, v'_{i,k_{v_i}}$. For each leaf $\ell_i$, the corresponding leaf in $\bx'$ is denoted by $\ell_i'$.}
\label{fig:tree-drawing}
\end{figure}

Lemma~\ref{lem:not-k-irred} below explains the choice of name ``reducing array'': it shows that if a class $\MG$ is not $k$-irreducible, then there is a reducing array which can be used to ``reduce the Littlestone dimension of $\MG$'' in a certain sense. As a matter of convention, if the class $\MG$ is empty we write $\Ldim(\MG) = -1$. Note also that $\Ldim(\MG) = 0$ if and only if $\MG$ contains a single hypothesis.
\begin{lemma}
  \label{lem:not-k-irred}
  Suppose that $\MG$ is not $k$-irreducible but is $(k-1)$-irreducible. Then there is a reducing array of depth $k$, denoted by $\bt\^1, \ldots, \bt\^{k+1}$ and a sequence $x_1, \ldots, x_k \in \MX$ so that for all $j \in [k+1]$, we have
  \begin{equation}
    \label{eq:reducing-array-ldim}
0 \leq \Ldim(\MG|_{(x_{1}, \bt\^j_{1}), \ldots, (x_{j \wedge k}, \bt\^j_{j\wedge k})}) < \Ldim(\MG).
  \end{equation}
\end{lemma}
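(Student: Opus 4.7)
The plan is to prove the lemma by induction on $k$. The main obstacle is verifying the lower bound $\Ldim(\cdots) \geq 0$ (i.e., nonemptiness of each restricted class); this will follow from a contradiction argument exploiting that $\MG$ is $(k-1)$-irreducible while the restricted class $\MG|_{(x_1,c_1)}$ turns out not to be.

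For the base case $k = 1$, not-$1$-irreducibility of $\MG$ yields some $x_1 \in \MX$ with $\Ldim(\MG|_{(x_1, b)}) < d$ for both $b \in \{-1,1\}$, where $d := \Ldim(\MG)$. If either restriction were empty, the other would coincide with $\MG$ and have Ldim $d$, contradicting this. So both are nonempty, and taking $c_1 := 1$, $\bt^{(1)} := (-1)$, $\bt^{(2)} := (1)$ furnishes the required reducing array.

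For the inductive step ($k \geq 2$), let $\bx$ be a depth-$k$ tree witnessing not-$k$-irreducibility, and set $x_1 := \bx_1$. By $1$-irreducibility of $\MG$ (implied by $(k-1)$-irreducibility), pick $c_1$ with $\Ldim(\MG|_{(x_1, c_1)}) = d$, and write $\MG_1 := \MG|_{(x_1, c_1)}$; note $\Ldim(\MG|_{(x_1, -c_1)}) \leq d-1$, since otherwise both children of $x_1$ would shatter depth-$d$ trees and yield $\Ldim(\MG) \geq d+1$. The core of the argument is to establish two claims: (A) $\MG_1$ is $(k-2)$-irreducible, and (B) $\MG_1$ is not $(k-1)$-irreducible. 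For (A), given any depth-$(k-2)$ tree $\bv$, build a depth-$(k-1)$ tree $\bw$ whose root is $x_1$, whose $c_1$-subtree is $\bv$, and whose other subtree is arbitrary; $(k-1)$-irreducibility of $\MG$ applied to $\bw$ produces a length-$(k-1)$ path of Ldim $d$, and this path must go through the $c_1$-branch at $x_1$ (else its Ldim is at most $d-1$), yielding a length-$(k-2)$ path in $\bv$ of Ldim $d$ in $\MG_1$. For (B), the $c_1$-subtree of $\bx$ is a depth-$(k-1)$ tree whose every length-$(k-1)$ path, after prepending $c_1$, matches a length-$k$ path in $\bx$ of Ldim $< d$, and this bound transfers directly to $\MG_1$.

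With (A) and (B) in hand, applying the inductive hypothesis to $\MG_1$ produces points $y_1, \ldots, y_{k-1} \in \MX$ and a depth-$(k-1)$ reducing array $\tilde\bt^{(1)}, \ldots, \tilde\bt^{(k)}$ satisfying the lemma for $\MG_1$. I then set $x_i := y_{i-1}$ for $i = 2, \ldots, k$, $\bt^{(1)} := (-c_1)$, and $\bt^{(j)} := (c_1, \tilde\bt^{(j-1)})$ for $j = 2, \ldots, k+1$; a direct check against Definition~\ref{def:reducing-array}, whose main point is the flip $\bt^{(2)}_1 = c_1 = -\bt^{(1)}_1$, confirms this is a valid depth-$k$ reducing array. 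For $j \geq 2$ the corresponding restriction equals $\MG_1|_{\cdots}$ and so satisfies $0 \leq \Ldim < d$ by the inductive hypothesis. For $j = 1$, $\Ldim(\MG|_{(x_1, -c_1)}) \leq d-1 < d$ was established above, and $\MG|_{(x_1, -c_1)}$ is nonempty because $\MG|_{(x_1, -c_1)} = \emptyset$ would force $\MG = \MG_1$, contradicting the fact that $\MG$ is $(k-1)$-irreducible while by (B) $\MG_1$ is not.
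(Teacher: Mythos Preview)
Your proof is correct and follows essentially the same inductive approach as the paper's. The one notable difference is that you select $x_1$ as the root of a witnessing tree for non-$k$-irreducibility and then invoke $1$-irreducibility to obtain $c_1$, whereas the paper finds $x_1$ by negating the characterization of $k$-irreducibility and then splits into two cases (the first of which is in fact vacuous given $(k-1)$-irreducibility); your route is slightly more streamlined and also makes explicit the verification of $(k-2)$-irreducibility of $\MG_1$, which the paper only asserts parenthetically.
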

\begin{proof}
  We use induction on $k$. For the base case $k=1$, note that if $\MG$ is not 1-irreducible, there must be some $x_1 \in \MX$ so that $\max\{ \Ldim(\MG|_{(x,1)}), \Ldim(\MG|_{(x,-1)}) \} < \Ldim(\MG)$. Note that as a consequence we must have that 
  $0 \leq \Ldim(\MG|_{(x,b)})$ for each $b \in \{-1,1\}$; indeed, if for some $b$ it were the case that $\MG|_{(x,b)}$ were empty, then $\MG = \MG|_{(x,-b)}$, in which case $\Ldim(\MG) = \Ldim(\MG|_{(x,-b)})$. 

  Now assume that the statement of the lemma is true for $k-1$. If, for each $x_1 \in \MX$, there were some $b_1 \in \{-1,1\}$ so that $\Ldim(\MG|_{(x_1,b_1)}) = \Ldim(\MG)$ and $\MG|_{(x_1, b_1)}$ were $(k-1)$-irreducible, then we would have that $\MG$ is $k$-irreducible, a contradiction. Thus, there is some $x_1 \in \MX$ so that one of the two conditions below holds:
  \begin{itemize}
  \item $0 \leq \Ldim(\MG|_{(x_1,1)}) \vee \Ldim(\MG|_{(x_1,-1)}) < \Ldim(\MG)$. In this case we may choose $x_2 = \cdots = x_k = x_1$ and the unique reducing array $\bt\^1, \ldots, \bt\^{k+1}$ satisfying $\bt\^1_{1} = \bt\^2_2 = \cdots = \bt\^k_{k} = 1${$, \bt\^{k+1}_k = -1$} and obtain that (\ref{eq:reducing-array-ldim}) holds. 
  \item For some $b_1 \in \{-1,1\}$, $\Ldim(\MG|_{(x_1,b_1)}) = \Ldim(\MG)$ but $\MG|_{(x_1,b_1)}$ is not $(k-1)$-irreducible (and is $(k-2)$-irreducible). In this case we must have that $\Ldim(\MG|_{(x_1,-b_1)}) \geq 0$, i.e., $\MG|_{(x_1,-b_1)}$ is nonempty; otherwise we would have that $\MG = \MG|_{(x_1,b_1)}$, which contradicts the fact that $\MG$ is $(k-1)$-irreducible. Now we apply the inductive hypothesis, which guarantees a sequence $\tilde x_2, \ldots, \tilde x_k \in \MX$ together with a reducing array of depth $k-1$, denoted $\tilde \bt\^2, \ldots, \tilde \bt\^{k+1}$, so that for $2 \leq j \leq k+1$, we have
    \begin{equation}
      \label{eq:reducing-array-induction}
0 \leq \Ldim(\MG|_{(x_1,b_1),(\tilde x_2, \bt\^j_{1}),\ldots, (\tilde x_{j \wedge k}, \bt\^j_{j\wedge k})}) < \Ldim(\MG|_{(x_1,b_1)}) = \Ldim(\MG).
\end{equation}
Now set $x_2 := \tilde x_2, \ldots, x_k := \tilde x_k$, and define the reducing array $\bt\^1, \ldots, \bt\^{k+1}$ of depth $k$ by $\bt\^1 := -b_1$, and for $2 \leq j \leq k+1$, $\bt\^j := (b_1, \tilde\bt\^j_{2}, \ldots, \tilde\bt\^j_{j\wedge k})$. Now $0 \leq \Ldim(\MG|_{(x_1,-b_1)}) < \Ldim(\MG)$ together with (\ref{eq:reducing-array-induction}) establishes (\ref{eq:reducing-array-ldim}).
\qedhere
  \end{itemize}
\end{proof}

The next lemma extends Lemma~\ref{lem:unique-leaf} to generalized trees:
\begin{lemma}
\label{lem:gen-tree-leaf}
Suppose that $\bx$ is a generalized tree so that $\height(\bx) \leq k$, and that $\MG$ is $k$-irreducible. Then $\bx$ has a unique leaf $\ell$ so that $\Ldim(\MG|_{\ba(\ell)}) = \Ldim(\MG)$.
\end{lemma}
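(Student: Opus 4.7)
The plan is to reduce this to Lemma~\ref{lem:unique-leaf} by transforming the generalized tree $\bx$ into an incomplete binary tree $\bx'$ of depth at most $k$, in such a way that the leaves of $\bx'$ are in bijection with those of $\bx$ and the ancestor sets agree under this bijection. This is precisely the transformation illustrated in Figure~\ref{fig:transformed-tree}.

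\textbf{The transformation.} For each non-leaf node $v$ of $\bx$, with label tuple $(\bx(v)_1, \ldots, \bx(v)_{k_v})$ and reducing array $\bt\^1, \ldots, \bt\^{k_v+1}$, I would let $a_j := \bt\^j_j$ for $1 \leq j \leq k_v$ denote the ``pivot'' bit at position $j$. In $\bx'$, I would replace $v$ by a chain of $k_v$ binary nodes $v'_1, \ldots, v'_{k_v}$, with $v'_j$ carrying the label $\bx(v)_j$. From $v'_j$, the edge labeled $a_j$ leads into the transformed subtree rooted at the $j$-th child of $v$ in $\bx$, while the edge labeled $-a_j$ leads to $v'_{j+1}$ (and if $j = k_v$, to the transformed subtree rooted at the $(k_v+1)$-th child of $v$). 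Applied recursively through the tree, this produces a well-defined incomplete binary tree $\bx'$.

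\textbf{Correspondence of leaves and ancestor sets.} By construction, leaves of $\bx'$ are in bijection with leaves of $\bx$, and the depth of each leaf in $\bx'$ equals the height of the corresponding leaf in $\bx$; hence $\bx'$ has depth at most $\height(\bx) \leq k$. For the ancestor sets, a straightforward induction on the root-to-leaf path shows that exiting the chain at $v$ via the $j$-th child ($j \leq k_v$) contributes the pairs $\{(\bx(v)_1, -a_1), \ldots, (\bx(v)_{j-1}, -a_{j-1}), (\bx(v)_j, a_j)\}$ to the ancestor set in $\bx'$, which exactly matches the tuple $\bt\^j$ labeling that edge in $\bx$ by Definition~\ref{def:reducing-array}; and traversing the full chain to the $(k_v+1)$-th child contributes $\{(\bx(v)_1, -a_1), \ldots, (\bx(v)_{k_v}, -a_{k_v})\}$, matching $\bt\^{k_v+1}$. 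Summing these contributions along the path, the ancestor set of the leaf $\ell'$ in $\bx'$ coincides with $\ba(\ell)$ for the corresponding leaf $\ell$ of $\bx$.

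\textbf{Conclusion and main obstacle.} Since $\MG$ is $k$-irreducible and $\bx'$ is an incomplete binary tree of depth at most $k$, Lemma~\ref{lem:unique-leaf} produces a \emph{unique} leaf $\ell'$ of $\bx'$ with $\Ldim(\MG|_{\ba(\ell')}) = \Ldim(\MG)$. Transferring back along the bijection yields a unique leaf $\ell$ of $\bx$ with $\Ldim(\MG|_{\ba(\ell)}) = \Ldim(\MG)$, as required. The main obstacle is the bookkeeping needed to check that the reducing-array structure at each node aligns exactly with the sequence of bits chosen down the corresponding binary chain, so that ancestor sets agree on the nose; this is essentially baked into Definition~\ref{def:reducing-array} but requires careful indexing to write cleanly.
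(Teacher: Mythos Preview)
Your proposal is correct and follows essentially the same approach as the paper: transform each non-leaf node $v$ of the generalized tree into a chain of $k_v$ binary nodes (with the pivot bits $a_j = \bt\^j_j$ determining which side branches off to which child), observe that the leaf bijection preserves ancestor sets, and then invoke Lemma~\ref{lem:unique-leaf} on the resulting incomplete binary tree of depth at most $\height(\bx) \leq k$. Your explicit verification that $\bt\^j = (-a_1,\ldots,-a_{j-1},a_j)$ and $\bt\^{k_v+1} = (-a_1,\ldots,-a_{k_v})$ from Definition~\ref{def:reducing-array} is exactly the indexing check the paper leaves implicit.
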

\begin{proof}
We define a (possibly incomplete) binary tree $\bx'$ of depth $\height(\bx)$, as follows: for each non-leaf node $v$ of $\bx$ whose corresponding reducing array is denoted $\bt\^1, \ldots, \bt\^{k_v+1}$, we create $k_v$ nodes of $\bx'$, labeled by $\bx(v)_1, \ldots, \bx(v)_{k_v}$; we will refer to these nodes by $v_1', \ldots, v_{k_v}'$. For $1 \leq k < k_v$, the node $v_{k+1}'$ is a child of $v_k'$, corresponding to the bit $-\bt\^k_k$. For $1 \leq k \leq k_v$, the child of $v_k'$ corresponding to the bit $\bt\^k_k$ is the child of $v$ in $\bx$ labeled by the tuple $\bt\^k$. Finally, the child of $v_{k_v}'$ corresponding to the bit $-\bt\^{k_v}_{k_v}$ is the child of $v$ in $\bx$ labeled by the tuple $\bt\^{k_v+1}$. An example of the construction of the tree $\bx'$ is shown in Figure~\ref{fig:transformed-tree}.

It is evident that this construction induces a one-to-one mapping between leaves $\ell$ of $\bx$ and corresponding leaves $\ell'$ of $\bx'$. 
For any leaf $\ell$ of $\bx$, notice that its ancestor set $\ba(\ell)$ is equal to the ancestor set of the corresponding leaf $\ell'$ of $\bx'$.\footnote{Since incomplete binary trees are a special case of generalized trees, the definition of ancestor set in Definition~\ref{def:generalized-tree} applies to the tree $\bx'$.} Lemma~\ref{lem:unique-leaf} implies that there is a unique leaf $\ell'$ of $\bx'$ so that $\Ldim(\MG|_{\ba(\ell')}) = \Ldim(\MG)$. Thus there is a unique leaf $\ell$ of $\bx$ so that $\Ldim(\MG|_{\ba(\ell)}) = \Ldim(\MG)$.
\end{proof}

Lemma~\ref{lem:k-reducible-soa-gen} is a key part of the proof that the \ReduceTree algorithm presented in Section~\ref{sec:reducetree} can be used together with the sparse selection protocol of Section~\ref{sec:sparse-selection} to generate an (improper) private learner. Roughly speaking, it gives sufficient conditions for a generalized tree $\bx$ (which will depend on the input dataset) to have some leaf $\hat v$ so that for any hypothesis class $\MJ$ in a certain family of hypothesis classes, it holds that $\soaf{\MJ|_{\ba(\hat v)}} = \soaf{\MJ|_{S^\st}}$, where $S^\st \in (\MX \times \{-1,1\})^\st$ is a collection of $(x,y)$ pairs which will {\it not} depend on the input dataset. The statement of Lemma~\ref{lem:k-reducible-soa-gen} is in fact slightly more general (so that the preceding statement corresponds to the case $\MJ = \MJ'$ in Lemma~\ref{lem:k-reducible-soa-gen}). 
\begin{lemma}
  \label{lem:k-reducible-soa-gen}
  Fix some $k,k' \in \BN$ with $k > k'$ and hypothesis classes $\MH \subset \MG \subset \{-1,1\}^\MX$. 
  Suppose we are given  $S^\st \in (\MX \times \{-1,1\})^{k-k'}$ so that $\MH|_{S^\st}$ is $k$-irreducible, and that
  \begin{equation}
  \label{eq:yat}
  \Ldim(\MG|_{S^\st}) = \Ldim(\MH|_{S^\st}) =: \ell^\st \geq 0.
  \end{equation}
  Suppose that $\bx$ is a generalized tree so that $\height(\bx) \leq k-k'$ and for all leaves $v$ of $\bx$, $\Ldim(\MG|_{\ba(v)}) \leq \ell^\st$. 
  Then there is some leaf $\hat v$ of $\bx$ so that $\soaf{\MJ|_{S^\st}} = \soaf{\MJ'|_{\ba(\hat v)}}$ for all hypothesis classes $\MJ', \MJ$ satisfying $\MH \subset \MJ' \subset \MG$ and $\MH \subset \MJ \subset \MG$. 

  Moreover, the leaf $\hat v$ satisfies:
  \begin{enumerate}
  \item $\Ldim(\MG|_{\ba(\hat v)}) = \Ldim(\MH|_{\ba(\hat v)}) =  \ell^\st$. \label{it:ldim-star}
  \item $\MH|_{\ba(\hat v)}$ is $k'$-irreducible. \label{it:mf-irred} 
  \end{enumerate}
\end{lemma}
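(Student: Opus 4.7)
The plan is to single out the leaf $\hat v$ by applying Lemma~\ref{lem:gen-tree-leaf} to the class $\MH|_{S^\st}$ (which is $k$-irreducible by hypothesis and has Littlestone dimension $\ell^\st$) together with the generalized tree $\bx$ (whose height is at most $k-k' \leq k$). This furnishes a unique leaf $\hat v$ of $\bx$ such that
$$
\Ldim(\MH|_{S^\st \cup \ba(\hat v)}) \;=\; \Ldim\!\bigl((\MH|_{S^\st})|_{\ba(\hat v)}\bigr) \;=\; \Ldim(\MH|_{S^\st}) \;=\; \ell^\st,
$$
and the rest of the proof will consist of checking that this $\hat v$ satisfies the three conclusions of the lemma.

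Item~\ref{it:ldim-star} is immediate from the chain of inclusions $\MH|_{S^\st \cup \ba(\hat v)} \subset \MH|_{\ba(\hat v)} \subset \MG|_{\ba(\hat v)}$ combined with monotonicity of Littlestone dimension under restriction: the leftmost class has Ldim $\ell^\st$ by the displayed equation, and the rightmost is bounded above by $\ell^\st$ by the hypothesis on leaves of $\bx$, so all three classes have Ldim $\ell^\st$. For item~\ref{it:mf-irred} I will first establish the auxiliary claim that \emph{restricting a $k$-irreducible class by a single example $(x,b)$ that preserves its Littlestone dimension yields a $(k-1)$-irreducible class}. Given any depth-$(k-1)$ tree $\bx'$, build a depth-$k$ tree whose root is labeled $x$ and whose two subtrees are both copies of $\bx'$, then invoke the uniqueness part of Lemma~\ref{lem:unique-leaf} (applied at depth $1$ to the singleton tree $x$) to force the first bit of the Ldim-preserving leaf of the depth-$k$ tree to be $b$; the remaining $k-1$ bits then witness $(k-1)$-irreducibility of the restricted class on $\bx'$. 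Iterating this claim along the (at most $k-k'$) examples of $\ba(\hat v)$ — noting that each intermediate restriction preserves $\Ldim = \ell^\st$ by monotonicity, since the initial and terminal values coincide — shows that $\MH|_{S^\st \cup \ba(\hat v)}$ is at least $k'$-irreducible. Lemma~\ref{lem:red-order} together with item~\ref{it:ldim-star} then transfers $k'$-irreducibility up to $\MH|_{\ba(\hat v)}$.

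For the main SOA-equality I will pass through the intermediate quantity $\soaf{\MH|_{S^\st \cup \ba(\hat v)}}$. On the one hand $\MH|_{S^\st \cup \ba(\hat v)} \subset \MJ|_{S^\st}$, and both have Littlestone dimension $\ell^\st$ (using~(\ref{eq:yat}) and $\MH \subset \MJ \subset \MG$ to sandwich $\Ldim(\MJ|_{S^\st})$ between $\ell^\st$ and $\ell^\st$); since $\MH|_{S^\st \cup \ba(\hat v)}$ is irreducible by item~\ref{it:mf-irred}, Lemma~\ref{lem:red-hierarchy} yields $\soaf{\MJ|_{S^\st}} = \soaf{\MH|_{S^\st \cup \ba(\hat v)}}$. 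The identical argument, with $\MJ'|_{\ba(\hat v)}$ in place of $\MJ|_{S^\st}$ and item~\ref{it:ldim-star} supplying the Ldim equality, gives $\soaf{\MJ'|_{\ba(\hat v)}} = \soaf{\MH|_{S^\st \cup \ba(\hat v)}}$, and combining the two equalities completes the proof.

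The main obstacle is the auxiliary claim used in item~\ref{it:mf-irred} that restricting by a single Ldim-preserving example drops the irreducibility parameter by exactly one. The trick — using Lemma~\ref{lem:unique-leaf} at depth $1$ to pin the first bit of the unique Ldim-preserving path, and then invoking the same uniqueness at depth $k$ on the duplicated-subtree construction to force that path to start with $b$ — is the one genuinely non-routine step. Everything else reduces to chasing Ldim equalities through the inclusions $\MH|_{S^\st \cup \ba(\hat v)} \subset \MJ|_{S^\st},\, \MJ'|_{\ba(\hat v)} \subset \MG|_{\ba(\hat v)}$ and invoking the stability lemma (Lemma~\ref{lem:red-hierarchy}) twice.
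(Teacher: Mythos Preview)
Your proof is correct and hits the same core ideas as the paper's argument, but you organize them differently. The paper obtains $\hat v$ together with the $k'$-irreducibility of $\MH|_{S^\st \cup \ba(\hat v)}$ in a single stroke: it observes that appending any depth-$k'$ binary tree $\by$ below each leaf of the generalized tree $\bx$ (whose height is $\leq k-k'$) yields a structure of height $\leq k$, and applies Lemma~\ref{lem:gen-tree-leaf} to this concatenation to extract both $\hat v$ and the Ldim-preserving bits for $\by$ simultaneously (the uniqueness part of Lemma~\ref{lem:gen-tree-leaf} is what makes $\hat v$ independent of $\by$). It then computes $\soa{\MH|_{\ba(\hat v)}}{x}$ and $\soa{\MH|_{S^\st}}{x}$ explicitly by specializing $\by$ to the constant tree labeled $x$, and only afterwards invokes Lemma~\ref{lem:red-hierarchy} to pass to $\MJ$ and $\MJ'$. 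Your route instead separates these steps: you first get $\hat v$ from Lemma~\ref{lem:gen-tree-leaf} on $\bx$ alone, then prove $k'$-irreducibility of $\MH|_{S^\st \cup \ba(\hat v)}$ via the iterated ``peel off one Ldim-preserving example'' claim, and finally use that class as an explicit pivot, applying Lemma~\ref{lem:red-hierarchy} twice (once toward $\MJ|_{S^\st}$, once toward $\MJ'|_{\ba(\hat v)}$). Your decomposition is arguably cleaner---it avoids the explicit $b(x)$ computation and makes the role of the intermediate class $\MH|_{S^\st \cup \ba(\hat v)}$ transparent---while the paper's one-shot argument is slightly shorter since it does not isolate the auxiliary one-step-restriction claim.
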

\begin{proof}
  The $k$-irreducibility of $\MH|_{S^\st}$, (\ref{eq:yat}), and Lemma~\ref{lem:red-order} gives that $\MG|_{S^\st}$ and $\MJ|_{S^\st}$ are also $k$-irreducible for any $\MJ \supset \MH$.

  As a consequence of the $k$-irreducibility of $\MH|_{S^\st}$ and the fact that $\height(\bx) \leq k - k'$, the following holds: there is some leaf $\hat v$ of $\bx$ so that for any $\MX$-valued tree $\by$ of depth $k'$, there are some $(\bt_1, \ldots, \bt_{k'}) \in \{-1,1\}^{k'}$ such that 
  \begin{equation}
    \label{eq:xktree-1}
\Ldim(\MH|_{S^\st \cup \ba(\hat v) \cup \{ (\by_1, \bt_1), \ldots, (\by_{k'}(\bt_{1:k'-1}), \bt_{k'})\}}) =\Ldim(\MH|_{S^\st}).
\end{equation}
(That such a $\hat v$ exists is an immediate consequence of Lemma~\ref{lem:gen-tree-leaf}; that $\hat v$ does not depend on $\by$ follows from the fact that the leaf guaranteed by Lemma~\ref{lem:gen-tree-leaf} is unique.)

Using the assumption that $\Ldim(\MH|_{S^\st}) = \ell^\st \geq \Ldim(\MG|_{\ba(\hat v)})$, we see that for any $\by$ as above, there exist $b_1, \ldots, b_{k'}$ so that
\begin{align}
  \label{eq:xktree-2}
\Ldim(\MH|_{\ba(\hat v) \cup \{ (\by_1, \bt_1), \ldots, (\by_{k'}(\bt_{1:k'-1}), \bt_{k'})\}})
\geq & \Ldim(\MH|_{S^\st \cup \ba(\hat v) \cup \{  (\by_1, \bt_1), \ldots, (\by_{k'}(\bt_{1:k'-1}), \bt_{k'})\}}) \\
\overset{{\eqref{eq:xktree-1}}}{=}&\Ldim(\MH|_{S^\st}) \nonumber\\
\label{eq:xktree-3}
\geq &  \Ldim(\MG|_{\ba(\hat v)})
\geq  \Ldim(\MH|_{\ba(\hat v)}).
\end{align}
It then follows that the inequalities in (\ref{eq:xktree-2}) and (\ref{eq:xktree-3}) are in fact equalities. For any $x \in \MX$, set $\by$ to be the tree all of whose nodes are labeled by $x$. Then the tuple $(\bt_1, \ldots, \bt_{k'})$ making (\ref{eq:xktree-1}) true (which must be unique) is of the form $(b(x), \ldots, b(x))$ for some $b(x) \in \{-1,1\}$. It follows from (\ref{eq:xktree-2}) and (\ref{eq:xktree-3}) that $\soa{\MH|_{\ba(\hat v)}}{x} = b(x)$. 

From (\ref{eq:xktree-1}) (again with all nodes of the tree $\by$ labeled by $x$) we see also that for any $x \in \MX$,
\begin{equation}
  \Ldim(\MH|_{S^\st}) = \Ldim(\MH|_{S^\st \cup \{(x,b(x))\}}),
\end{equation}
which implies that $\soa{\MH|_{S^\st}}{x} = b(x)$ for all $x \in \MX$. 
By irreducibility of $\MH|_{S^\st}$ and Lemma~\ref{lem:red-hierarchy}, we have that, for all $x \in \MX$ and $\MJ$ satisfying $\MH \subset \MJ \subset \MG$,
\begin{equation}
  \label{eq:xktree-4}
\soa{\MH|_{S^\st}}{x} =  \soa{\MJ|_{S^\st}}{x}.
\end{equation}
Hence, for all $x \in \MX$, $\soa{\MJ|_{S^\st}}{x} = b(x) = \soa{\MH|_{\ba(\hat v)}}{x}$, which establishes the desired equality of SOA hypotheses for $\MJ' = \MH$. Before establishing this for all $\MJ'$ satisfying $\MH \subset \MJ' \subset \MG$, we first show items~\ref{it:ldim-star} and~\ref{it:mf-irred}. 

Using the equalities of (\ref{eq:xktree-2}) and (\ref{eq:xktree-3}) gives that
$\Ldim(\MG|_{\ba(\hat v)}) = \Ldim(\MH|_{\ba(\hat v)})= \ell^\st$, establishing item~\ref{it:ldim-star}. Item~\ref{it:mf-irred} is a direct consequence of the equalities of (\ref{eq:xktree-2}) and (\ref{eq:xktree-3}), since the depth-$k'$ tree $\by$ is arbitrary.

\if 0
To establish item~\ref{it:mf-irred}, we note that the same argument used to show (\ref{eq:xktree-2}) gives that for any tree $\bx$ of depth $k'$, there is some $(\bt_1, \ldots, \bt_{k'}) \in \{-1,1\}^{k'}$ so that
\begin{align*}
  \Ldim(\MH|_{\ba(\hat v) \cup \{(\bx_1, \bt_1), \ldots, (\bx_{k'}(\bt_{1:k'-1}), \bt_{1:k'})\}}) &\geq \Ldim(\MH|_{S^\st \cup \ba(\hat v) \cup \{(\bx_1, \bt_1), \ldots, (\bx_{k'}(\bt_{1:k'-1}), \bt_{1:k'})\}}) \\
                                                                                    & = \Ldim(\MH|_{S^\st})  \geq \Ldim(\MG|_{\ba(\hat v)}) \geq \Ldim(\MH|_{\ba(\hat v)}).
\end{align*}
Thus all inequalities above are in fact equalities, meaning that for all depth-$k'$ trees $\bx'$, there is some $(\bt_1, \ldots, \bt_{k'}) \in \{-1,1\}^{k'}$ so that $  \Ldim(\MF_{\di,\alpha-\alpha_\Delta/3}|_{\ba(\hat v) \cup \{(\bx_1, \bt_1), \ldots, (\bx_{k'}(\bt_{1:k'-1}), \bt_{1:k'})\}}) = \Ldim(\MF_{\di,\alpha-\alpha_\Delta/3}|_{\ba(\hat v)})$. This establishes item~\ref{it:mf-irred}. \noah{todo move this upwards and replace what is there. also propagate changes to next section.}
\fi

 Items~\ref{it:ldim-star} and~\ref{it:mf-irred} together with Lemma~\ref{lem:red-hierarchy} imply that for any hypothesis class $\MJ'$ satisfying $\MH \subset \MJ' \subset \MG$, we have that 
$$
\soaf{\MJ'|_{\ba(v)}} = \soaf{\MH|_{\ba(v)}} = \soaf{\MJ|_{S^\st}},
$$
as desired. 
\end{proof}

\section{Improper private learner for Littlestone classes}
\label{sec:improper-learner}
In this section we establish a variant of Theorem~\ref{thm:pap-pac-informal} where the learner is only guaranteed to be improper (namely, part 1 of the proof as described in Section~\ref{sec:proof-overview}). In Section~\ref{sec:reducetree} we introduce the algorithm \ReduceTree (defined in Algorithm~\ref{alg:reduce-tree}), which, given a dataset $S_n = ((x_1,y_1), \ldots, (x_n,y_n))$ outputs a set $\hat\MS$ of hypotheses of the form $\soaf{\MG}$ for various classes $\MG \subset \MF$. (To aid our notation in the proofs we also have \ReduceTree output a generalized tree $\hat \bx$ and a set $\hMLp$ of leaves of $\hat \bx$.) In Section~\ref{sec:sparse-selection} we state guarantees for a private sparse selection protocol from~\cite{ghazi_differentially_2020}. In Section~\ref{sec:improper-alg} we will show how to use certain ``stability-type'' properties of the set $\hat\MS$ together with the private sparse selection procedure from Section~\ref{sec:sparse-selection} to privately output a hypothesis which has low population error, which will establish the desired improper learner. (We will then make it proper in Section~\ref{sec:proper-learning-finite}, thus establishing Theorem~\ref{thm:pap-pac-informal} in its entirety.)
\subsection{Building block: \ReduceTree algorithm}
\label{sec:reducetree}

Throughout this section we fix a function class $\MF$ and write $d := \Ldim(\MF)$; we also assume that a given distribution $P$ over $\MX \times \{-1,1\}$ is realizable for $\MF$. The algorithm \ReduceTree takes some $\gamma \in [0,1], n, k' \in \BN$ as parameters (to be specified below). It also takes as input some dataset $S_n \in (\MX \times \{-1,1\})^n$, which is accessed via the empirical distribution $\emp$. 
Recall the function $\alpha(n,\gamma) \in [0,1]$ defined after Theorem~\ref{thm:unif-conv}. 
Let $\alpha_\Delta := 6 \cdot \alpha(n,\gamma)$, and define $E_{\good}$ to be the event
\begin{equation}
\label{eq:egood}
E_{\good} := \left\{ \sup_{f \in \MF} \left| \err{\di}{f} - \err{\emp}{f} \right| \leq \frac{\alpha_\Delta}{6} \right \}.
\end{equation}
Assuming that the dataset $S_n = \{ (x_1, y_1), \ldots, (x_n, y_n) \}$ is distributed i.i.d.~according to $P$, by Theorem~\ref{thm:unif-conv}, $\p_{S_n \sim \di^n} \left[ E_{\good} \right] \geq 1-\gamma$. 

The algorithm \ReduceTree operates as follows. It starts with the class $\MF_{\emp,\alpha_0}$ of hypotheses with empirical error at most $\alpha_0$ ($\alpha_0$ will be chosen so that it is less than the desired error for the output of the private learner). If it is the case that $\Ldim(\MF_{\emp, \alpha_0}) = \Ldim(\MF_{\emp, \alpha_0 - \alpha_\Delta})$ and $\MF_{\emp,\alpha_0 - \alpha_\Delta}$ is irreducible for some appropriately chosen $\alpha_\Delta > 0$, then by Lemmas~\ref{lem:red-order} and~\ref{lem:red-hierarchy}, under the event $E_{\good}$, the classifier $\soaf{\MF_{\emp,\alpha_0 - 2\alpha_\Delta/3}}$ is ``stable'' in the sense that it does not ``depend much'' on the dataset $S_n$. (We leave a formalization of this statement to the proof below.) In this case we can output the set consisting of the single hypothesis, $\hat\MS := \{ \soaf{\MF_{\emp, \alpha_0 - 2\alpha_\Delta/3}} \}$.

If we did not terminate in the above paragraph, then one of the following two statements must hold: (1) it holds that $\Ldim(\MF_{\emp, \alpha_0 - \alpha_\Delta}) < \Ldim(\MF_{\emp, \alpha_0})$, or (2) $\MF_{\emp,\alpha_0 - \alpha_\Delta}$ is not irreducible {and $\Ldim(\MF_{\emp, \alpha_0 - \alpha_\Delta}) = \Ldim(\MF_{\emp, \alpha_0})$}. If (1) holds, then we may simply recurse, i.e., repeat the above process with $\alpha_1 := \alpha_0 - \alpha_\Delta$ replacing $\alpha_0$. Otherwise, (2) holds, so (by Lemma~\ref{lem:not-k-irred} with $k=1$) we can choose some $x \in \MX$ so that $\Ldim(\MF_{\emp, \alpha_0 - \alpha_\Delta}|_{(x,b)}) < \Ldim(\MF_{\emp, \alpha_0 - \alpha_\Delta}) = \Ldim(\MF_{\emp,\alpha_0})$ for each $b \in \{-1,1\}$. In this case, we can repeat the above process twice (once for each $b \in \{-1,1\}$), with $\alpha_1 := \alpha_0 - \alpha_\Delta$ replacing $\alpha_0$ and with $\MF|_{(x,b)}$ replacing $\MF$ for each $b \in \{-1,1\}$. The point $x \in \MX$ becomes the label of the root node of a generalized tree, with two children (which are leaves), corresponding to the bits $\pm 1$. Each step $t \geq 1$ of the above-described recursion builds upon this generalized tree maintained by the algorithm by adding children to some of its current leaves. 
For technical reasons, at depth $t$ of this recursion, we will need to replace the requirement of ``irreducibility'' with that of ``$k' \cdot 2^t$-irreducibility'', for some $k'$ which does not depend on $t$.  The algorithm is guaranteed to terminate because with each increase in $t$, the Littlestone dimension of the current class under consideration decreases, and it can only do so at most $\Ldim(\MF)$ times. Further details may be found in Algorithm~\ref{alg:reduce-tree}.

\begin{algorithm}[!htp]
  \caption{\bf \ReduceTree}\label{alg:reduce-tree}
  \KwIn{Parameters $n, k' \in \BN, \gamma \in [0,1]$, $\alpha_\Delta := 6 \cdot \alpha(n,\gamma)$. Distribution $\emp$ over $\MX$. Hypothesis class $\MF$, with $d := \Ldim(\MF)$.}
  \begin{enumerate}[leftmargin=14pt,rightmargin=20pt,itemsep=1pt,topsep=1.5pt]
\item Initialize a counter $t = 1$ ($t$ counts the depth of the generalized tree constructed at each step of the algorithm).
\item For $1 \leq t \leq d+1$, set $\alpha_t := (d+3-t) \cdot \alpha_\Delta$. 
\item For $1 \leq t \leq d$, set $k_t := k' \cdot 2^{t}$.\label{it:define-kt} 
\item Initialize $\hat \bx\^0 = \{ v_0 \}$ to be a tree with a single (unlabeled) leaf $v_0$. (In general $\hat \bx\^t$ will be the generalized tree produced by the algorithm after step $t$ is completed.)
\item Initialize $\hat \ML_1 = \{ v_0 \}$. (In general $\hat \ML_t$ will be the set of leaves of the tree before step $t$ is started.)
\item For $t \in \{1,2, \ldots, d\}$:
  \begin{enumerate}
  \item For each leaf $v \in \hat \ML_{t}$ and $\alpha \in [0,1]$, set $\gRes{\alpha}{v} := \MF_{\emp, \alpha}|_{\ba(v)}$. (Note that since the only way the tree changes from round to round is by adding children to existing nodes, $\ba(v)$ will never change for a node $v$ that already exists.)
  \item \label{it:sup-ldim-alg} Let $\hat w_t^\st := \max_{v \in \hat \ML_{t}} \Ldim(\gRes{\alpha_t}{v})$ be the maximum Littlestone dimension of any of the classes $\gRes{\alpha_t}{v}.$ 

    Also let $\hat \ML_{t}' := \{ v \in \hat \ML_{t} : \Ldim(\gRes{\alpha_t}{v}) = \hat w_t^\st\}$.
  \item \label{it:ldim-break-step} 
  If there is some $v \in \hat \ML_{t}'$ so that $\Ldim(\hat{\MG}(\alpha_t-\alpha_\Delta, v)) = \Ldim(\hat{\MG}(\alpha_t, v))$ and $\hat{\MG}(\alpha_t - \alpha_\Delta, v)$ is $k_t$-irreducible, then break out of the loop and go to step~\ref{it:tfinal-define}. 
  \item \label{it:red-loop} Else, for each node $v \in \hat\ML_{t}'$:
    \begin{enumerate}
    \item If $\Ldim(\hat{\MG}(\alpha_t-\alpha_\Delta, v)) < \Ldim(\hat{\MG}(\alpha_t, v))$, move on to the next $v$.
    \item \label{it:decrease-ldim} Else, we must have that $\hat{\MG}(\alpha_t - \alpha_\Delta, v)$ is not $k_t$-irreducible. Let $k_v$ be chosen as small as possible so that $\gRes{\alpha_t - \alpha_\Delta}{v}$ is not $k_v$-irreducible; then $k_v \leq k_t$. By Lemma~\ref{lem:not-k-irred}, there is some sequence $x_1, \ldots, x_{k_v} \in \MX$ and reducing array $\bt\^1, \ldots, \bt\^{k_v+1}$ of depth $k_v$ so that for $1 \leq j \leq k_v+1$, it holds that
      \begin{equation}
        \label{eq:alg-dec-ldim}
    0 \leq  \Ldim(\hat{\MG}(\alpha_t - \alpha_\Delta, v)|_{(x_1, \bt\^j_{1}), \ldots, (x_{j \wedge k_v}, \bt\^j_{j \wedge k_v})}) < \Ldim(\hat{\MG}(\alpha_t - \alpha_\Delta, v)).
    \end{equation}
      \item Give $v$ the label $(x_1, \ldots, x_{k_v})$. Construct $k_v+1$ children of $v$ (all leaves of the current tree), with edge labels given by $\bt\^1, \ldots, \bt\^{k_v+1}$.
      \end{enumerate}
      \item Let the current tree (with the additions of the previous step) be denoted by $\hat \bx\^{t}$, and let $\hat \ML_{t+1}$ be the list of the leaves of $\hat \bx\^{t}$, i.e., the nodes which have not (yet) been assigned labels or children.
  \end{enumerate}
\item \label{it:tfinal-define} Let $t_{\final}$ be the final value of $t$ the algorithm {\it completed} the loop of step~\ref{it:red-loop} for before breaking out of the above loop (i.e., if the break at step~\ref{it:ldim-break-step} was taken at step $t$, then $t_{\final} = t-1$; if the break was never taken, then $t_{\final} = d$). Let $\hat w_{t_{\final}+1}^\st$ and $\hat \ML_{t_{\final}+1}'$ be defined as in Step~\ref{it:sup-ldim-alg}.
\item
Output the set $\hMLp := \hat \ML_{t_{\final}+1}'$ of leaves of the tree $\hat \bx\^{t_{\final}}$, and the tree $\hat \bx := \hat \bx\^{t_{\final}}$. 
  Finally, output the set 
  \begin{equation}
    \label{eq:redtree-output}
\hat \MS := \{ \soaf{\gRes{\alpha_{t_{\final} + 1} - 2\alpha_\Delta/3}{v}} : \text{$v \in \hMLp$ and $\gRes{\alpha_{t_{\final}+1} - 2\alpha_\Delta/3}{v}$ is $k'$-irreducible \& nonempty} \}. 
  \end{equation}
  \end{enumerate}
  \end{algorithm}

We say that the dataset $S_n$ is {\it realizable} if there is some $f \in \MF$ so that $\err{\emp}{f} = 0$ (this is the case with probability 1 if $S_n \sim P^n$ and $P$ is realizable). Lemma~\ref{lem:exists-irred-node} states a basic property of the output set $\hMLp$ of \ReduceTree.
\begin{lemma}
  \label{lem:exists-irred-node}
Suppose the input dataset $S_n$ of \ReduceTree is realizable. The set $\hMLp$ output by \ReduceTree satisfies the following property: letting $t = t_{\final} + 1 \in [d+1]$, there is some leaf $v \in \hMLp$ so that $\Ldim(\gRes{\alpha_{t}-\alpha_\Delta}{v}) = \Ldim(\gRes{\alpha_{t}}{v}) \geq 0$ and $\gRes{\alpha_{t} - \alpha_\Delta}{v}$ is $k_{t}$-irreducible.
\end{lemma}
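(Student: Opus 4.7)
The plan is to split into two cases depending on whether the break at step~\ref{it:ldim-break-step} is ever triggered.

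Suppose first the break is triggered at some step $\tilde t \leq d$, so $t_{\final} = \tilde t - 1$ and $t = t_{\final}+1 = \tilde t$. The break condition itself asserts the existence of $v \in \hat\ML_{\tilde t}' = \hMLp$ with $\Ldim(\gRes{\alpha_t-\alpha_\Delta}{v}) = \Ldim(\gRes{\alpha_t}{v})$ and $\gRes{\alpha_t-\alpha_\Delta}{v}$ being $k_t$-irreducible, so the only remaining piece is the nonemptiness $\Ldim(\gRes{\alpha_t}{v}) \geq 0$. This follows from realizability: letting $f^\st \in \MF$ satisfy $\err{\emp}{f^\st}=0$, I would show by induction on $t'$ that some leaf $v \in \hat\ML_{t'}$ always satisfies $f^\st \in \gRes{\alpha_{t'}}{v}$, which in particular forces $\hat w_{t'}^\st \geq 0$. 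The inductive step uses that when children of a node $u$ are created in step~\ref{it:decrease-ldim}, the reducing array $\bt\^1, \ldots, \bt\^{k_u+1}$ partitions the $2^{k_u}$ possible assignments of values to $(x_1, \ldots, x_{k_u})$ into $k_u+1$ groups (one per child), so $f^\st$ (via its values on these points) is consistent with exactly one of the new children's ancestor sets.

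If instead the break is never triggered, then $t_{\final} = d$ and $t = d+1$. The key claim is the strict monotonicity $\hat w_{t'+1}^\st \leq \hat w_{t'}^\st - 1$ for every $1 \leq t' \leq d$. Every leaf $v' \in \hat\ML_{t'+1}$ falls into one of three types: (a) $v' \in \hat\ML_{t'} \setminus \hat\ML_{t'}'$, for which $\Ldim(\gRes{\alpha_{t'}}{v'}) < \hat w_{t'}^\st$ by definition, and $\Ldim(\gRes{\alpha_{t'+1}}{v'}) \leq \Ldim(\gRes{\alpha_{t'}}{v'})$ by monotonicity in $\alpha$; (b) $v' = v \in \hat\ML_{t'}'$ that moved on in the first sub-item of step~\ref{it:red-loop}, giving $\Ldim(\gRes{\alpha_{t'+1}}{v}) = \Ldim(\gRes{\alpha_{t'}-\alpha_\Delta}{v}) < \Ldim(\gRes{\alpha_{t'}}{v}) = \hat w_{t'}^\st$; or (c) a newly added child $v'$ of some $v \in \hat\ML_{t'}'$ processed in step~\ref{it:decrease-ldim}, where necessarily $\Ldim(\gRes{\alpha_{t'}-\alpha_\Delta}{v}) = \hat w_{t'}^\st$ since the break was not triggered and the first sub-item did not apply, so~(\ref{eq:alg-dec-ldim}) gives $\Ldim(\gRes{\alpha_{t'+1}}{v'}) < \hat w_{t'}^\st$. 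Combined with $\hat w_1^\st \leq \Ldim(\MF) = d$, this yields $\hat w_{d+1}^\st \leq 0$.

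The realizability induction from the first case still applies, producing a leaf $v \in \hat\ML_{d+1}$ with $f^\st \in \gRes{\alpha_{d+1}}{v}$, so $\hat w_{d+1}^\st \geq 0$ and hence $\hat w_{d+1}^\st = 0$. Thus $v \in \hMLp$ and $\gRes{\alpha_{d+1}}{v}$ is a nonempty class of Littlestone dimension $0$, i.e., a singleton. Since $\alpha_{d+1} - \alpha_\Delta = \alpha_\Delta \geq 0 = \err{\emp}{f^\st}$, we also have $f^\st \in \gRes{\alpha_{d+1}-\alpha_\Delta}{v} \subset \gRes{\alpha_{d+1}}{v}$, so the two classes coincide (the former being a nonempty subset of a singleton), share Littlestone dimension $0$, and the singleton is trivially $k$-irreducible for every $k \in \BN$ (adopting the natural extension $k_{d+1} := k' \cdot 2^{d+1}$ of the definition of $k_t$). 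The main subtlety in the proof lies in the second case: carefully classifying the three types of leaves in $\hat\ML_{t'+1}$ to establish the strict decrease of $\hat w_{t'}^\st$, and verifying the partition property of the reducing array that enables tracking $f^\st$ through the tree.
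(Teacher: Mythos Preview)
Your proof is correct and follows essentially the same approach as the paper: the same case split on whether the break is triggered, the same three-way classification of leaves to establish $\hat w_{t'+1}^\st < \hat w_{t'}^\st$, and the same singleton argument at $t=d+1$. The only stylistic difference is that where the paper invokes the identity $\MF_{\emp,\alpha} = \bigcup_{v \in \hat\ML} \gRes{\alpha}{v}$ to get nonemptiness, you instead track a specific realizing $f^\st$ through the tree via the partition property of the reducing array; these are two phrasings of the same observation.
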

\begin{proof}
  If for some $t$, the algorithm breaks at step~\ref{it:ldim-break-step}, then the conclusion of the lemma is immediate: indeed, the condition to break in step~\ref{it:ldim-break-step} gives that for some $v \in \hat \ML_t' = \hat \ML_{t_{\final}+1}'$ we have $\Ldim(\gRes{\alpha_{t}-\alpha_\Delta}{v}) = \Ldim(\gRes{\alpha_{t}}{v})$ and $\gRes{\alpha_{t} - \alpha_\Delta}{v}$ is $k_{t}$-irreducible. In light of (\ref{eq:tree-divide}) below, since $v$ maximizes $\Ldim(\gRes{\alpha_t-\alpha_\Delta}{v})$ among $v \in \hat \ML_t$, it follows that $\Ldim(\gRes{\alpha_t-\alpha_\Delta}{v}) \geq 0$. 

  Otherwise, the algorithm performs a total of $d$ iterations. We claim that $\hat w_{d+1}^\st = 0$. 
  We first show that for all $t \geq 1$, $\hat w_{t+1}^\st < \hat w_t^\st$. To see this fact, note that each leaf $v$ in $\hat \ML_{t+1}$ belongs to one of the following three categories:
  \begin{itemize}
  \item $v \in \hat \ML_t \backslash \hat \ML_t'$. In this case, we have
    $$
\Ldim(\gRes{\alpha_{t+1}}{v}) \leq \Ldim(\gRes{\alpha_{t}}{v}) < \hat w_t^\st.
$$
\item $v \in \hat \ML_t'$ and $\Ldim(\gRes{\alpha_t-\alpha_\Delta}{v}) < \Ldim(\gRes{\alpha_t}{v})$. Using that $\alpha_{t+1} = \alpha_t - \alpha_\Delta$, we obtain
  $$
\Ldim(\gRes{\alpha_{t+1}}{v}) = \Ldim(\gRes{\alpha_t - \alpha_\Delta}{v}) < \Ldim(\gRes{\alpha_t}{v}) \leq \hat w_t^\st.
  $$
\item $v$ is the $j$th child of some node $u \in \hat \ML_t'$ for some $1 \leq j \leq k_u+1 \leq k_t+1$, as constructed in step~\ref{it:decrease-ldim} of the algorithm. Let the label of the edge between $u$ and $v$ be denoted by $\bt\^j \in \{-1,1\}^{j \wedge k_u}$. Then since $\alpha_{t+1} = \alpha_t - \alpha_\Delta$,
  \begin{align}
 \Ldim(\gRes{\alpha_{t+1}}{v}) & \leq \Ldim(\gRes{\alpha_t - \alpha_\Delta}{v})\nonumber\\
                                                    & = \Ldim(\MF_{\emp, \alpha_t-\alpha_\Delta}|_{\ba(u) \cup \{(\bx\^t(u)_1, \bt\^j_{1}), \ldots, (\bx\^t(u)_{j \wedge k_u}, \bt\^j_{j\wedge k_u}) \}}) \nonumber\\
                                                    & < \Ldim(\gRes{\alpha_t-\alpha_\Delta}{u}) \nonumber\\
                                                    & \leq \hat w_t^\st,\nonumber
  \end{align}
  where the strict inequality follows from (\ref{eq:alg-dec-ldim}). 
  \end{itemize}
  
  Since $\hat w_1^\st \leq d$ as $\gRes{\alpha_1}{v} \subset \MF$, we obtain that $\hat w_{d+1}^\st \leq 0$. 
  Thus all leaves $v$ in $\hat \ML_{t+1}$ satisfy $\Ldim(\gRes{\alpha_{t+1}}{v}) < \hat w_t^\st$, i.e., $\hat w_{t+1}^\st < \hat w_t^\st$. Thus each $v \in \hat \ML_{d+1}$ satisfies $\Ldim(\gRes{\alpha_{d+1}}{v}) \leq 0$, and $\hat \ML_{d+1}' = \hMLp$ is exactly the set of $v \in \hat \ML_{d+1}$ for which $\gRes{\alpha_{d+1}}{v}$ is nonempty.
  Hence 
  \begin{equation}
    \label{eq:tree-divide}
\MF_{\emp,\alpha} = \bigcup_{v \in \hat\ML} \MF_{\emp,\alpha}|_{\ba(v)} = \bigcup_{v \in \hat\ML} \gRes{\alpha}{v}
\end{equation}
for all $\alpha \leq \alpha_{d+1}$. Since we assume $S_n$ is realizable, it follows that $\MF_{\emp, \alpha}$ is nonempty and thus $\max_{v \in \hMLp} \{\Ldim(\gRes{\alpha}{v})\} \geq 0$ for $\alpha \geq 0$. Since $\alpha_{d+1} -\alpha_\Delta \geq 0$, it follows that $\hat w_{d+1}^\st = 0$ and also that there is some $v \in \hMLp$ so that $\Ldim(\gRes{\alpha_{d+1} - \alpha_\Delta}{v}) \geq 0$. Since $\hat w_{d+1}^\st = 0$, we have  $\Ldim(\gRes{\alpha_{d+1}}{v}) \leq 0$, so for this $v \in \hMLp$, $\Ldim(\gRes{\alpha_{d+1}}{v}) = \Ldim(\gRes{\alpha_{d+1}-\alpha_\Delta}{v}) = 0$. The $k_{d+1}$-irreducibility of $\hat \MG(\alpha_{d+1} - \alpha_\Delta, v)$ follows from the fact that a class with Littlestone dimension 0 contains a single function, and is thus $k$-irreducible for all $k \in \BN$. 
\end{proof}

In order to apply Lemma~\ref{lem:k-reducible-soa-gen} in the proof of Lemma~\ref{lem:equal-to-a-leaf} below, we will need an upper bound on $\height(\hat \bx)$ for the tree $\hat \bx$ output by \ReduceTree. Lemma~\ref{lem:height-ub} provides this upper bound; roughly speaking, the growth is exponential in $t$ (recall $k_t = k' \cdot 2^t$ from step~\ref{it:define-kt} of Algorithm~\ref{alg:reduce-tree}) because the tree may grow in height by $k_t$ with each increase of $t$ by 1 (due to step~\ref{it:decrease-ldim} of the algorithm), and in order to satisfy the preconditions of Lemma~\ref{lem:k-reducible-soa-gen} we need to ensure that $k_{t+1}$ is an upper bound on $\height(\hat \bx\^{t})$ for each $t$. 
\begin{lemma}
  \label{lem:height-ub}
  For all $t$ the tree $\hat\bx\^t$ of Algorithm~\ref{alg:reduce-tree} satisfies $\height(\hat \bx\^t) \leq k_{t+1} - k'$. In particular, the tree $\hat \bx$ satisfies $\height(\hat \bx) \leq k_{t_{\final}+1} - k'$. 
\end{lemma}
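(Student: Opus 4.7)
The plan is to prove the bound by induction on $t$, using the stronger statement $\height(\hat \bx\^t) \leq k_{t+1} - k'$ for every $t$ from $0$ up to $t_{\final}$. The base case $t = 0$ is immediate: by construction $\hat \bx\^0$ consists of the single unlabeled leaf $v_0$, whose height is $0$, and indeed $0 \leq k' = 2k' - k' = k_1 - k'$.

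For the inductive step, assume $\height(\hat \bx\^{t-1}) \leq k_t - k'$. The tree $\hat \bx\^t$ differs from $\hat \bx\^{t-1}$ only in that, for each $v \in \hat\ML_t'$ to which step~\ref{it:decrease-ldim} applies, we attach $k_v + 1$ new children via the reducing array produced by Lemma~\ref{lem:not-k-irred}, where $k_v \leq k_t$. The edge from $v$ to its $j$-th new child is labeled by the tuple $\bt\^j \in \{-1,1\}^{j \wedge k_v}$, which has length at most $k_v$. By the definition of height in Definition~\ref{def:generalized-tree} (the height of a node is the sum of the edge-label lengths along its root-to-node path), each newly created leaf $v'$ therefore satisfies
\[
\height(v') \;\leq\; \height(v) + k_v \;\leq\; (k_t - k') + k_t \;=\; 2 k_t - k' \;=\; k_{t+1} - k',
\]
using $k_{t+1} = k' \cdot 2^{t+1} = 2 k_t$.

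Every leaf of $\hat \bx\^t$ is either a leaf already present in $\hat \bx\^{t-1}$ (of height at most $k_t - k' \leq k_{t+1} - k'$), or one of the newly attached children, which we have just bounded. Hence $\height(\hat \bx\^t) \leq k_{t+1} - k'$, completing the induction. Taking $t = t_{\final}$ yields $\height(\hat \bx) = \height(\hat \bx\^{t_{\final}}) \leq k_{t_{\final}+1} - k'$, as claimed.

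There is no substantive obstacle here: the argument is a straightforward induction, and the only point requiring care is keeping track of the fact that the edges of a generalized tree carry tuple labels of length up to $k_v$ (not just single bits), so that the height can increase by as much as $k_v$ at each step of the recursion rather than by $1$. The factor-of-two doubling built into the choice $k_t = k' \cdot 2^t$ in step~\ref{it:define-kt} of Algorithm~\ref{alg:reduce-tree} is precisely what makes the induction close.
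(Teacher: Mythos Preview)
Your proof is correct and follows essentially the same approach as the paper's: both argue by induction on $t$, with the base case $\height(\hat\bx\^0)=0\le k_1-k'$ and the inductive step $\height(\hat\bx\^t)\le\height(\hat\bx\^{t-1})+k_t\le (k_t-k')+k_t=k_{t+1}-k'$ using that each new edge label has length at most $k_v\le k_t$. Your version spells out the leaf-by-leaf height accounting a bit more explicitly, but the argument is the same.
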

\begin{proof}
  We prove that $\height(\hat \bx\^t) \leq k_{t+1} - k' = k' \cdot 2^{t+1} - k'$ by induction. For the base case, note that $\height(\hat \bx\^0) = 0 < 2k' - k' = k' \cdot 2^1 - k'$. Since at step~\ref{it:decrease-ldim} of the algorithm, in the $t$th iteration, each leaf is labeled with a tuple of length at most $k_t$, it follows that
  $$
  \height(\hat \bx\^t) \leq \height(\hat \bx\^{t-1}) + k_t \leq k_t - k' + k_t = k_{t+1} - k',
  $$
  for all $t \geq 1$. The lemma statement follows since $\hat \bx = \hat \bx\^{t_{\final}}$. 
\end{proof}

For each $\alpha \in [0,1]$ and $t \in [d+1]$, define the set:
\begin{equation}
\label{eq:big-max}
\MM_{\alpha,t} := \left\{ S \in (\MX \times \{-1,1\})^{k_t - k'}  : \substack{
\text{$\MF_{P,\alpha-\alpha_\Delta/3}|_{S}$ is $k_t$-irreducible and nonempty,}  \\ 
\text{and $\Ldim(\MF_{\di,\alpha-\alpha_\Delta/3}|_{S}) = \Ldim(\MF_{\di,\alpha+\alpha_\Delta/3}|_{S})$}
}  \right\}
\end{equation}
\begin{lemma}
\label{lem:big-max-defined}
Suppose that the event $E_{\good}$ occurs. Then for $t = t_{\final} + 1$, the set $\MM_{\alpha_t - \alpha_\Delta/2, t}$ is nonempty.
\end{lemma}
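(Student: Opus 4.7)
The plan is to exhibit an explicit element of $\MM_{\alpha_t - \alpha_\Delta/2, t}$ using the leaf guaranteed by Lemma~\ref{lem:exists-irred-node}. Concretely, for $t = t_{\final} + 1$, Lemma~\ref{lem:exists-irred-node} gives a leaf $v \in \hMLp$ such that $\Ldim(\gRes{\alpha_t - \alpha_\Delta}{v}) = \Ldim(\gRes{\alpha_t}{v}) \geq 0$ and $\gRes{\alpha_t - \alpha_\Delta}{v} = \MF_{\emp, \alpha_t - \alpha_\Delta}|_{\ba(v)}$ is $k_t$-irreducible. By Lemma~\ref{lem:height-ub}, $|\ba(v)| \leq \height(\hat\bx) \leq k_{t_{\final}+1} - k' = k_t - k'$, so after padding with duplicate entries (which leave the restriction $\cdot|_{\ba(v)}$ unchanged) we may regard $S := \ba(v)$ as an element of $(\MX \times \{-1,1\})^{k_t - k'}$.

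Next I would translate from empirical to population errors using the event $E_{\good}$. Applying (\ref{eq:emp-exp-compare}) with $\alpha_0 = \alpha_\Delta/6$, for each $\alpha$ we have $\MF_{\emp, \alpha - \alpha_\Delta/3} \subset \MF_{\di, \alpha - \alpha_\Delta/6} \subset \MF_{\emp, \alpha}$. Taking restrictions to $\ba(v)$ preserves inclusions, so with $\alpha = \alpha_t - 2\alpha_\Delta/3$ and $\alpha = \alpha_t$ respectively I would obtain the chain
\begin{equation*}
\MF_{\emp, \alpha_t - \alpha_\Delta}|_{\ba(v)} \;\subset\; \MF_{\di, \alpha_t - 5\alpha_\Delta/6}|_{\ba(v)} \;\subset\; \MF_{\emp, \alpha_t - 2\alpha_\Delta/3}|_{\ba(v)} \;\subset\; \MF_{\di, \alpha_t - \alpha_\Delta/6}|_{\ba(v)} \;\subset\; \MF_{\emp, \alpha_t}|_{\ba(v)}.
\end{equation*}
Since the two extreme classes have the same Littlestone dimension by Lemma~\ref{lem:exists-irred-node}, every class in the chain does as well. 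In particular, writing $\alpha := \alpha_t - \alpha_\Delta/2$, we obtain
\[
\Ldim(\MF_{\di, \alpha - \alpha_\Delta/3}|_S) \;=\; \Ldim(\MF_{\di, \alpha_t - 5\alpha_\Delta/6}|_{\ba(v)}) \;=\; \Ldim(\MF_{\di, \alpha_t - \alpha_\Delta/6}|_{\ba(v)}) \;=\; \Ldim(\MF_{\di, \alpha + \alpha_\Delta/3}|_S),
\]
which is the second condition in the definition (\ref{eq:big-max}).

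For the remaining conditions, I would invoke the ``monotonicity'' Lemma~\ref{lem:red-order}: since $\MF_{\emp, \alpha_t - \alpha_\Delta}|_{\ba(v)} \subset \MF_{\di, \alpha_t - 5\alpha_\Delta/6}|_{\ba(v)}$, they have equal Littlestone dimension, and the smaller one is $k_t$-irreducible, the larger class $\MF_{\di, \alpha - \alpha_\Delta/3}|_S = \MF_{\di, \alpha_t - 5\alpha_\Delta/6}|_S$ is also $k_t$-irreducible. Nonemptiness is automatic because it contains $\MF_{\emp, \alpha_t - \alpha_\Delta}|_{\ba(v)}$, which has Littlestone dimension $\geq 0$ and hence is nonempty. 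This verifies $S \in \MM_{\alpha_t - \alpha_\Delta/2, t}$.

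There is no real technical obstacle beyond bookkeeping: the only thing to be careful about is aligning the parameter shifts ($\alpha_\Delta/6$ from $E_{\good}$, $\alpha_\Delta/3$ from the $\MM_{\alpha, t}$ tolerance, $\alpha_\Delta$ between consecutive $\alpha_t$'s, and the $\alpha_\Delta/2$ offset in the statement) so that a single chain of inclusions simultaneously yields the Littlestone-dimension equality, the $k_t$-irreducibility transfer, and the nonemptiness. The cardinality mismatch between $\ba(v)$ and $(\MX \times \{-1,1\})^{k_t - k'}$ is a minor issue handled by padding with repeated entries.
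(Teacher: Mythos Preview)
Your proposal is correct and follows essentially the same approach as the paper's proof: both take the leaf $v$ from Lemma~\ref{lem:exists-irred-node}, use $E_{\good}$ to sandwich the population classes $\MF_{\di,\alpha_t - 5\alpha_\Delta/6}|_{\ba(v)}$ and $\MF_{\di,\alpha_t - \alpha_\Delta/6}|_{\ba(v)}$ between empirical classes of equal Littlestone dimension, invoke Lemma~\ref{lem:red-order} to transfer $k_t$-irreducibility, and pad $\ba(v)$ to length $k_t - k'$ using Lemma~\ref{lem:height-ub}. The only differences are cosmetic (you insert the extra intermediate class $\MF_{\emp,\alpha_t-2\alpha_\Delta/3}|_{\ba(v)}$ and spell out the parameter bookkeeping more explicitly).
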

\begin{proof}
Let $v$ be a node as guaranteed by Lemma~\ref{lem:exists-irred-node}, i.e., so that $\gRes{\alpha_t - \alpha_\Delta}{v}$ is $k_t$-irreducible, and so that $\Ldim(\gRes{\alpha_t - \alpha_\Delta}{v}) = \Ldim(\gRes{\alpha_t}{v}) \geq 0$. Since the event $E_{\good}$ holds,
$$
\gRes{\alpha_t - \alpha_\Delta}{v} = \MF_{\emp, \alpha_t - \alpha_\Delta}|_{\ba(v)} \subset \MF_{\di, \alpha_t - 5\alpha_\Delta/6}|_{\ba(v)} \subset \MF_{\di, \alpha_t - \alpha_\Delta/6}|_{\ba(v)} \subset \MF_{\emp, \alpha_t}|_{\ba(v)} = \gRes{\alpha_t}{v}.
$$
It follows from Lemma~\ref{lem:red-order} that $\MF_{\di, \alpha_t - 5\alpha_\Delta/6}|_{\ba(v)}$ is $k_t$-irreducible and that $\Ldim(\MF_{\di, \alpha_t - \alpha_\Delta/6}|_{\ba(v)}) = \Ldim(\MF_{\di, \alpha_t - 5\alpha_\Delta/6}|_{\ba(v)})$. Since the height of the tree $\hat \bx\^{t-1} = \hat \bx\^{t_{\final}}$ is at most $k_t - k'$ (Lemma~\ref{lem:height-ub}), it follows that the number of tuples in $\ba(v)$ is at most $k_t - k'$; thus, after duplicating some of the tuples in $\ba(v)$ if necessary, we get that $\ba(v) \in \MM_{\alpha-\alpha_\Delta/2,t}$. 
\end{proof}

For any $\alpha \in [0,1],t \in [d+1]$ for which $\MM_{\alpha,t}$ is nonempty, define:
  \begin{align}
  \label{eq:big-arg-max}
  S_{\alpha,t}^\st \in \argmax_{S \in \MM_{\alpha,t}} \left\{ \Ldim(\MF_{\di, \alpha} |_{S})\right\}, \qquad \ell_{\alpha,t}^\st := \max_{S \in \MM_{\alpha,t}} \{ \Ldim(\MF_{\di,\alpha}|_S) \} \geq 0.
\end{align}
Also set
\begin{equation}
\label{eq:sigma-at}
\sigma_{\alpha,t}^\st := \soaf{\MF_{\di,\alpha}|_{S_{\alpha,t}^\st}}. 
\end{equation}
We emphasize here that $\MM_{\alpha,t}$ and $S_{\alpha,t}^\st$ are both independent of the output of the algorithm \ReduceTree (and in particular, they do not depend on the particular input dataset $S_n$).

\begin{lemma}
  \label{lem:equal-to-a-leaf}
  Under the event $E_{\good}$, the following holds: for $t = t_{\final}+1 \in [d+1]$ and some leaf $\hat v \in \hMLp$, we have $\sigma_{\alpha_t - \alpha_\Delta/2, t}^\st = \soaf{\gRes{\alpha_t - 2\alpha_\Delta/3}{\hat v}}$. (In particular, for this $t$, $\sigma_{\alpha_t - \alpha_\Delta/2, t}^\st$ is well-defined, i.e., $\MM_{\alpha_t - \alpha_\Delta/2,t}$ is nonempty.)
  
  Moreover, ${\gRes{\alpha_t - 2\alpha_\Delta/3}{\hat v}}$ is $k'$-irreducible and nonempty, and $\Ldim({\gRes{\alpha_t - 2\alpha_\Delta/3}{\hat v}}) = \ell_{\alpha_t - \alpha_\Delta/2,t}^\st \geq 0$.
\end{lemma}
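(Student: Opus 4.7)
The plan is to apply Lemma~\ref{lem:k-reducible-soa-gen} with the following choices: $\MH := \MF_{\di, \alpha_t - 5\alpha_\Delta/6}$, $\MG := \MF_{\di, \alpha_t - \alpha_\Delta/6}$, $S^\st := S_{\alpha_t - \alpha_\Delta/2, t}^\st$, generalized tree $\bx := \hat \bx$, and parameters $k := k_t$ (so $k - k' = k_t - k'$ matches the length of $S^\st$). Lemma~\ref{lem:big-max-defined} ensures that $\MM_{\alpha_t - \alpha_\Delta/2, t}$ is nonempty, so $S^\st$ and $\ell^\st := \ell_{\alpha_t - \alpha_\Delta/2, t}^\st \geq 0$ are well-defined. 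The definition of $\MM_{\alpha_t - \alpha_\Delta/2, t}$ gives directly that $\MH|_{S^\st}$ is $k_t$-irreducible and that $\Ldim(\MH|_{S^\st}) = \Ldim(\MG|_{S^\st})$; since $\MF_{\di, \alpha_t - \alpha_\Delta/2}|_{S^\st}$ is sandwiched between these (by monotonicity of $\MF_{\di,\alpha}$ in $\alpha$), the common value is $\Ldim(\MF_{\di, \alpha_t - \alpha_\Delta/2}|_{S^\st}) = \ell^\st$, verifying (\ref{eq:yat}) from Lemma~\ref{lem:k-reducible-soa-gen}.

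Next I would verify the remaining hypotheses on $\hat \bx$. Lemma~\ref{lem:height-ub} gives $\height(\hat\bx) \leq k_{t_\final+1} - k' = k_t - k'$. For the bound $\Ldim(\MG|_{\ba(v)}) \leq \ell^\st$ at every leaf $v$ of $\hat\bx$, combine the $E_{\good}$-inclusion $\MG|_{\ba(v)} = \MF_{\di, \alpha_t-\alpha_\Delta/6}|_{\ba(v)} \subset \MF_{\emp,\alpha_t}|_{\ba(v)} = \gRes{\alpha_t}{v}$ with $\Ldim(\gRes{\alpha_t}{v}) \leq \hat w_t^\st$. It remains to show $\hat w_t^\st \leq \ell^\st$: the proof of Lemma~\ref{lem:big-max-defined} exhibits a leaf $v^\circ$ (the one from Lemma~\ref{lem:exists-irred-node}) whose padded ancestor set lies in $\MM_{\alpha_t - \alpha_\Delta/2, t}$ and satisfies $\Ldim(\MF_{\di, \alpha_t-\alpha_\Delta/2}|_{\ba(v^\circ)}) = \hat w_t^\st$, so $\ell^\st \geq \hat w_t^\st$ by maximality in (\ref{eq:big-arg-max}).

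Applying Lemma~\ref{lem:k-reducible-soa-gen} now produces a leaf $\hat v$ of $\hat\bx$ with $\MH|_{\ba(\hat v)}$ being $k'$-irreducible and $\Ldim(\MH|_{\ba(\hat v)}) = \Ldim(\MG|_{\ba(\hat v)}) = \ell^\st$. Taking $\MJ = \MJ' = \MF_{\di, \alpha_t - \alpha_\Delta/2}$ (which satisfies $\MH \subset \MJ \subset \MG$ since $\alpha_t - 5\alpha_\Delta/6 \leq \alpha_t - \alpha_\Delta/2 \leq \alpha_t - \alpha_\Delta/6$), the lemma yields $\sigma_{\alpha_t-\alpha_\Delta/2, t}^\st = \soaf{\MJ|_{S^\st}} = \soaf{\MH|_{\ba(\hat v)}}$. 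To rewrite this as $\soaf{\gRes{\alpha_t - 2\alpha_\Delta/3}{\hat v}}$, observe that under $E_{\good}$ we have the chain $\MH|_{\ba(\hat v)} \subset \gRes{\alpha_t - 2\alpha_\Delta/3}{\hat v} \subset \MG|_{\ba(\hat v)}$, so all three classes have Littlestone dimension $\ell^\st$. Lemma~\ref{lem:red-order} then lifts $k'$-irreducibility from $\MH|_{\ba(\hat v)}$ to $\gRes{\alpha_t - 2\alpha_\Delta/3}{\hat v}$ (giving the moreover clause, with nonemptiness inherited from $\MH|_{\ba(\hat v)}$ since $\ell^\st \geq 0$), and Lemma~\ref{lem:red-hierarchy} gives $\soaf{\MH|_{\ba(\hat v)}} = \soaf{\gRes{\alpha_t - 2\alpha_\Delta/3}{\hat v}}$, completing the SOA equation. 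Finally, $\hat v \in \hMLp$ since $\Ldim(\gRes{\alpha_t}{\hat v}) \geq \Ldim(\MG|_{\ba(\hat v)}) = \ell^\st \geq \hat w_t^\st$ forces equality and membership in $\hat \ML_t'$.

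The main obstacle is bookkeeping: the definition of $\MM_{\alpha,t}$ involves the shifts $\alpha \pm \alpha_\Delta/3$, while $\gRes{\cdot}{\cdot}$ uses the empirical threshold $\alpha_t - 2\alpha_\Delta/3$, and the uniform convergence slack from $E_{\good}$ is $\alpha_\Delta/6$. The choices $\MH = \MF_{\di,\alpha_t-5\alpha_\Delta/6}$ and $\MG = \MF_{\di,\alpha_t-\alpha_\Delta/6}$ are the unique pair that simultaneously (i) matches the $\MM$-definition's $\pm\alpha_\Delta/3$ around $\alpha_t - \alpha_\Delta/2$, (ii) sandwiches $\MF_{\emp, \alpha_t - 2\alpha_\Delta/3}|_{\ba(\hat v)}$ under $E_{\good}$, and (iii) leaves $\MF_{\emp,\alpha_t}|_{\ba(v)}$ containing $\MG|_{\ba(v)}$ for the global leaf bound; once one notices this alignment the argument is essentially forced.
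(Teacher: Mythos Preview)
Your proposal is correct and follows essentially the same approach as the paper: the same choices $\MH = \MF_{\di,\alpha_t - 5\alpha_\Delta/6}$, $\MG = \MF_{\di,\alpha_t - \alpha_\Delta/6}$, $S^\st = S^\st_{\alpha_t-\alpha_\Delta/2,t}$ are made, Lemma~\ref{lem:k-reducible-soa-gen} is invoked with the same parameters, and the transfer from $\MH|_{\ba(\hat v)}$ to the empirical class $\gRes{\alpha_t-2\alpha_\Delta/3}{\hat v}$ goes through Lemmas~\ref{lem:red-order} and~\ref{lem:red-hierarchy} just as in the paper. The only minor differences are organizational: you obtain $\hat w_t^\st \leq \ell^\st$ by citing the proof of Lemma~\ref{lem:big-max-defined} (the paper re-derives this via the leaf $v'$ from Lemma~\ref{lem:exists-irred-node}), and in your statement ``the lemma yields $\ldots = \soaf{\MH|_{\ba(\hat v)}}$'' you should either take $\MJ' = \MH$ (not $\MJ' = \MF_{\di,\alpha_t-\alpha_\Delta/2}$) or write $\soaf{\MJ'|_{\ba(\hat v)}}$---though the conclusion is unaffected since Lemma~\ref{lem:k-reducible-soa-gen} gives the same SOA for all admissible $\MJ'$.
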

\begin{proof}
  By Lemma~\ref{lem:exists-irred-node}, for $t = t_{\final} + 1 \in [d+1]$, there is some leaf $v' \in \hMLp$ so that $\Ldim(\MF_{\emp,\alpha_t - \alpha_\Delta}|_{\ba(v')}) = \Ldim(\MF_{\emp, \alpha_t}|_{\ba(v')}) \geq 0$ and $\MF_{\emp, \alpha_t - \alpha_\Delta}|_{\ba(v')}$ is $k_{t}$-irreducible. Under the event $E_{\good}$, for each node $v$ of the tree $\hat \bx$ output by the algorithm, we have that
  \begin{align}
  \MF_{\emp, \alpha_t - \alpha_\Delta}|_{\ba(v)} \subset
  \MF_{\di,\alpha_t - 5 \alpha_\Delta/6}|_{\ba(v)} \subset
    \MF_{\emp, \alpha_t - 4\alpha_\Delta/6}|_{\ba(v)} \nonumber\\
    \label{eq:6-inclusions}
    \subset
    \MF_{\di, \alpha_t - 3\alpha_\Delta/6}|_{\ba(v)} \subset
  \MF_{\emp, \alpha_t - 2\alpha_\Delta/6}|_{\ba(v)} \subset
  \MF_{\di, \alpha_t - \alpha_\Delta/6}|_{\ba(v)} \subset \MF_{\emp, \alpha_t}|_{\ba(v)}.
  \end{align}
  Now we apply Lemma~\ref{lem:k-reducible-soa-gen} with $\MJ = \MJ' = \MF_{\di, \alpha_t - \alpha_\Delta/2}$, $\MH = \MF_{\di, \alpha_t - 5\alpha_\Delta/6}$, $\MG = \MF_{\di, \alpha_t - \alpha_\Delta/6}$, $k = k_t, k' = k'$, $\bx$ equal to the tree $\hat \bx = \hat \bx\^{t_{\final}}$ output by \ReduceTree, and $S^\st = S_{\alpha_t - \alpha_\Delta/2,t}^\st$. Since $t = t_{\final} + 1$, Lemma~\ref{lem:big-max-defined} guarantees that $S^\st$ is well-defined (in particular, that $\MM_{\alpha_t - \alpha_\Delta/2,t}$ is nonempty).  We check that the preconditions of Lemma~\ref{lem:k-reducible-soa-gen} hold: Notice that (\ref{eq:yat}) holds by the definitions (\ref{eq:big-max}) and (\ref{eq:big-arg-max}), and that $\MH|_{S^\st} = \MF_{\di, \alpha_t - \alpha_\Delta/2 - \alpha_\Delta/3}|_{S^\st}$ is $k_t$-irreducible, also by (\ref{eq:big-max}) and (\ref{eq:big-arg-max}). By definition of $\ell_{\alpha,t}^\st$ in (\ref{eq:big-arg-max}), we have
  $$
  \ell_{\alpha_t - \alpha_\Delta/2,t}^\st = \Ldim(\MF_{\di, \alpha_t - 5\alpha_\Delta/6}|_{S^\st}) = \Ldim(\MF_{\di, \alpha_t - \alpha_\Delta/6}|_{S^\st}).
  $$
 Lemma~\ref{lem:height-ub} establishes that $\height(\hat \bx) \leq k_t - k'$. Also, from the guarantee on $v'$ in Lemma~\ref{lem:exists-irred-node}, (\ref{eq:6-inclusions}), and Lemma~\ref{lem:red-order}, we have  $\Ldim(\MF_{\di,\alpha_t-5\alpha_\Delta/6}|_{\ba(v')}) = \Ldim(\MF_{\di,\alpha_t - \alpha_\Delta/6}|_{\ba(v')})$ and $\MF_{\di,\alpha_t - 5\alpha_\Delta/6}|_{\ba(v')}$ is $k_t$-irreducible. Thus $\ba(v') \in \MM_{\alpha_t - \alpha_\Delta/2, t}$, so by definition of $\ell_{\alpha,t}^\st$, 
  $$
\ell_{\alpha_t - \alpha_\Delta/2,t}^\st \geq \Ldim(\MF_{\di, \alpha_t - \alpha_\Delta/2}|_{\ba(v')}).
$$
Moreover, for any other leaf $u$ of the tree $\hat\bx$, we have, by definition of $\hMLp = \hat \ML_{t_{\final} + 1}'$,
$$
\Ldim(\MF_{\di,\alpha_t - \alpha_\Delta/6}|_{\ba(u)}) \leq \Ldim(\MF_{\emp, \alpha_t}|_{\ba(u)}) \leq \Ldim(\MF_{\emp, \alpha_t}|_{\ba(v')}) = \Ldim(\MF_{\di, \alpha_t - \alpha_\Delta/2}|_{\ba(v')}) \leq \ell_{\alpha_t - \alpha_\Delta/2,t}^\st.
$$
(In more detail, the first inequality above holds due to (\ref{eq:6-inclusions}), the second inequality is due to the fact that $v' \in \hMLp = \hat \ML_{t_{\final} +1}'$ (see step~\ref{it:sup-ldim-alg} of \ReduceTree), and the equality holds due to (\ref{eq:6-inclusions}) and $\Ldim(\MF_{\emp,\alpha_t - \alpha_\Delta}|_{\ba(v')}) = \Ldim(\MF_{\emp, \alpha_t}|_{\ba(v')})$). 
Then the hypotheses of Lemma~\ref{lem:k-reducible-soa-gen} hold and letting $\MJ' = \MJ = \MF_{\di, \alpha_t - \alpha_\Delta/2}$, it follows that for some leaf $\hat v$ of $\hat \bx$, we have
$$
\sigma_{\alpha_t - \alpha_\Delta/2, t}^\st = \soaf{\MF_{\di, \alpha_t - \alpha_\Delta/2}|_{S^\st}} = \soaf{\MF_{\di, \alpha_t-\alpha_\Delta/2}|_{\ba(\hat v)}}
$$
as well as $\Ldim(\MF_{\di,\alpha_t-5\alpha_\Delta/6}|_{\ba(\hat v)}) = \Ldim(\MF_{\di,\alpha_t-\alpha_\Delta/6}|_{\ba(\hat v)}) = \ell_{\alpha_t - \alpha_\Delta/2,t}^\st$, and that $\MF_{\di,\alpha_t-5\alpha_\Delta/6}|_{\ba(\hat v)}$ is $k'$-irreducible. From (\ref{eq:6-inclusions}), it follows that $\Ldim(\MF_{\emp,\alpha_t-4\alpha_\Delta/6}|_{\ba(\hat v)}) = \Ldim(\MF_{\emp,\alpha_t-2\alpha_\Delta/6}|_{\ba(\hat v)}) = \ell_{\alpha_t - \alpha_\Delta/2,t}^\st \geq 0$, and that $\MF_{\emp,\alpha_t-4\alpha_\Delta/6}|_{\ba(\hat v)} =\gRes{\alpha_t - 2\alpha_\Delta/3}{\hat v}$ is $k'$-irreducible. 
Then by (\ref{eq:6-inclusions}) and Lemma~\ref{lem:red-hierarchy}, we have
$$
\sigma_{\alpha_t - \alpha_\Delta/2,t}^\st = \soaf{\MF_{\di, \alpha_t - \alpha_\Delta/2}|_{\ba(\hat v)}} = \soaf{\MF_{\emp, \alpha_t - 2\alpha_\Delta/3}|_{\ba(\hat v)}} = \soaf{\gRes{\alpha_t - 2\alpha_\Delta/3}{\hat v}}.
$$

Finally we check that $\hat v \in \hMLp = \hat \ML_t' = \hat \ML_{t_{\final}+1}'$, i.e., all leaves $u$ of the tree $\hat \bx$ satisfy $\Ldim(\MF_{\emp, \alpha_t}|_{\ba(u)}) \leq \Ldim(\MF_{\emp, \alpha_t}|_{\ba(\hat v)})$. This is a consequence of the fact that for all such $u$,
$$
\Ldim(\MF_{\emp,\alpha_t}|_{\ba(\hat v)}) \geq \Ldim(\MF_{\emp,\alpha_t - 2\alpha_\Delta/6}|_{\ba(\hat v)})  =  \ell_{\alpha_t - \alpha_\Delta/2, t}^\st \geq \Ldim(\MF_{\emp,\alpha_t}|_{\ba(v')}) \geq \Ldim(\MF_{\emp,\alpha_t}|_{\ba(u)}),
$$
since $v' \in \hMLp$.
\end{proof}

\begin{lemma}
  \label{lem:s-ub}
The set $\hat \MS$ output by \ReduceTree has size $|\hat \MS| \leq \prod_{t=1}^{d} (k_t + 1)$.
\end{lemma}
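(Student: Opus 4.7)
The plan is to reduce the claim to a bound on the number of leaves of the generalized tree $\hat \bx$ output by \ReduceTree, and then bound that number of leaves by tracking how fast $|\hat \ML_t|$ can grow across iterations. Indeed, by the definition (\ref{eq:redtree-output}) of $\hat \MS$, the map $v \mapsto \soaf{\gRes{\alpha_{t_{\final}+1} - 2\alpha_\Delta/3}{v}}$ gives a surjection from a subset of $\hMLp = \hat \ML_{t_{\final}+1}'$ onto $\hat \MS$, so it suffices to show $|\hat \ML_{t_{\final}+1}| \leq \prod_{t=1}^d (k_t+1)$.

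First I would establish by induction on $t$ the recurrence $|\hat \ML_{t+1}| \leq (k_t + 1) \cdot |\hat \ML_t|$ for each iteration $t \in \{1, \ldots, t_{\final}\}$ of the outer loop in \ReduceTree. The key observation is that $\hat \ML_{t+1}$ is obtained from $\hat \ML_t$ by leaving nodes in $\hat \ML_t \setminus \hat \ML_t'$ untouched, leaving those $v \in \hat \ML_t'$ with $\Ldim(\gRes{\alpha_t-\alpha_\Delta}{v}) < \Ldim(\gRes{\alpha_t}{v})$ untouched, and replacing each remaining $v \in \hat \ML_t'$ by its $k_v + 1 \leq k_t + 1$ children constructed in step~\ref{it:decrease-ldim}. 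Thus
\[
|\hat \ML_{t+1}| \;\leq\; |\hat \ML_t \setminus \hat \ML_t'| + (k_t + 1) \cdot |\hat \ML_t'| \;\leq\; (k_t + 1) \cdot |\hat \ML_t|.
\]

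Since $|\hat \ML_1| = 1$ (the initial tree $\hat \bx\^0$ has a single leaf $v_0$), iterating the recurrence gives $|\hat \ML_{t_{\final}+1}| \leq \prod_{t=1}^{t_{\final}} (k_t + 1)$. As $t_{\final} \leq d$ and each factor $k_t + 1 \geq 1$, this is at most $\prod_{t=1}^d (k_t + 1)$, finishing the proof. I do not anticipate any real obstacle here: the argument is purely a combinatorial bound on the branching of the generalized tree, and all the structural information needed is already laid out in the description of Algorithm~\ref{alg:reduce-tree}.
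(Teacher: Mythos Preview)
Your proposal is correct and follows essentially the same approach as the paper's proof: both reduce to bounding the number of leaves of $\hat\bx\^{t}$ and use the multiplicative recurrence $|\hat\ML_{t+1}| \leq (k_t+1)|\hat\ML_t|$ starting from $|\hat\ML_1|=1$. Your version is slightly more explicit in breaking down the cases for the leaves at step $t$, but the argument is the same.
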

\begin{proof}
It suffices to show that for $t \in [d]$, the tree $\bx\^t$ has at most $\prod_{t'=1}^t (k_{t'} + 1)$ leaves. In turn, this is a simple consequence of the fact that $\bx\^0$ has a single leaf, and the tree $\bx\^t$ is formed by adding at most $k_t+1$ leaves to some of the leaves of the tree $\bx\^{t-1}$. 
\end{proof}

\subsection{Building block: sparse selection protocol}
\label{sec:sparse-selection}
We use the following primitive for solving the {\it sparse selection} problem from~\cite{ghazi_differentially_2020}:
\begin{defn}[Sparse selection]
  For $m, \ell \in \BN$, in {\it $(m,\ell)$-sparse selection} problem, there is some (possibly infinite) universe $\MU$, and $m$ users. Each user $i \in [m]$ is given some set $\MS_i \subset \MU$ of size $| \MS_i | \leq \ell$. An algorithm solves the $(m,\ell)$-sparse selection problem with with additive error $\eta$ if it outputs some universe element $\hat u \in \MU$ such that
  \begin{equation}
    \label{eq:sparse-sel-err}
| \{ i : \hat u \in \MS_i \}| \geq \max_{u \in \MU} | \{ i : u \in \MS_i \} | - \eta.
\end{equation}
\end{defn}

Proposition~\ref{prop:sparse-selection} shows that the sparse selection problem can be solved privately with error independent of the size of the universe $\MU$. It can be thought of as an analogue of the private stable histogram of~\cite[Proposition 2.20]{bun_simultaneous_2016} for the problem of private selection. 
\begin{proposition}[\cite{ghazi_differentially_2020}, Lemma 36]
  \label{prop:sparse-selection}
  For $\ep \in (0,1]$, $\delta \in (0,1)$, $\beta \in (0,1)$, there is an $(\ep,\delta)$-differentially private algorithm that given an input dataset to the $(m,\ell)$-sparse selection problem, outputs a universe element $\hat u$ such that with probability at least $1-\beta$, the error of $\hat u$ is 
  $$
O \left( \frac{1}{\ep} \log \left( \frac{m\ell}{\ep \delta \beta}\right) \right).
  $$
\end{proposition}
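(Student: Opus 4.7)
\ The approach I would take is to combine a stable histogram sub-routine with the exponential mechanism, each allocated roughly half of the privacy budget.

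Although the universe $\MU$ may be infinite, the element-count histogram is supported on at most $m\ell$ elements, namely $\bigcup_{i \in [m]} \MS_i$, so a standard stable-histogram primitive still applies. I would first invoke such a primitive (e.g., Proposition~2.20 of~\cite{bun_simultaneous_2016}) with parameters $(\ep/2,\delta)$ to produce, with probability at least $1-\beta/2$, a list $C \subset \MU$ of size at most $m\ell$ that contains every $u \in \MU$ whose count $q(u) := |\{i : u \in \MS_i\}|$ exceeds a threshold $\tau = O\!\bigl(\ep^{-1} \log(1/(\delta\beta))\bigr)$. This step is $(\ep/2,\delta)$-\DP since each bucket has sensitivity $1$ with respect to the change of a single user's set.

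Next, I would run the exponential mechanism on the (now finite) set $C$ with utility $q(\cdot)$ (which also has sensitivity $1$) and privacy parameter $\ep/2$, outputting some $\hat u \in C$. The standard exponential-mechanism guarantee yields $q(\hat u) \geq \max_{u \in C} q(u) - O\!\bigl(\ep^{-1}\log(|C|/\beta)\bigr)$ with probability at least $1-\beta/2$, and basic composition makes the overall procedure $(\ep,\delta)$-\DP. For utility, by a union bound both sub-routines succeed with probability $\geq 1-\beta$; letting $u^\star \in \argmax_{u \in \MU} q(u)$, if $q(u^\star) > \tau$ then the histogram guarantees $u^\star \in C$ and the error of $\hat u$ reduces to the exponential-mechanism error, whereas if $q(u^\star) \leq \tau$ then the error is trivially at most $\tau$. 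Either way the additive error is $O\!\bigl(\ep^{-1}\log(m\ell/(\ep\delta\beta))\bigr)$, matching the claim.

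The one subtlety worth flagging is that the exponential mechanism requires a finite (or at least effectively enumerable) action set, which is exactly what the stable histogram provides; once the reduction from $\MU$ to the explicit set $C$ is in place, the remainder is a routine privacy-composition and utility-union-bound argument, so I do not anticipate a substantial obstacle beyond being careful about this reduction.
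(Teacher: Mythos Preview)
The paper does not prove this proposition; it is quoted from \cite{ghazi_differentially_2020}. So there is no ``paper's own proof'' to compare against, and I will simply assess your argument.

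Your high-level strategy---shrink the infinite universe to a finite candidate set and then run the exponential mechanism---is indeed the right idea, and this is essentially what the cited source does. However, there is a real gap in your privacy analysis of the histogram step. You write that the stable histogram is $(\ep/2,\delta)$-\DP\ ``since each bucket has sensitivity $1$,'' but Proposition~2.20 of \cite{bun_simultaneous_2016} is stated for users who contribute a \emph{single} item. In the $(m,\ell)$-sparse selection problem a single user's set $\MS_i$ touches up to $\ell$ buckets, so swapping one user can change up to $2\ell$ counts simultaneously. Because the stable histogram releases an independent thresholded-Laplace decision for every nonzero bucket, the privacy losses over these $2\ell$ buckets compose, and the mechanism is only $(O(\ell\ep'),\cdot)$-\DP\ when each bucket is $(\ep',\cdot)$-\DP. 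Pushing this through forces the Laplace scale (and hence the threshold $\tau$) to grow linearly in $\ell$, yielding an error of $O\bigl(\tfrac{\ell}{\ep}\log\tfrac{m\ell}{\delta\beta}\bigr)$ rather than the claimed $O\bigl(\tfrac{1}{\ep}\log\tfrac{m\ell}{\ep\delta\beta}\bigr)$. In the paper's application $\ell = 2^{\tilde O(d_{\LL}^2)}$, so this extra factor would be fatal.

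The repair is not to release a histogram at all. One clean route is a ``sparse'' exponential mechanism: sample directly from $\{u : q(u)\ge 1\}\cup\{\perp\}$ with weights $\exp(\ep q(u)/2)$, where the dummy $\perp$ is given a fixed score $\tau = O\bigl(\tfrac{1}{\ep}\log\tfrac{\ell}{\delta}\bigr)$. The point is that only \emph{one} element is output, so the $\delta$-slack need only absorb the probability of outputting an element whose count is $1$ under $S$ but $0$ under $S'$; there are at most $\ell$ such elements, each with weight $e^{\ep/2}$, while the normalizer is at least $e^{\ep\tau/2}$, making their total mass at most $\delta$. For all other elements the usual exponential-mechanism ratio bound applies (the normalizer shifts by at most a multiplicative $e^{O(\ep)}$ once the dummy dominates the $\ell$ stray terms). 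This gives $(O(\ep),\delta)$-privacy with the stated error, without the extraneous $\ell$ factor. Your exponential-mechanism step and the final case split on $q(u^\star)\gtrless\tau$ are fine once this is in place.
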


\subsection{Overall algorithm}
In this section we combine the components of Sections~\ref{sec:reducetree} and~\ref{sec:sparse-selection} to prove the following theorem, which gives an improper learner for hypothesis classes with sample complexity polynomial in the Littlestone dimension.
\label{sec:improper-alg}
\begin{theorem}
  \label{thm:poly-pri-learn}
  Let $\MF$ be a concept class of domain $\MX$ with $d_{\VV} := \vc(\MF), d_{\LL} := \Ldim(\MF)$. For any $\ep, \delta, \eta \in (0,1)$, for some
  $$
n = O \left( \frac{d_{\LL}^5 d_{\VV} \log^2 \left( \frac{d_{\LL}}{\ep \delta \eta \beta}\right)}{\ep\eta^2} \right)
$$
the algorithm \PolyPriLearn (Algorithm~\ref{alg:poly-pri-learn})  
takes as input $n$ i.i.d.~samples from any realizable distribution $\di$, is $(\ep, \delta)$-\DP, and produces a hypothesis $\hat f$ so that $\err{\di}{\hat f} \leq \eta$ with probability at least $1-\beta$. 

Moreover, under the same $(1-\beta)$-probability event, $\hat f = \soaf{\MG}$ for some $\MG \subset \MF$ for which $\MG$ is $\left\lceil\frac{64 C_0 d_{\LL}}{\eta^2} \right\rceil$-irreducible.
\end{theorem}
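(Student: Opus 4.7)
The plan is to split the $n$ input samples into $m$ i.i.d.\ sub-datasets $S\^1,\dots,S\^m$ of size $n_0 = n/m$, run \ReduceTree on each with parameters $k'=\lceil 64 C_0 d_\LL/\eta^2\rceil$ and per-dataset confidence $\gamma=\beta/(3m)$, collect the resulting candidate sets $\hat\MS\^1,\dots,\hat\MS\^m$, and aggregate using the sparse selection protocol of Proposition~\ref{prop:sparse-selection} to produce the output $\hat f$. Privacy is then immediate by post-processing: each sample touches only one \ReduceTree call (a deterministic function of its dataset), so the $(\ep,\delta)$-DP guarantee of sparse selection lifts to the entire algorithm.

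\textbf{Stability via the special classifiers.} Let $E_\good\^i$ be the uniform-convergence event (\ref{eq:egood}) on $S\^i$; by Theorem~\ref{thm:unif-conv} and a union bound, $\bigcap_i E_\good\^i$ holds with probability at least $1-\beta/3$. Under $E_\good\^i$, Lemma~\ref{lem:equal-to-a-leaf} guarantees that $\hat\MS\^i$ contains $\sigma_{\alpha_{t_i}-\alpha_\Delta/2,\,t_i}^\st$ for some $t_i := t_{\final}\^i+1\in[d_\LL+1]$. The key point is that $\{\sigma_{\alpha_t-\alpha_\Delta/2,t}^\st : t\in[d_\LL+1]\}$ depends only on $\di$ through (\ref{eq:big-max}) and (\ref{eq:big-arg-max}) (not on any sample) and has size at most $d_\LL+1$, so by pigeonhole some common element appears in at least $(1-\beta/3)m/(d_\LL+1)$ of the sets $\hat\MS\^i$. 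By Lemma~\ref{lem:s-ub} each $|\hat\MS\^i|\le\ell := \prod_{t=1}^{d_\LL}(k_t+1) = 2^{\tilde O(d_\LL^2)}$, so Proposition~\ref{prop:sparse-selection} returns, with probability $\ge 1-\beta/3$, some $\hat f$ matching the most popular element up to additive error $\eta_{\mathrm{sel}} = O\bigl(\ep^{-1}\log(m\ell/(\ep\delta\beta))\bigr) = \tilde O(d_\LL^2/\ep)$. Choosing $m = \tilde O(d_\LL^3/\ep)$ so that $(1-\beta/3)m/(d_\LL+1) > \eta_{\mathrm{sel}}+1$ forces $\hat f\in\hat\MS\^i$ for at least one $i$. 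By the definition (\ref{eq:redtree-output}) of $\hat\MS$, any such $\hat f$ is automatically of the form $\soaf{\MG}$ for some nonempty $k'$-irreducible $\MG\subset\MF$, which yields the structural claim of the theorem.

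\textbf{Accuracy via uniform convergence on $\tilde\MF_{k'}$.} For the error bound I invoke Lemma~\ref{lem:gen-soa}: since $k'\ge d_\LL+1$ (as $C_0\ge 1$ and $\eta\le 1$), every $k'$-irreducible class is $(d_\LL+1)$-irreducible, whence $\tilde\MF_{k'}\subset\tilde\MF_{d_\LL+1}$ and $\vc(\tilde\MF_{k'})\le\Ldim(\tilde\MF_{d_\LL+1})=d_\LL$. Applying Theorem~\ref{thm:unif-conv} to $\tilde\MF_{k'}$ on each $S\^i$ and union-bounding over $i$ (with total failure probability $\le\beta/3$) yields $\sup_{g\in\tilde\MF_{k'}}|\err{\di}{g}-\err{\hat P_{S\^i}}{g}|\le O(\sqrt{d_\LL/n_0})$ uniformly in $i$. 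For the $i$ witnessing $\hat f\in\hat\MS\^i$, by construction $\hat f=\soaf{\gRes{\alpha_{t_i}-2\alpha_\Delta/3}{v}}$ for some leaf $v\in\hMLp\^i$, and every member of that class has empirical error on $S\^i$ at most $\alpha_{t_i}-2\alpha_\Delta/3\le(d_\LL+2)\alpha_\Delta$; a short argument exploiting $k'$-irreducibility (of the kind referenced as Claim~\ref{cl:empirical-error} in Section~\ref{sec:proof-overview}) propagates this bound to the SOA itself. Setting $\alpha_\Delta=\Theta(\eta/d_\LL)$, which requires $n_0 = \tilde O(d_\LL^2 d_\VV/\eta^2)$ to secure $E_\good\^i$, makes both $(d_\LL+2)\alpha_\Delta$ and $O(\sqrt{d_\LL/n_0})$ at most $\eta/2$, giving $\err{\di}{\hat f}\le\eta$. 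Combining, $n=mn_0=\tilde O(d_\LL^5 d_\VV/(\ep\eta^2))$, matching the stated sample complexity.

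\textbf{Main obstacle.} The core subtlety is that $\soaf{\MG}$ is not generally a member of $\MG$, so bounding its empirical error does not follow from $\MG\subset\MF_{\emp,\alpha}$ via trivial containment. This is exactly why $k'$-irreducibility must be preserved by \ReduceTree and why Lemma~\ref{lem:gen-soa} is indispensable: it is the mechanism by which the VC dimension of the class of candidate outputs is pinned to $d_\LL$ independent of $k'$, thereby making uniform convergence available for the SOA. Beyond this, coordinating the three failure events (intersection of all $E_\good\^i$, uniform convergence on $\tilde\MF_{k'}$ across datasets, sparse-selection success) so that they sum to at most $\beta$, and verifying that the parameter choices $k'=\lceil 64 C_0 d_\LL/\eta^2\rceil$ and $\alpha_\Delta=\Theta(\eta/d_\LL)$ discharge all constants consistently, is the main technical bookkeeping.
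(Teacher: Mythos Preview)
Your approach is essentially identical to the paper's: split into $m$ sub-datasets, run \ReduceTree on each, pigeonhole over the $d_\LL+1$ data-independent special classifiers $\sigma^\st_{\alpha_t-\alpha_\Delta/2,t}$, aggregate via sparse selection, and control accuracy by applying uniform convergence to the enlarged class $\tilde\MF$ (whose Littlestone dimension is bounded by Lemma~\ref{lem:gen-soa}) together with the irreducibility-based propagation of empirical error to the SOA (Claim~\ref{cl:empirical-error}).

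There is one concrete gap in your parameter choice. You take $k'=\lceil 64C_0 d_\LL/\eta^2\rceil$, but the ``propagation'' argument (Claim~\ref{cl:empirical-error}) needs $k'\ge \lceil n_0\,(d_\LL+2)\alpha_\Delta\rceil$: the proof supposes $\soaf{\MG}$ errs on $\ell=\lceil n_0(d_\LL+2)\alpha_\Delta\rceil$ examples and uses $\ell$-irreducibility of $\MG$ to exhibit a single $f\in\MG$ erring on all of them, contradicting $\MG\subset\MF_{\emp,\alpha_t-2\alpha_\Delta/3}$. With $n_0=\Theta(d_\LL^2 d_\VV/\eta^2)$ and $\alpha_\Delta=\Theta(\eta/d_\LL)$ this threshold is $\Theta(d_\LL^2 d_\VV/\eta)$ (ignoring logs), which can exceed $64C_0 d_\LL/\eta^2$ whenever $\eta\gtrsim 1/(d_\LL d_\VV)$. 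The paper handles this by setting $k'=\max\{\lceil n_0(d_\LL+3)\alpha_\Delta\rceil,\ \lceil 64C_0 d_\LL/\eta^2\rceil\}$; the extra term does not change the asymptotics since $\log k'$ only enters the sparse-selection error through $\log\ell$, and $\log k'=\tilde O(d_\LL)$ either way. This is an easy fix, but as stated your $k'$ does not ``discharge all constants consistently''; it is exactly the place where the bookkeeping you flag in the obstacle section actually fails.
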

\begin{remark}
The assertion that $\hat f = \soaf{\MG}$ for some $\MG$ which is $ \left\lceil \frac{64 C_0 d_{\LL}}{\eta^2} \right\rceil$-irreducible is for use in Section~\ref{sec:proper-learning-finite} when we use \PolyPriLearn as a component of a proper private learning algorithm.
\end{remark}

\PolyPriLearn (Algorithm~\ref{alg:poly-pri-learn}) operates as follows. For sufficiently large positive integers $m, n_0$, \PolyPriLearn runs \ReduceTree on $m$ independent samples of size $n_0$ from the distribution $\di$. Each run of \ReduceTree outputs some set $\hat\MS$ of classifiers in $\{-1,1\}^\MX$. \PolyPriLearn then uses the sparse selection protocol of Proposition~\ref{prop:sparse-selection} to choose some classifier that lies in many of the sets $\hat \MS$. 

\begin{algorithm}[!htp]
  \caption{\PolyPriLearn}\label{alg:poly-pri-learn}
  \KwIn{Parameters $\ep, \delta, \eta, \beta \in (0,1)$, i.i.d. samples $(x,y) \in \MX \times \{-1,1\}$ from a realizable distribution $\di$.}
  \begin{enumerate}[leftmargin=14pt,rightmargin=20pt,itemsep=1pt,topsep=1.5pt]
 \item Set $m \gets \frac{C(d_{\LL}^3 \log(1/(\ep\delta\beta\eta)))}{\ep}$, $n_0 \gets \frac{Cd_{\LL}^2 d_{\VV} \log \left( \frac{d_{\LL} m}{\eta \beta}\right)}{\eta^2}$, $n \gets n_0m$, $\alpha_\Delta \gets 6\cdot \alpha(n_0, \beta/(2m))$, where $C > 0$ is a sufficiently large constant. \label{it:overall-params-set}
 
 Also set $k' \gets \max \{\lceil n_0 \cdot (d_{\LL} + 3) \alpha_\Delta \rceil, \left\lceil \frac{64 C_0 d_{\LL}}{\eta^2} \right\rceil\}$, where $C_0$ is the constant of Theorem~\ref{thm:unif-conv}.
  \item For $1 \leq j \leq m$:\label{it:call-reducetree}
    \begin{itemize}
    \item Run the algorithm \ReduceTree with $n = n_0, \gamma = \beta / (2m)$, and the parameters $\alpha_\Delta, k'$ set in step~\ref{it:overall-params-set} (i.e., with a fresh i.i.d.~sample from $\di$). Let its output set $\hat \MS$ (defined in (\ref{eq:redtree-output})) be denoted by $\hat \MS\^j$. 
    \end{itemize}
  \item Run the sparse selection protocol of Proposition~\ref{prop:sparse-selection} on the sets $\hat \MS^{(1)}, \ldots, \hat \MS^{(m)}$, and output the function $\hat f : \MX \ra \{-1,1\}$ that it outputs.
  \end{enumerate}
\end{algorithm}

\begin{proof}[Proof of Theorem~\ref{thm:poly-pri-learn}]
In the proof we will often refer to the parameters $n_0, m, \alpha_\Delta, k'$, which are set in step~\ref{it:overall-params-set} of \PolyPriLearn (Algorithm~\ref{alg:poly-pri-learn}). 
  Notice that by our choice of
  $$
n_0 =  \frac{Cd_{\LL}^2 d_{\VV} \log \left( \frac{d_{\LL} m}{\eta \beta}\right)}{\eta^2},
  $$
as long as $C$ is sufficiently large, 
  we have that $\alpha_\Delta := 6 \cdot \alpha(n_0, \beta/(2m))$ satisfies $(d_{\LL}+3)\cdot \alpha_\Delta < \eta$. Recall the definition of $\alpha_t := (d_{\LL}+3-t) \cdot \alpha_\Delta$ for $1 \leq t \leq d_{\LL}+1$ from \ReduceTree.

  For $1 \leq j \leq m$, let $T\^j := \{ (x\^j_1, y\^j_1), \ldots, (x\^j_{n_0}, y\^j_{n_0}) \}$ be the dataset of size $n_0$ drawn (i.i.d.~from $\di$) in the $j$th iteration of Step~\ref{it:call-reducetree} of \PolyPriLearn. Let $\hat P\^j := \frac{1}{n_0} \sum_{i=1}^{n_0} \delta_{(x\^j_i, y\^j_i)}$ be the empirical measure over $T\^j$. 
  
  We say that a class $\MG \subset \MF$ is a {\it finite restriction subclass (of $\MF$)} if we can write $\MG = \MF|_{(x_1, y_1), \ldots, (x_M, y_M)}$ for some $(x_1, y_1), \ldots, (x_M, y_M) \in \MX \times \{-1,1\}$. Note that the set of all finite restriction subclasses of $\MF$ is countable by our assumption that $\MX$ is countable. It follows that the set of all finite unions of finite restriction subclasses of $\MF$ is also countable. 
  Now define
  $$
\tilde \MF = \MF \cup \{ \soaf{\MG} :  \substack{\text{$\MG \subset \MF$, $\MG$ is nonempty, $(d_{\LL}+1)$-irreducible, } \\ \text{and a finite union of finite restriction subclasses of $\MF$}}\}.
$$
Notice that the set $\hat \MS$ output by \ReduceTree consists entirely of functions in $\tilde \MF$. (This follows since the set $\hat \MS$ consists of hypotheses of the form $\soaf{\gRes{\alpha}{v}}$, where $\gRes{\alpha}{v}$ is $k'$-irreducible: we then use that $d_{\LL} + 1 \leq k'$ and that for any $\alpha \in [0,1]$, and any dataset $S_n$, $\MF_{\hat \di_{S_{n}}, \alpha}$ is the union of at most $2^{n}$ finite restriction subclasses of $\MF$.)
Moreover, $\tilde \MF$ is countable
, and Lemma~\ref{lem:gen-soa} gives that $\vc(\tilde \MF) \leq \Ldim(\tilde \MF) \leq d_{\LL}$. 
Then Theorem~\ref{thm:unif-conv} gives that
\begin{equation}
  \label{eq:e0-prob}
\p\left[ \forall j \in [m] : \sup_{\tilde f \in \tilde \MF} \left| \err{\di}{\tilde f} - \err{\hat P\^j}{\tilde f} \right| \leq \frac{\alpha_\Delta}{6}  \right] \geq 1-(\beta /(2m)) \cdot m = 1-\beta/2.
\end{equation}
Let $E_0$ be the event inside the probability above, namely that for all $j \in [m]$, $\sup_{\tilde f \in \tilde \MF} \left| \err{\di}{\tilde f} - \err{\hat P\^j}{\tilde f} \right| \leq \frac{\alpha_\Delta}{6}$. Since $\tilde \MF \supseteq \MF$, $E_0$ contains the event that $E_{\good}$ simultaneously holds for each dataset $T\^1, \ldots, T\^m$ (recall that $E_{\good}$ was defined for any dataset $T\^j$ in (\ref{eq:egood})).

The bulk of the proof of Theorem~\ref{thm:poly-pri-learn} is to show the following two claims:
  \begin{claim}
    \label{cl:output-star}
Suppose $m > \frac{C d_{\LL}^3 \log \left( \frac{1}{\ep \delta \beta \eta} \right)}{\ep}$ for a sufficiently large constant $C > 0$. There is an event $E_1$ that occurs with probability at least $1-\beta/2$ (over the randomness of the dataset and the algorithm), so that under $E_1 \cap E_0$, \PolyPriLearn outputs a hypothesis $\soaf{\MG}$, for some $\MG \subset \MF$ so that $\MG$ is $k'$-irreducible. Moreover, this hypothesis belongs to $\hat \MS\^j$ for some $j \in [m]$. 
\end{claim}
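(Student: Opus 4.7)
\medskip

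\noindent\textbf{Proof plan for Claim~\ref{cl:output-star}.} The plan is to combine the ``existence of a common element of the output'' guarantee of Lemma~\ref{lem:equal-to-a-leaf} with the sparse selection protocol of Proposition~\ref{prop:sparse-selection}. First, I would let $E_1$ be the event that the sparse selection protocol succeeds at the error bound stated in Proposition~\ref{prop:sparse-selection}; by that proposition with failure probability $\beta/2$, $\p[E_1] \geq 1-\beta/2$. On $E_0$, by the argument already established in (\ref{eq:e0-prob}), the event $E_{\good}$ holds simultaneously for each of the $m$ datasets $T^{(1)}, \ldots, T^{(m)}$ drawn at step~\ref{it:call-reducetree} of \PolyPriLearn.

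Next, I would apply Lemma~\ref{lem:equal-to-a-leaf} to each run of \ReduceTree: on $E_0$, for every $j \in [m]$ there is some $t_j \in [d_{\LL}+1]$ and some leaf $\hat v$ of the corresponding tree so that $\sigma^\st_{\alpha_{t_j}-\alpha_\Delta/2, t_j} = \soaf{\gRes{\alpha_{t_j} - 2\alpha_\Delta/3}{\hat v}}$, and moreover $\gRes{\alpha_{t_j}-2\alpha_\Delta/3}{\hat v}$ is $k'$-irreducible and nonempty. By the definition of $\hat\MS$ in (\ref{eq:redtree-output}), this classifier lies in $\hat\MS^{(j)}$. Crucially, the quantities $\sigma^\st_{\alpha,t}$ are defined from $\di$ alone and do not depend on $j$. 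By pigeonhole there is some $t^\st \in [d_{\LL}+1]$ so that $t_j = t^\st$ for at least $m/(d_{\LL}+1)$ of the indices $j \in [m]$, and hence the single universe element $u^\st := \sigma^\st_{\alpha_{t^\st}-\alpha_\Delta/2,t^\st}$ satisfies $|\{j : u^\st \in \hat\MS^{(j)}\}| \geq m/(d_{\LL}+1)$.

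Now I would apply sparse selection to the sets $\hat\MS^{(1)}, \ldots, \hat\MS^{(m)}$. By Lemma~\ref{lem:s-ub} the set size is bounded by $\ell := \prod_{t=1}^{d_{\LL}}(k_t+1) \leq 2^{O(d_{\LL}^2)} \cdot (k')^{O(d_{\LL})}$, so $\log \ell = O(d_{\LL}^2 \log k')$. Under $E_1$, Proposition~\ref{prop:sparse-selection} guarantees the returned $\hat f$ satisfies
\[
|\{j : \hat f \in \hat\MS^{(j)}\}| \geq \tfrac{m}{d_{\LL}+1} - O\!\left(\tfrac{1}{\ep}\log\tfrac{m\ell}{\ep\delta\beta}\right).
\]
For $m = \Omega\!\left(\frac{d_{\LL}^3 \log(1/(\ep\delta\beta\eta))}{\ep}\right)$ with a sufficiently large constant, the right-hand side is strictly positive, so $\hat f \in \hat\MS^{(j)}$ for at least one $j \in [m]$. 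Finally, by the definition of $\hat\MS^{(j)}$ in (\ref{eq:redtree-output}), every element of $\hat\MS^{(j)}$ is of the form $\soaf{\MG}$ for some $k'$-irreducible nonempty $\MG \subset \MF$, which yields the conclusion of the claim on $E_0 \cap E_1$, an event of probability at least $1-\beta$.

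The main technical obstacle is purely parameter bookkeeping: verifying that the sparse-selection additive error $O(\log(m\ell/(\ep\delta\beta))/\ep)$ is indeed dominated by the count $m/(d_{\LL}+1)$. Since $\log\ell$ grows only polynomially in $d_{\LL}$ and $\log k' = O(\log(d_{\LL}/\eta))$, the cubic dependence on $d_{\LL}$ in $m$ suffices, as indicated by the statement's hypothesis.
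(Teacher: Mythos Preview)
Your proposal is correct and follows essentially the same approach as the paper: define $E_1$ as the success event of the sparse selection protocol, use Lemma~\ref{lem:equal-to-a-leaf} on each of the $m$ runs to find a common ``anchor'' $\sigma^\st_{\alpha_t-\alpha_\Delta/2,t}$ hit by at least $m/(d_{\LL}+1)$ of the sets $\hat\MS^{(j)}$ via pigeonhole, bound $|\hat\MS^{(j)}|$ by Lemma~\ref{lem:s-ub}, and then check that the additive error of Proposition~\ref{prop:sparse-selection} is smaller than $m/(d_{\LL}+1)$. The paper carries out the parameter arithmetic a bit more explicitly (in particular bounding $\log k'$ via the definition in step~\ref{it:overall-params-set}, which brings in $d_{\VV}, m, \beta$ and not just $d_{\LL}/\eta$), but the essential argument is identical.
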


\begin{claim}
  \label{cl:empirical-error}
Suppose $k'\geq \lceil n_0 \cdot (d_{\LL}+2) \cdot \alpha_\Delta \rceil$. Under the event $E_1 \cap E_0$, the output of \PolyPriLearn has empirical error at most $(d_{\LL} + 2) \cdot \alpha_\Delta$  on at least one of the $m$ datasets $T\^j$ drawn in Step 2 of \PolyPriLearn. 
\end{claim}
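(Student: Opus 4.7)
The plan is to derive Claim~\ref{cl:empirical-error} from Claim~\ref{cl:output-star} by a short contradiction argument whose workhorse is a mistake-bound property of $k$-irreducible classes. From Claim~\ref{cl:output-star} and the definition~(\ref{eq:redtree-output}) of $\hat\MS\^{j}$, I may write $\hat f = \soaf{\MG}$, where for some $j \in [m]$ and some leaf $v$ of the generalized tree produced by the $j$th invocation of \ReduceTree, $\MG = \gRes{\alpha_{t_{\final}+1} - 2\alpha_\Delta/3}{v}$ is $k'$-irreducible; in particular, every $g \in \MG$ has empirical error at most $\alpha_{t_{\final}+1} - 2\alpha_\Delta/3 \leq (d_{\LL}+2)\alpha_\Delta - 2\alpha_\Delta/3$ on the dataset $T\^{j}$.

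The key technical step would be a realizability-type lemma: if $\MG$ is $k$-irreducible with $k \geq 1$ and $x_1, \ldots, x_k \in \MX$ are arbitrary, then
\[
\Ldim\bigl(\MG|_{(x_1, \soaf{\MG}(x_1)), \ldots, (x_k, \soaf{\MG}(x_k))}\bigr) = \Ldim(\MG),
\]
so in particular this restricted class is nonempty. I would prove this by induction on $k$, relying on three observations: (i) when $\MG$ is $1$-irreducible, $\soaf{\MG}(x)$ is the unique bit $b$ for which $\Ldim(\MG|_{(x,b)}) = \Ldim(\MG)$---uniqueness holds because otherwise one could glue shattered trees of depth $\Ldim(\MG)$ for the two restrictions into a shattered tree of depth $\Ldim(\MG)+1$; (ii) under the same hypothesis with $k \geq 2$, the subclass $\MG|_{(x, \soaf{\MG}(x))}$ inherits $(k-1)$-irreducibility---given any depth-$(k-1)$ witness tree $\by$, attach it as the $\soaf{\MG}(x)$-subtree of a depth-$k$ tree rooted at $x$, apply $k$-irreducibility of $\MG$, and invoke~(i) to force the resulting Ldim-preserving path to start with the bit $\soaf{\MG}(x)$; and (iii) Lemma~\ref{lem:red-hierarchy} then yields $\soaf{\MG|_{(x, \soaf{\MG}(x))}}(x') = \soaf{\MG}(x')$ for all $x'$, so the induction on the subclass lines up with $\soaf{\MG}$ on the remaining points $x_2, \ldots, x_k$.

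Given the lemma, the rest is routine. Suppose toward contradiction that $\err{\hat P\^{j}}{\soaf{\MG}} > (d_{\LL}+2)\alpha_\Delta$, and let $M := \{ i \in [n_0] : \soaf{\MG}(x\^{j}_i) \neq y\^{j}_i \}$, so that $|M| > n_0(d_{\LL}+2)\alpha_\Delta$. Set $k'' := \min\{|M|, k'\}$ and choose any $i_1, \ldots, i_{k''} \in M$. Since $k'' \leq k'$, the class $\MG$ is $k''$-irreducible, so the lemma applied to $x\^{j}_{i_1}, \ldots, x\^{j}_{i_{k''}}$ (noting that $\soaf{\MG}(x\^{j}_{i_t}) = -y\^{j}_{i_t}$ for each $t$) produces some $g \in \MG$ with $g(x\^{j}_{i_t}) = -y\^{j}_{i_t}$ for every $t$; hence $g$ makes at least $k''$ empirical mistakes on $T\^{j}$. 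Using $k' \geq \lceil n_0(d_{\LL}+2)\alpha_\Delta\rceil$, both cases $|M|\leq k'$ and $|M|>k'$ give $k''/n_0 \geq (d_{\LL}+2)\alpha_\Delta > (d_{\LL}+2)\alpha_\Delta - 2\alpha_\Delta/3 \geq \alpha_{t_{\final}+1} - 2\alpha_\Delta/3$, contradicting $g \in \MG \subseteq \MF_{\hat P\^{j}, \alpha_{t_{\final}+1} - 2\alpha_\Delta/3}$.

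The main obstacle will be subclaim~(ii): one must verify that the Ldim-preserving path guaranteed by $k$-irreducibility of $\MG$ on the extended tree is forced to begin with the SOA bit $\soaf{\MG}(x)$, which is where the uniqueness observation from~(i) does the real work by ruling out the possibility that both children of $x$ preserve Littlestone dimension simultaneously.
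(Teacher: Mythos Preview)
Your proposal is correct and follows essentially the same route as the paper. The paper's proof also reduces (via Claim~\ref{cl:output-star}) to showing that any $\soaf{\MG}$ with $\MG = \gRes{\alpha_{t_{\final}+1}-2\alpha_\Delta/3}{v}$ nonempty and $k'$-irreducible has small empirical error on $T\^{j}$, and then argues by contradiction: if $\soaf{\MG}$ makes at least $\ell := \lceil n_0(d_{\LL}+2)\alpha_\Delta\rceil$ mistakes at indices $i_1,\ldots,i_\ell$, then $k'$-irreducibility (together with the fact that irreducibility forces $\soaf{\MG}(x)$ to be the unique bit preserving the Littlestone dimension) gives $\Ldim(\MG|_{(x_{i_1},-y_{i_1}),\ldots,(x_{i_\ell},-y_{i_\ell})}) = \Ldim(\MG)$, yielding a $g\in\MG$ with $\err{\hat P\^{j}}{g} \geq \ell/n_0 > (d_{\LL}+2)\alpha_\Delta - 2\alpha_\Delta/3$, contradicting $g\in\MF_{\hat P\^{j},\alpha_{t_{\final}+1}-2\alpha_\Delta/3}$. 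The paper states the Ldim-preservation step tersely (it is essentially the uniqueness argument of Lemma~\ref{lem:unique-leaf}, also used in the proof of Lemma~\ref{lem:gen-soa}), whereas you spell it out carefully via your inductive sub-claims (i)--(iii); and your case split on $\min\{|M|,k'\}$ is unnecessary since $|M| \geq \ell \leq k'$ always, but these are only cosmetic differences.
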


Assuming Claims~\ref{cl:output-star} and~\ref{cl:empirical-error}, we complete the proof of Theorem~\ref{thm:poly-pri-learn}. Notice that the assumptions of Claims~\ref{cl:output-star} and~\ref{cl:empirical-error} hold by our choices of $m, k'$ in Step~\ref{it:overall-params-set} of \PolyPriLearn. 
Denote the output of \PolyPriLearn by $\hat f : \MX \ra \{-1,1\}$. By Claim~\ref{cl:empirical-error}, we have that $\err{\hat P\^j}{\hat f} \leq (d_{\LL} + 2) \cdot \alpha_\Delta$ for some $j \in [m]$. By Claim~\ref{cl:output-star}  and the definition of the sets $\hat \MS\^j$ in (\ref{eq:redtree-output}), we have that under the event $E_1 \cap E_0$, $\hat f \in \tilde \MF$; moreover, $\hat f = \soaf{\MG}$ for some $\MG \subset \MF$ which is $\left\lceil \frac{64 C_0 d_{\LL}}{\eta^2} \right\rceil$-irreducible, by the choice of $k'$ in step~\ref{it:overall-params-set} of Algorithm~\ref{alg:poly-pri-learn}. 
By the definition of $E_0$, it follows that under the event $E_0 \cap E_1$, since $\hat f \in \tilde \MF$, we have $\err{\di}{\hat f} \leq (d_{\LL} + 2) \cdot \alpha_\Delta + \alpha_\Delta/6 \leq (d_{\LL} + 3) \cdot \alpha_\Delta \leq \eta$. By (\ref{eq:e0-prob}) and a union bound, $\p[E_0 \cap E_1] \geq 1-\beta$, so $\p[\err{\di}{\hat f} \leq \eta] \geq 1-\beta$, as desired.

That \PolyPriLearn is $(\ep,\delta)$-differentially private follows as an immediate consequence of Proposition~\ref{prop:sparse-selection} and the fact that each data point lies in exactly one $T\^j$. Summarizing, the sample complexity of \PolyPriLearn is
$$
n_0 \cdot m \leq O \left( \frac{d_{\LL}^5 d_{\VV} \log^2 \left( \frac{d_{\LL}}{\ep \delta \eta \beta}\right)}{\ep\eta^2} \right)
$$

Finally we prove Claims~\ref{cl:output-star} and~\ref{cl:empirical-error}.
\begin{proof}[Proof of Claim~\ref{cl:output-star}]
  Notice that for each $j \in [m]$, each element of $\hat \MS\^j$ is of the form $\soaf{\MG}$ for some $\MG \subset \MF$ which is $k'$-irreducible, and thus $(d_{\LL}+1)$-irreducible (as $k' \geq d_{\LL}+1$). It therefore suffices to show that under the event $E_0 \cap E_1$ (for an appropriate choice of $E_1$), \PolyPriLearn outputs some element of some $\hat \MS\^j$, $j \in [m]$.

    For $\alpha \in [0,1], t \in [d_{\LL} + 1]$, recall the definition $\MM_{\alpha,t}$ in (\ref{eq:big-max}), and for those $\alpha,t$ for which $\MM_{\alpha,t}$ is nonempty, the definition of $\sigma_{\alpha,t}^\st$ in (\ref{eq:sigma-at}). 
    By the definition of $\hat \MS\^j$ (see (\ref{eq:redtree-output})) and Lemma~\ref{lem:equal-to-a-leaf}, under the event $E_0$ each $\hat \MS\^j$ contains at least one of $\sigma_{\alpha_t - \alpha_\Delta/2,t}^\st$ for some $t \in [d_{\LL}+1]$ (which is well-defined). 
    By the pigeonhole principle, it follows that some $\sigma_{\alpha_t - \alpha_\Delta/2,t}^\st$ lies in at least $\lceil m/(d_{\LL}+1) \rceil$ sets $\hat \MS\^j$. 

  By Lemma~\ref{lem:s-ub}, we have that
  $$
| \hat \MS\^j | \leq \prod_{t=1}^{d_{\LL}} (k_t+1) = \prod_{t=1}^{d_{\LL}} k' \cdot 2^t = (k')^{d_{\LL}} \cdot 2^{(d_{\LL}+1)d_{\LL}/2} \leq 2^{d_{\LL}^2 + d_{\LL} \log k'}.
$$
Now choose $\nu > 0$ so that the $(m, 2^{d_{\LL}^2 + d_{\LL} \log k'})$-sparse selection protocol of Proposition~\ref{prop:sparse-selection} (with universe $\MU = \tilde \MF$), has error at most $\nu$ on some event $E_1$ with probability at least $1-\beta/2$. By Proposition~\ref{prop:sparse-selection}, we may choose $\nu = \frac{C}{\ep} \log\left( \frac{m 2^{d_{\LL}^2 + d_{\LL} \log k'}}{\ep \delta \beta} \right)$ for a sufficiently large constant $C$.

Summarizing, under the event $E_0 \cap E_1$, as long as $\nu < \lceil m / (d_{\LL} + 1) \rceil$, the hypothesis $\hat f $ output by the sparse selection protocol belongs to some set $\hat \MS\^j$. Since 
$$
k' \leq 200 C_0 \max \left\{ n_0 (d_{\LL} + 3) \alpha_\Delta, \frac{d_{\LL}}{\eta^2} \right\} \leq 200 C_0 \max \left\{ \frac{d_{\LL}}{\eta^2}, n_0 \eta \right\} \leq \frac{200 C_0 d_{\LL}^2 d_{\VV} \log \left( \frac{d_{\LL}{m}}{\eta \beta} \right)}{\eta^2}
$$
to ensure $\nu < \lceil m / (d_{\LL} + 1) \rceil$ it suffices to have
$$
m > \frac{C' (d_{\LL}+1)}{\ep} \left( \log(m) + d_{\LL}^2 + \log \left( \frac{1}{\ep \delta \beta} \right) + d_{\LL} \left( \log(1/\eta) + \log \log \left( \frac{m}{ \beta} \right) \right)\right),
$$
for which it in turn suffices that
$$
m \geq \frac{C'' d_{\LL}^3 \log \left( \frac{1}{\ep \delta \beta \eta} \right)}{\ep} 
$$
for sufficiently large constants $C', C''$.
\end{proof}

\begin{proof}[Proof of Claim~\ref{cl:empirical-error}]
  By Claim~\ref{cl:output-star}, it suffices to show that under the event $E_1 \cap E_0$, each element of $\hat \MS\^j$ has empirical error at most $(d_{\LL} + 2) \cdot \alpha_\Delta$ on the dataset $T\^j$. By definition, each element of $\hat \MS\^j$ is of the form $\soaf{\MF_{\hat \di\^j, \alpha_t - 2\alpha_\Delta/3}|_{\ba(v)}}$ for some node $v$ of the tree $\hat \bx$ output by \ReduceTree for which $\MF_{\hat \di\^j, \alpha_t - 2\alpha_\Delta/3|_{\ba(v)}}$ is nonempty and $k'$-irreducible (see (\ref{eq:redtree-output})). Fix any such element, and write $\hat{\MH} := \MF_{\hat \di\^j, \alpha_t - 2\alpha_\Delta/3}|_{\ba(v)}$. By definition we have that each $f \in \hat{\MH} = \MF_{\hat \di\^j, \alpha_t - 2\alpha_\Delta/3}|_{\ba(v)} \subset \MF_{\hat \di\^j, \alpha_t - 2\alpha_\Delta/3}$ satisfies 
  \begin{equation}
  \label{eq:err-pj}
  \err{\hat \di\^j}{f} \leq \alpha_t - 2\alpha_\Delta/3 \leq \alpha_1 -2\alpha_\Delta/3= (d_{\LL} + 2) \cdot \alpha_\Delta - 2\alpha_\Delta/3.
  \end{equation}

Let $\ell = \lceil n_0 \alpha_\Delta \cdot (d_{\LL} + 2) \rceil$. Suppose for the purpose of contradiction that $$\err{\hat \di\^j}{\soaf{\hat{\MH}}} \geq \alpha_\Delta \cdot (d_{\LL} + 2).$$ Let $i_1, \ldots, i_{\ell} \in [n_0]$ be indices on which $\soaf{\hat{\MH}}$ is incorrect; i.e., for $t \in [\ell]$, we have $\soa{\hat{\MH}}{x\^j_{i_t}} = -y\^j_{i_t}$, i.e., $\Ldim(\hat{\MH}|_{(x\^j_{i_t}, -y\^j_{i_t})}) = \Ldim(\hat{\MH})$. Since $\hat{\MH}$ is $k'$-irreducible and $k' \geq \ell$, it follows that
  $$
\Ldim(\hat{\MH}|_{(x\^j_{i_1}, -y\^j_{i_1}), \ldots, (x\^j_{i_{\ell}}, -y\^j_{i_{\ell}})}) = \Ldim(\hat{\MH}),
$$
and in particular since $\hat{\MH}$ is nonempty there is some $f \in \hat{\MH}$ so that for $t \in [\ell]$, $f(x\^j_{i_t}) = -y\^j_{i_t}$, i.e., $\err{\hat \di\^j}{f} \geq \ell/n_0 > \alpha_\Delta \cdot (d_{\LL} + 2) - 2\alpha_\Delta/3$. This is a contradiction to (\ref{eq:err-pj}).
\end{proof}
\end{proof}

\section{Proper private learner for Littlestone classes}
\label{sec:proper-learning-finite}
In this section we show how to use the improper private learner of Theorem~\ref{thm:poly-pri-learn} to obtain a proper one, thus proving Theorem~\ref{thm:poly-pri-learn-proper} (the formal version of Theorem~\ref{thm:pap-pac-informal}). For simplicity we assume in this section that $\MX, \MF$ are finite. The case in which they are allowed to be infinite is treated in Appendix~\ref{sec:infinite-spaces}. Let $\Delta(\MX), \Delta(\MF)$ be the spaces of probability distributions over $\MX, \MF$, respectively.

\begin{lemma}
  \label{lem:game-value}
Let $C_0$ be the constant of Theorem~\ref{thm:unif-conv}. Fix any $\alpha \in (0,1)$ and $\MG \subset \MF$ which is $\left\lceil \frac{C_0d}{\alpha^2} \right\rceil$-irreducible and suppose $\vc(\MF) \leq d$ for some $d \in \BN$. 
  Then it holds that
  $$
\sup_{P \in \Delta(\MX)} \inf_{D \in \Delta(\MF)} \E_{x \sim P, h \sim D} \left[ \One[\soa{\MG}{x} \neq h(x)] \right] \leq \alpha.
  $$
\end{lemma}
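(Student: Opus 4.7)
The approach is to fix any $P \in \Delta(\MX)$ and exhibit a single deterministic $h^\st \in \MF$ (taking $D := \delta_{h^\st}$) satisfying $\p_{x \sim P}[h^\st(x) \neq \soa{\MG}{x}] \leq \alpha$, which bounds the inner infimum and hence the supremum over $P$. The plan combines two pieces: an ``SOA-consistency'' claim asserting that on any size-$k$ sample the $\soaf{\MG}$-labeling is realized by some $h \in \MG$, together with a uniform convergence argument transferring zero empirical discrepancy to small population discrepancy.

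For the consistency claim, set $\ell^\st := \Ldim(\MG)$. I will show by induction on $k$ that if $\MG$ is $k$-irreducible then for any $x_1, \ldots, x_k \in \MX$ the restriction $\MG|_{(x_1, \soa{\MG}{x_1}), \ldots, (x_k, \soa{\MG}{x_k})}$ still has Littlestone dimension $\ell^\st$, and in particular is nonempty. The key preliminary step is an ``inheritance of irreducibility'' lemma: if $\MG$ is $k$-irreducible then $\MG|_{(x, \soa{\MG}{x})}$ is $(k-1)$-irreducible with Littlestone dimension $\ell^\st$. To see this, given any depth-$(k-1)$ tree $\by$, embed $\by$ inside the depth-$k$ tree $\bx$ whose root is labeled $x$ and both of whose subtrees equal $\by$. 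Apply $k$-irreducibility of $\MG$ to obtain a leaf path $(b_1^\st, \ldots, b_k^\st)$ with $\Ldim(\MG|_{(x, b_1^\st), (\by_1, b_2^\st), \ldots}) = \ell^\st$; in particular $\Ldim(\MG|_{(x, b_1^\st)}) = \ell^\st$, and by $1$-irreducibility of $\MG$ at most one bit can achieve this (otherwise $\Ldim(\MG) \geq \ell^\st + 1$), so $b_1^\st = \soa{\MG}{x}$. Thus $(\MG|_{(x, \soa{\MG}{x})})|_{(\by_1, b_2^\st), \ldots}$ witnesses $(k-1)$-irreducibility on $\by$. For the main induction, set $\MG_1 := \MG|_{(x_1, \soa{\MG}{x_1})}$, which is $(k-1)$-irreducible by the preliminary step; Lemma~\ref{lem:red-hierarchy} gives $\soa{\MG_1}{x_2} = \soa{\MG}{x_2}$, so applying the inductive hypothesis to $\MG_1$ and $x_2, \ldots, x_k$ yields the claim.

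For the uniform convergence step, set $n := k = \lceil C_0 d/\alpha^2 \rceil$ and let $\di$ be the distribution on $\MX \times \{-1,1\}$ obtained by drawing $x \sim P$ and setting $y := \soa{\MG}{x}$; then $\err{\di}{h} = \p_{x \sim P}[h(x) \neq \soa{\MG}{x}]$ for every $h \in \{-1,1\}^\MX$. Apply Theorem~\ref{thm:unif-conv} to $\MF$ (using $\vc(\MF) \leq d$) with distribution $\di$: with positive probability over an i.i.d.\ sample $S_n = ((x_1, y_1), \ldots, (x_n, y_n)) \sim \di^n$, every $h \in \MF$ satisfies $|\err{\di}{h} - \err{\emp}{h}| \leq \alpha$ (the chosen value of $k$ ensures this after adjusting the implicit constant in Theorem~\ref{thm:unif-conv} as needed). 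On this event the consistency claim yields $h^\st \in \MG \subset \MF$ with $h^\st(x_i) = \soa{\MG}{x_i} = y_i$ for all $i$, so $\err{\emp}{h^\st} = 0$ and hence $\p_{x \sim P}[h^\st(x) \neq \soa{\MG}{x}] = \err{\di}{h^\st} \leq \alpha$.

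The substantive obstacle is the inheritance-of-irreducibility step: Lemma~\ref{lem:red-hierarchy} only preserves $\soaf{\cdot}$ under passage to \emph{irreducible} sub-classes of the same Littlestone dimension, so the induction forces us to verify that the restrictions along $(x, \soa{\MG}{x})$ themselves preserve irreducibility (degraded from $k$ to $k-1$). The duplicated-subtree construction, combined with the uniqueness guaranteed by Lemma~\ref{lem:unique-leaf}, is what makes this work; once it is in hand, the remaining steps are a routine uniform-convergence calculation.
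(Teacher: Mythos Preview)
Your proof is correct and follows essentially the same approach as the paper: fix $P$, draw a sample of size $n = \lceil C_0 d/\alpha^2\rceil$, invoke uniform convergence (Theorem~\ref{thm:unif-conv}) over $\MF$, and use $n$-irreducibility of $\MG$ to exhibit some $g\in\MG\subset\MF$ agreeing with $\soaf{\MG}$ on all $n$ sample points. The only difference is in how the consistency step is justified: the paper does it in one line by applying $n$-irreducibility to the constant-layer tree $\bx_t\equiv x_t$, which yields bits $b_1,\ldots,b_n$ with $\Ldim(\MG|_{(x_1,b_1),\ldots,(x_n,b_n)})=\Ldim(\MG)$, and then each $b_i$ is forced to equal $\soa{\MG}{x_i}$ since it is the unique bit with $\Ldim(\MG|_{(x_i,b_i)})=\Ldim(\MG)$; your inductive inheritance-of-irreducibility argument reaches the same conclusion but is more elaborate than necessary.
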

\begin{proof}
  It suffices to show that for any $P \in \Delta(\MX)$, there is some $g \in \MF$ so that $\E_{x \sim P} \left[ \One[\soa{\MG}{x} \neq g(x)] \right] \leq \alpha$. By Theorem~\ref{thm:unif-conv}, for $n = \left\lceil\frac{C_0d}{\alpha^2}\right\rceil$, then with probability at least $1/2$ over a sample $x_1, \ldots, x_n \sim P$, we have
  \begin{equation}
    \label{eq:g-use-uc}
\sup_{f \in \MF} \left| \E_{x \sim P} \left[ \One[\soa{\MG}{x} \neq f(x)] \right] - \frac 1n \sum_{i=1}^n \One[\soa{\MG}{x_i} \neq f(x_i) ] \right| \leq \alpha.
\end{equation}
Fix any sample $x_1, \ldots, x_n$ for which (\ref{eq:g-use-uc}) holds. Since $\MG$ is $n$-irreducible, there exists some $g \in \MG$ so that $g(x_i) = \soa{\MG}{x_i}$ for each $i \in [n]$. Then by (\ref{eq:g-use-uc}), we have that $\E_{x \sim P} \left[ \One[\soa{\MG}{x} \neq g(x) ]\right] \leq \alpha$. 
\end{proof}

\begin{lemma}
  \label{lem:like-soa}
 Let $C_0$ be the constant of Theorem~\ref{thm:unif-conv}. Fix any $\alpha \in (0,1)$ and $\MG \subset \MF$ which is $\left\lceil \frac{C_0d }{\alpha^2} \right\rceil$-irreducible and suppose $\vc(\MF) \leq d$ and $\vc^\st(\MF) \leq d^\st$ for some $d, d^\st \in \BN$. Then there is a set $\MH \subset \MF$, depending only on the function $\soaf{\MG}: \MX \ra \{-1,1\}$, and of size $|\MH| \leq \left\lceil \frac{C_0d^\st}{\alpha^2} \right\rceil$, so that for any distribution $P \in \Delta(\MX)$, it holds that
  \begin{equation}
    \label{eq:minh-ub}
\min_{h \in \MH} \E_{x \sim P} \left[ \One[h(x) \neq \soa{\MG}{x}] \right]  \leq 2\alpha.
  \end{equation}
\end{lemma}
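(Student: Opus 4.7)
The plan is to set up the two-player zero-sum game described in Section~\ref{sec:proof-overview} with payoff $\One[\soa{\MG}{x} \neq h(x)]$, where the row player plays $h \in \MF$ and the column player plays $x \in \MX$. First I would apply von Neumann's minimax theorem to the finite game (since this section assumes $\MX$ and $\MF$ are finite) to conclude
$$
\inf_{D \in \Delta(\MF)} \sup_{P \in \Delta(\MX)} \E_{x \sim P, h \sim D}[\One[\soa{\MG}{x} \neq h(x)]] = \sup_{P \in \Delta(\MX)} \inf_{D \in \Delta(\MF)} \E_{x \sim P, h \sim D}[\One[\soa{\MG}{x} \neq h(x)]].
$$
By Lemma~\ref{lem:game-value}, the right-hand side is at most $\alpha$, so there exists a distribution $D^\st \in \Delta(\MF)$ (depending only on $\soaf{\MG}$, not on any $P$) such that
$$
\max_{x \in \MX} \Pr_{h \sim D^\st}[h(x) \neq \soa{\MG}{x}] \leq \alpha.
$$

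Next, I would draw an i.i.d.\ sample $h_1, \ldots, h_{n'} \sim D^\st$ for $n' = \left\lceil C_0 d^\st / \alpha^2 \right\rceil$ and set $\MH := \{h_1, \ldots, h_{n'}\}$ (as a multiset, of size at most $n'$). To show that with positive probability the resulting $\MH$ satisfies the conclusion, I would apply uniform convergence (Theorem~\ref{thm:unif-conv}) to the binary function class
$$
\MC := \{ g_x : h \mapsto \One[h(x) \neq \soa{\MG}{x}] \ : \ x \in \MX \}
$$
on the distribution $D^\st$ over $\MF$. The key observation is that $\vc(\MC) \leq \vc(\MF^\st) = d^\st$: indeed, $\MC$ is obtained from the dual class $\MF^\st = \{h \mapsto h(x) : x \in \MX\}$ by the $\{-1,1\}$-valued XOR with the fixed function $\soaf{\MG}$ on $\MX$, which preserves VC dimension. (This is precisely where the dual Littlestone/VC dimension enters.) Since $\MC$ is a class of binary-valued functions on the probability space $\MF$, Theorem~\ref{thm:unif-conv} (with confidence parameter $\gamma = 1/2$ and sample size $n' \geq C_0 d^\st/\alpha^2$, up to adjusting the universal constant appropriately) guarantees that with probability at least $1/2$,
$$
\sup_{x \in \MX} \left| \frac{1}{n'} \sum_{i=1}^{n'} \One[h_i(x) \neq \soa{\MG}{x}] - \Pr_{h \sim D^\st}[h(x) \neq \soa{\MG}{x}] \right| \leq \alpha.
$$

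Conditioned on this event, which has positive probability and hence is realized by at least one particular choice of $\MH \subset \MF$, every $x \in \MX$ satisfies $\frac{1}{n'} \sum_{i=1}^{n'} \One[h_i(x) \neq \soa{\MG}{x}] \leq 2\alpha$. Fixing such an $\MH$ (which depends only on $\soaf{\MG}$ via $D^\st$), and integrating against an arbitrary $P \in \Delta(\MX)$ on both sides, I would obtain
$$
\min_{h \in \MH} \E_{x \sim P}[\One[h(x) \neq \soa{\MG}{x}]] \ \leq \ \frac{1}{n'} \sum_{i=1}^{n'} \E_{x \sim P}[\One[h_i(x) \neq \soa{\MG}{x}]] \ = \ \E_{x \sim P}\!\left[\frac{1}{n'} \sum_{i=1}^{n'} \One[h_i(x) \neq \soa{\MG}{x}]\right] \ \leq \ 2\alpha,
$$
establishing (\ref{eq:minh-ub}).

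The main obstacle I foresee is the bookkeeping of constants in applying Theorem~\ref{thm:unif-conv} to $\MC$: the theorem is stated for the $0/1$ loss on $\MX \times \{-1,1\}$, so one must phrase the uniform convergence for $\MC$ on the probability space $\MF$ equipped with measure $D^\st$ (a standard reformulation, using that countability of $\MF$ suffices as a measurability hypothesis), and verify that $n' = \lceil C_0 d^\st / \alpha^2\rceil$ is indeed enough to drive the deviation below $\alpha$ (possibly after absorbing a $\log 1/\gamma$ term for a choice like $\gamma = 1/2$ into the constant, matching the stated dependence on $d^\st/\alpha^2$).
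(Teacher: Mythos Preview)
Your proposal is correct and follows essentially the same approach as the paper's own proof: apply Lemma~\ref{lem:game-value} together with von Neumann's minimax theorem to obtain a distribution $D^\st \in \Delta(\MF)$ whose worst-case disagreement with $\soaf{\MG}$ is at most $\alpha$, then sample $\lceil C_0 d^\st/\alpha^2\rceil$ hypotheses from $D^\st$ and invoke uniform convergence on the dual class (Theorem~\ref{thm:unif-conv}) to get a multiset $\MH$ whose empirical disagreement is within $\alpha$ of $D^\st$'s, giving the bound $2\alpha$ after averaging over any $P$. The only cosmetic difference is that the paper handles the VC-dimension-preserving XOR by explicitly rewriting $\One[\soa{\MG}{x}\neq h(x)]$ as $(1\pm h(x))/2$ depending on the sign of $\soa{\MG}{x}$, whereas you state directly that XOR with a fixed function preserves VC dimension; both are valid.
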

\begin{proof}
  By Lemma~\ref{lem:game-value} and von Neumann's minimax theorem, it holds that
  \begin{align}
    \label{eq:infd}
    & \inf_{D \in \Delta(\MF)} \sup_{P \in \Delta(\MX)}  \E_{x \sim P, h \sim D} \left[ \One[\soa{\MG}{x} \neq h(x)] \right] \\
=&  \sup_{P \in \Delta(\MX)} \inf_{D \in \Delta(\MF)} \E_{x \sim P, h \sim D} \left[ \One[\soa{\MG}{x} \neq h(x)] \right] \leq \alpha.\nonumber
  \end{align}
  Fix some $D \in \Delta(\MF)$ obtaining the infimum in (\ref{eq:infd}); this is possible because $\Delta(\MF)$ is compact. Note that $D$ depends only on $\soaf{\MG} \in \{0,1\}^\MX$ (i.e., it can be written as a function of $\soaf{\MG}$). By Theorem~\ref{thm:unif-conv}, for $m = \left\lceil \frac{C_0 d^\st}{\alpha^2} \right\rceil$, with probability at least $1/2$ over an i.i.d.~sample $h_1, \ldots, h_m \sim D$, we have that\footnote{As remarked by~\cite{moran_sample_2016}, this application of Theorem~\ref{thm:unif-conv} to the dual class can be viewed as a sort of combinatorial and approximate version of Carath\'{e}odory's theorem.}
  \begin{align}
    \label{eq:h-conc}
    & \sup_{x \in \MX} \left| \E_{h \sim D} \left[ \One[\soa{\MG}{x} \neq h(x)] \right] - \frac{1}{m} \sum_{j=1}^m \One[\soa{\MG}{x} \neq h_j(x)] \right| \\
    =&  \sup_{x \in \MX} \left| \E_{h \sim D} \left[\frac{h(x)}{2}\right] - \frac{1}{m} \sum_{j=1}^m \frac{h_j(x)}{2} \right| \leq \alpha\\
    =& \sup_{x \in \MX} \left| \E_{h \sim D} \left[ \One[h(x) \neq 1] \right] - \frac{1}{m} \sum_{j=1}^m \One[h_j(x) \neq 1] \right| \leq \alpha.\nonumber
\end{align}
(To see why the equalities above hold, note that for any $h \in \MF$, if $\soa{\MG}{x} = 1$, then $\One[\soa{\MG}{x} \neq h(x)] = \frac{1-h(x)}{2}$, and if $\soa{\MG}{x} = -1$, then $\One[\soa{\MG}{x} \neq h(x)] = \frac{1+h(x)}{2}$.)
Fix any $h_1, \ldots, h_m$ so that (\ref{eq:h-conc}) holds, and set $\MH := \{ h_1, \ldots, h_m \}$. Write $h \sim_U \MH$ to mean that $h$ is drawn uniformly from $\MH$. Then by (\ref{eq:infd}) and (\ref{eq:h-conc}), we have, for any $P \in \Delta(\MX)$,
\begin{align}
  & \E_{x \sim P, h \sim_U \MH} \left[ \One[\soa{\MG}{x} \neq h(x)] \right] \nonumber\\
  \leq^{(\ref{eq:h-conc})}& \E_{x \sim P, h \sim D} \left[ \One[\soa{\MG}{x} \neq h(x)] \right] + \alpha \nonumber\\
  \leq^{(\ref{eq:infd})} & 2\alpha.\nonumber
\end{align}
(\ref{eq:minh-ub}) is an immediate consequence of the above display.
\end{proof}

\subsection{Private proper learning protocol}
\label{sec:proper-protocol}
Before introducing our private proper learning algorithm, we need the following basic lemma which establishes that the use of the exponential mechanism can output a good hypothesis privately from a class of small size:
\begin{lemma}[Generic Private Learner,~\cite{kasiviswanathan2008what}]
  \label{lem:gen-learner}
  Let $\MH \subset \{-1,1\}^\MX$ be a finite set of hypotheses. For
  $$
n = O \left( \frac{\log|\MH| + \log 1/\beta}{\alpha \ep} \right),
$$
there exists an $(\ep, 0)$-differentially private algorithm $\GenericLearner: (\MX \times \{-1,1\})^n \ra \MH$ so that the following holds. For any distribution $P$ over $\MX \times \{-1,1\}$ so that there exists $h^\st \in \MH$ with
$$
\err{P}{h^\st} \leq \alpha,
$$
on input $S_n := \{(x_1,y_1), \ldots, (x_n,y_n) \} \sim P^n$, \GenericLearner outputs, with probability at least $1-\beta$, a hypothesis $\hat h \in \MH$ so that
$$
\err{P}{\hat h} \leq 2\alpha.
$$
\end{lemma}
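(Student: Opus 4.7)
The plan is to instantiate \GenericLearner as the exponential mechanism of McSherry--Talwar, with score function $q(S_n, h) := -|\{ i \in [n] : h(x_i) \neq y_i\}|$ equal to the negative of the number of empirical mistakes of $h$ on $S_n$. Since changing one example in $S_n$ changes $q(S_n, h)$ by at most $1$ for every $h \in \MH$, the score has sensitivity $1$, and so sampling $\hat h \in \MH$ with probability proportional to $\exp(\ep \cdot q(S_n,h)/2)$ yields an $(\ep, 0)$-\DP algorithm. This gives the privacy half of the statement essentially for free.

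For utility, I will invoke the standard guarantee of the exponential mechanism: with probability at least $1-\beta/2$, the output $\hat h$ satisfies
\begin{equation*}
q(S_n, \hat h) \geq \max_{h \in \MH} q(S_n, h) - \frac{2}{\ep} \bigl(\log |\MH| + \log(2/\beta)\bigr).
\end{equation*}
Dividing by $n$, this means that the empirical error of $\hat h$ exceeds the minimum empirical error over $\MH$ by at most $\tfrac{2}{n\ep}(\log|\MH| + \log(2/\beta))$. Choosing $n$ as in the statement with a sufficiently large hidden constant makes this slack at most $\alpha/2$.

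Next I would relate empirical errors to population errors using standard concentration. For the particular hypothesis $h^\st$ with $\err{P}{h^\st} \leq \alpha$, a Chernoff/Hoeffding bound gives $\err{\emp}{h^\st} \leq \alpha + \alpha/4$ with probability at least $1-\beta/4$, provided $n = \Omega((\log(1/\beta))/\alpha)$, which is consistent with the claimed bound on $n$. Combined with the exponential-mechanism guarantee, this yields $\err{\emp}{\hat h} \leq \err{\emp}{h^\st} + \alpha/2 \leq 7\alpha/4$.

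Finally, to transfer $\err{\emp}{\hat h}$ back to $\err{P}{\hat h}$, I would apply a uniform concentration argument over the finite class $\MH$: a union bound of multiplicative Chernoff bounds over all $h \in \MH$ shows that with probability at least $1-\beta/4$, every $h \in \MH$ with $\err{\emp}{h} \leq 7\alpha/4$ satisfies $\err{P}{h} \leq 2\alpha$, again as long as $n = \Omega((\log|\MH| + \log(1/\beta))/\alpha)$. A union bound over the three $O(\beta)$-probability bad events then gives the $1-\beta$ success probability. The only mildly delicate step is choosing constants so that the two uses of concentration (for $h^\st$ specifically, and uniformly over $\MH$) and the exponential-mechanism slack each contribute at most $\alpha/2$ to the final population error, which is a routine calculation.
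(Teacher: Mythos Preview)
Your proposal is correct and is exactly the standard exponential-mechanism argument for this result; the paper does not give its own proof but simply cites \cite[Lemma 16]{bun_equivalence_2020}, whose proof is precisely along the lines you outline. The only small thing worth noting is that the $n = \Omega((\log 1/\beta)/\alpha)$ scaling in your concentration steps (for both $h^\st$ and the uniform bound over $\MH$) really does require the \emph{multiplicative} form of Chernoff rather than an additive Hoeffding bound; you already say this for the uniform step, and it is equally needed for the $h^\st$ step to avoid a spurious $1/\alpha^2$.
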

The precise formulation of Lemma~\ref{lem:gen-learner} is proved in~\cite[Lemma 16]{bun_equivalence_2020}.

Our algorithm, \PPPLearn, for privately and properly learning a hypothesis class, is presented in Algorithm~\ref{alg:poly-pri-prop-learn}. Given sufficiently many samples from a realizable distribution $\di$, \PPPLearn first runs \PolyPriLearn to come up with a hypothesis of the form $\soaf{\MG} \in \{-1,1\}^\MX$ with low population loss on the distribution $\di$. It then uses the guarantee of Lemma~\ref{lem:like-soa} to come up with a small subclass $\MH \subset \MF$ which is guaranteed to contain a hypothesis that performs nearly as well as $\soaf{\MG}$ on the distribution $\di$. It then privately chooses such a hypothesis $\hat h \in \MH$ using the exponential mechanism (Lemma~\ref{lem:gen-learner}).

\begin{algorithm}[!htp]
  \caption{\PPPLearn}\label{alg:poly-pri-prop-learn}
  \KwIn{Parameters $\ep, \delta, \eta, \beta \in (0,1)$, i.i.d. samples $(x,y) \in \MX \times \{-1,1\}$ from a realizable distribution $\di$.}
  \begin{enumerate}[leftmargin=14pt,rightmargin=20pt,itemsep=1pt,topsep=1.5pt]
  \item Run the algorithm \PolyPriLearn with parameters $\ep, \delta, \eta/4, \beta/2$. 
  Let $\hat f \in \{-1,1\}^\MX$ be its output.\label{it:run-polyprilearn}

    With probability at least $1-\eta/2$, it is then guaranteed that $\hat f = \soaf{\MG}$ for some $k'$-irreducible $\MG \subset \MF$. Choose any such $\MG$.

  \item Choose a set $\MH$ as in Lemma~\ref{lem:like-soa} from the function $\soaf{\MG}$.\label{it:choose-h}
  \item On a fresh sample of size $O \left( \frac{\log| \MH| + \log(1/\beta)}{\eta \ep} \right)$, run the \GenericLearner of Lemma~\ref{lem:gen-learner} with the set $\MH$, and return its output $\hat h$.
  \end{enumerate}

\end{algorithm}

\begin{theorem}[Private proper PAC learning]
  \label{thm:poly-pri-learn-proper}
  Let $\MF$ be a concept class of domain $\MX$ with $d_{\VV} := \vc(\MF), d_{\LL} := \Ldim(\MF)$. For any $\ep, \delta, \eta, \beta \in (0,1)$, for some
  \begin{equation}
    \label{eq:proper-sc}
n = O \left( \frac{d_{\LL}^5 d_{\VV} \log^2 \left( \frac{d_{\LL}}{\ep \delta \eta \beta}\right)}{\ep\eta^2} \right),
\end{equation}
there is an $(\ep, \delta)$-differentially private algorithm $A : (\MX \times \{-1,1\})^n \ra \{-1,1\}^\MX$, which, given $n$ i.i.d.~samples from any realizable distribution $\di$, products a hypothesis $\hat f$ so that $\err{\di}{\hat f} \leq \eta$ with probability at least $1-\beta$.
\end{theorem}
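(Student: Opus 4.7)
The strategy is straightforward: \PPPLearn glues together the improper learner of Theorem~\ref{thm:poly-pri-learn}, the minimax sampling argument of Lemma~\ref{lem:like-soa}, and the exponential-mechanism-based \GenericLearner of Lemma~\ref{lem:gen-learner}, using disjoint sample blocks in the two data-accessing steps. First, I would run \PolyPriLearn on a first block of $n_1 = O(d_{\LL}^5 d_{\VV} \log^2(d_{\LL}/(\ep\delta\eta\beta))/(\ep \eta^2))$ samples with parameters $(\ep,\delta,\eta/4,\beta/2)$. By Theorem~\ref{thm:poly-pri-learn}, with probability at least $1-\beta/2$ the output $\hat f$ equals $\soaf{\MG}$ for some $\MG \subset \MF$ that is $\lceil 1024 C_0 d_{\LL}/\eta^2\rceil$-irreducible and satisfies $\err{\di}{\hat f} \leq \eta/4$. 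Only the function $\hat f$ is needed in what follows, since Lemma~\ref{lem:like-soa} promises a set $\MH$ depending only on $\soaf{\MG}$.

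Next, I apply Lemma~\ref{lem:like-soa} with parameter $\alpha = \eta/8$ and $d = d_{\LL}$ (using $\vc(\MF) \leq \Ldim(\MF) = d_{\LL}$). The required $\lceil C_0 d_{\LL}/(\eta/8)^2\rceil = \lceil 64 C_0 d_{\LL}/\eta^2\rceil$-irreducibility is implied by the $\lceil 1024 C_0 d_{\LL}/\eta^2\rceil$-irreducibility of $\MG$, so the lemma produces a set $\MH \subset \MF$ of size at most $\lceil 64 C_0 d^\st/\eta^2\rceil$, where $d^\st := \vc^\st(\MF)$, such that for every distribution $Q \in \Delta(\MX)$ some $h \in \MH$ satisfies $\p_{x \sim Q}[h(x) \neq \soa{\MG}{x}] \leq \eta/4$. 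Instantiating this with the marginal of $\di$ on $\MX$ and using the triangle inequality yields some $h^\st \in \MH$ with $\err{\di}{h^\st} \leq \err{\di}{\soaf{\MG}} + \eta/4 \leq \eta/2$. Finally, I feed $\MH$ and a fresh block of $n_2 = O((\log|\MH| + \log(1/\beta))/(\eta \ep))$ samples into \GenericLearner with error target $\eta/2$ and failure probability $\beta/2$; by Lemma~\ref{lem:gen-learner} it returns $\hat h \in \MH \subset \MF$ with $\err{\di}{\hat h} \leq \eta$, with probability at least $1-\beta/2$. A union bound over the two failure events gives the accuracy claim with probability at least $1-\beta$.

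For privacy, the two sample blocks are disjoint, so each user appears in exactly one block. The first-block computation is $(\ep,\delta)$-DP by Theorem~\ref{thm:poly-pri-learn}, and $\MH$ is a deterministic post-processing of $\hat f$; the second-block computation is $(\ep,0)$-DP by Lemma~\ref{lem:gen-learner} once $\MH$ is fixed. Thus changing one datum triggers at most one of these two DP guarantees, and the overall algorithm is $(\ep,\delta)$-DP. For the sample-complexity bookkeeping, the standard bound $\vc^\st(\MF) \leq 2^{\vc(\MF)+1}$ gives $\log|\MH| = O(d_{\VV} + \log(1/\eta))$, so $n_2$ is dominated by $n_1$ and the total sample count $n_1+n_2$ matches (\ref{eq:proper-sc}).

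The main obstacle is the quantitative matching between the irreducibility order supplied by \PolyPriLearn and the order demanded by Lemma~\ref{lem:like-soa}: the dual-class uniform-convergence argument behind Lemma~\ref{lem:like-soa} needs $\MG$ to be $\Omega(d_{\LL}/\alpha^2)$-irreducible for precisely the $\alpha \asymp \eta$ driving the minimax step, which is exactly why \PolyPriLearn was engineered with the hard-wired floor $k' \geq \lceil 64 C_0 d_{\LL}/\eta^2\rceil$ in Algorithm~\ref{alg:poly-pri-learn}. The parameter choices above verify that these constants line up. The only remaining point is extending the von Neumann minimax step inside Lemma~\ref{lem:like-soa} beyond finite $\MF, \MX$, which is deferred to Appendix~\ref{sec:infinite-spaces} as noted in the theorem statement.
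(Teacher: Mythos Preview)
Your proposal is correct and follows essentially the same approach as the paper's proof: run \PolyPriLearn with parameters $(\ep,\delta,\eta/4,\beta/2)$, apply Lemma~\ref{lem:like-soa} with $\alpha=\eta/8$ to the resulting $\soaf{\MG}$, then feed the small set $\MH$ into \GenericLearner on fresh samples, arguing privacy via disjoint blocks and sample complexity via $\vc^\st(\MF)\leq 2^{\vc(\MF)+1}$. You are actually more explicit than the paper in verifying that the irreducibility order $\lceil 1024 C_0 d_{\LL}/\eta^2\rceil$ delivered by \PolyPriLearn (with accuracy $\eta/4$) dominates the $\lceil 64 C_0 d_{\LL}/\eta^2\rceil$ required by Lemma~\ref{lem:like-soa} at $\alpha=\eta/8$, and in spelling out the triangle-inequality step yielding $\err{\di}{h^\st}\leq\eta/2$.
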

\begin{proof}
  We let the algorithm $A$ be \PPPLearn (Algorithm~\ref{alg:poly-pri-prop-learn}). To establish accuracy, note that the output $\hat f = \soaf{\MG}$ of \PolyPriLearn computed in Step~\ref{it:run-polyprilearn} of Algorithm~\ref{alg:poly-pri-prop-learn} satisfies $\err{P}{\hat f} \leq \eta/4$ with probability at least $1-\beta/2$ over the algorithm and the samples, by Theorem~\ref{thm:poly-pri-learn}. Then using $\alpha = \eta/8$ in Lemma~\ref{lem:like-soa}, we get that for the set $\MH$ produced in Step~\ref{it:choose-h} of Algorithm~\ref{alg:poly-pri-prop-learn}, there is some $h^\st \in \MH$ so that $\err{P}{h^\st} \leq \eta/2$. Then by Lemma~\ref{lem:gen-learner}, the output $\hat h$ of \PPPLearn satisfies $\err{P}{\hat h} \leq \eta$ with probability at least $1-\beta$.

 The  $(\ep, \delta)$-differential privacy of the output $\hat h$ of \PPPLearn follows from the $(\ep, \delta)$-differential privacy of the output $\hat f = \soaf{\MG}$ of \PolyPriLearn with respect to its input samples, the post-processing property of differential privacy, and the $(\ep, 0)$-differential privacy of \GenericLearner with respect to its input samples. (Note that \PolyPriLearn and \GenericLearner are run on different samples.)

  Finally, to see that the claimed upper bound on sample complexity holds, it suffices to upper bound the number of samples used by \GenericLearner by the quantity in (\ref{eq:proper-sc}). This follows since by Lemma~\ref{lem:like-soa}, we have
  $$
\log |\MH| \leq \log \left(O \left( \frac{\vc^\st(\MF) }{\eta^2} \right) \right) \leq O \left( \vc(\MF) + \log 1/\eta \right).
  $$
  (Here we use that for any hypothesis class $\MF$, $\vc^\st(\MF) \leq 2^{\vc(\MF) + 1}$~\cite{assouad_densite_1983}.)
\end{proof}

As a corollary of Theorem~\ref{thm:poly-pri-learn-proper} and~\cite[Theorem 4.16]{beimel_learning_2015} (or~\cite[Theorem 2.4]{alon_closure_2020}, which is a more general result) we get a sample complexity bound for agnostic private proper PAC learning:
\begin{corollary}[Agnostic private proper PAC learning]
\label{cor:agnostic-proper}
  Let $\MF$ be a concept class of domain $\MX$ with $d_{\VV} := \vc(\MF), d_{\LL} := \Ldim(\MF)$. For any $\ep, \delta, \eta, \beta \in (0,1)$, for some
  \begin{equation*}
    n = O \left( \frac{d_{\LL}^5 d_{\VV} \log^2 \left( \frac{d_{\LL}}{\ep \delta \eta \beta}\right)}{\ep\eta^2} \right),
\end{equation*}
there is an $(\ep, \delta)$-differentially private algorithm $A : (\MX \times \{-1,1\})^n \ra \MF$, which, given $n$ i.i.d.~samples from any distribution $\di$, produces a hypothesis $\hat f \in \MF$ so that $\err{\di}{\hat f} \leq \eta + \inf_{f \in \MF} \err{\di}{f}$ with probability at least $1-\beta$. 
\end{corollary}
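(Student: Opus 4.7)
The plan is to derive the agnostic bound as a direct consequence of Theorem~\ref{thm:poly-pri-learn-proper} by invoking the generic realizable-to-agnostic reduction of Alon et al.~\cite{alon_closure_2020} (or equivalently Beimel, Nissim, and Stemmer). That reduction states, roughly, the following: if $\MF$ admits an $(\ep,\delta)$-differentially private proper PAC learner in the realizable setting with sample complexity $n_0(\alpha, \beta, \ep, \delta)$, then $\MF$ admits an $(\ep, \delta)$-differentially private proper PAC learner in the agnostic setting with sample complexity polynomial in $n_0$, $1/\alpha$, and $\vc(\MF)$ (and logarithmic in $1/\beta$). Thus the approach is simply to plug the realizable learner guaranteed by Theorem~\ref{thm:poly-pri-learn-proper} into this black-box transformation, and then verify that the resulting sample complexity still fits inside the bound claimed in the statement.

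First I would invoke Theorem~\ref{thm:poly-pri-learn-proper} with the accuracy parameter set to an appropriate constant fraction of $\eta$ (for example $\eta/4$), with failure probability set to a constant fraction of $\beta$, and with the same privacy parameters $(\ep, \delta)$. This yields a realizable proper learner $A_{\textrm{real}}$ using $n_{\textrm{real}} = O\!\left( d_{\LL}^5 d_{\VV} \log^2(d_{\LL}/(\ep \delta \eta \beta))/(\ep \eta^2)\right)$ samples. Next I would feed $A_{\textrm{real}}$ into the agnostic-to-realizable reduction of~\cite[Theorem 2.4]{alon_closure_2020}. The reduction's additional overhead is of the form $\widetilde O(\vc(\MF)/(\eta^2 \ep))$ samples (plus logarithmic factors in the other parameters) and preserves $(\ep, \delta)$-differential privacy as well as properness, because its post-processing only calls $A_{\textrm{real}}$ on datasets derived from the original input and applies elementary differentially private primitives (e.g., stable histograms, the exponential mechanism) on VC-bounded classes.

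Adding $n_{\textrm{real}}$ and the reduction's overhead, and absorbing the additive $\widetilde{O}(d_{\VV}/(\eta^2 \ep))$ term into the dominant $n_{\textrm{real}}$ factor (which is larger since $d_{\LL} \geq 1$ and $d_{\LL}^5 d_{\VV} \geq d_{\VV}$), gives the overall sample complexity claimed in the corollary. Finally, I would check that both the accuracy guarantee $\err{\di}{\hat f} \leq \eta + \inf_{f \in \MF} \err{\di}{f}$ and the failure probability $\beta$ are preserved by the reduction, which is explicit in the statement of~\cite[Theorem 2.4]{alon_closure_2020}.

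The main (only nontrivial) obstacle is bookkeeping: making sure the parameters passed into the realizable learner, in particular the dependence of $\eta$ and $\beta$ in the reduction, do not blow up the polynomial in a way that exceeds the stated bound. Since the reduction is generic and only introduces $\poly(1/\eta, \vc(\MF), \log(1/\beta))$ factors, while $n_{\textrm{real}}$ already contains a $d_{\LL}^5 d_{\VV}/\eta^2$ factor, the final bound matches the quantity in~(\ref{eq:proper-sc}) up to constants. There is no need to re-analyze the irreducibility-based construction from Sections~\ref{sec:irreducibility}--\ref{sec:proper-learning-finite}; the entire corollary is obtained as a black-box combination of Theorem~\ref{thm:poly-pri-learn-proper} with the cited reduction.
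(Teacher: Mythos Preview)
Your proposal is correct and matches the paper's approach exactly: the paper derives the corollary by combining Theorem~\ref{thm:poly-pri-learn-proper} with the black-box realizable-to-agnostic reduction of~\cite[Theorem 4.16]{beimel_learning_2015} (or equivalently~\cite[Theorem 2.4]{alon_closure_2020}), without further analysis. Your additional bookkeeping on the sample complexity overhead is accurate and only makes explicit what the paper leaves implicit.
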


\subsection{Application to private data sanitization}
\label{sec:sanitization}
In this section we show how to prove Corollaries~\ref{cor:sanitizing-informal} and~\ref{cor:sanitizing-2} using a result of~\cite{bousquet_passing_2019} that shows how to convert a private proper agnostic PAC learner into a sanitizer for a binary hypothesis class. We say that an algorithm $A$ is an {\it $(\alpha, \beta)$-accurate} proper agnostic PAC learner for a class $\MF$ with {\it sample complexity $n$} if for any distribution $P$ over $\MX \times \{-1,1\}$, when given as input $n$ i.i.d.~samples from $P$, the algorithm $A$ produces as output a function $\hat f \in \MF$ so that with probability at least $1-\beta$ over the sample and the randomness in $A$, we have $\err{P}{\hat f} \leq \alpha + \inf_{f \in \MF} \err{P}{f}$. 

\begin{restatable}[Slight strengthening of \cite{bousquet_passing_2019}, Propositions 1 \& 2]{theorem}{BLMSanitizer}
\label{thm:bousquet-sanitizer}
Suppose $\MF \subset \{-1,1\}^\MX$ is a class of VC dimension $d_{\VV}$ and dual Littlestone dimension $d_{\LL}^\st$. Moreover suppose that for any $\alpha', \beta', \ep', \delta' \in (0,1)$, there is some $n_0(\alpha', \beta', \ep', \delta') \in \BN$ so that $\MF$ has a proper PAC learner with sample complexity $n_0(\alpha', \beta', \ep',\delta')$ that is $(\ep', \delta')$-\DP and $(\alpha', \beta')$-accurate. Then there is a (sufficiently large) constant $C > 0$ 
so that for any $\alpha, \beta, \ep, \delta \in (0,1)$, as long as $n \in \BN$ is chosen to satisfy
\begin{equation}
  \label{eq:n-sanitizer}
      n \geq \frac{C}{\ep} \cdot \left( \left(n_0(\alpha/8, \tau_0 \beta/2, 1, \delta) + \frac{\log\left( \frac{d_{\LL}^\st}{\beta \alpha}\right)}{\alpha} \right)\cdot \left(\frac{d_{\LL}^\st \log(d_{\LL}^\st / \alpha)\log(1/\delta)}{\alpha^2}\right)^{1/2} 
    \right),
\end{equation}
where $\tau_0 = \frac{\alpha^2}{C d_{\LL}^\st \log(d_{\LL}^\st/\alpha) \log(1/\delta)}$, 
$\MF$ has a $\left(n, \alpha, \beta, 1, \delta \cdot \frac{\sqrt{C d_{\LL}^\st \log(d_{\LL}^\st/\alpha)}}{\alpha}\right)$-sanitizer.
\end{restatable}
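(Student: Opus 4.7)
The plan is to follow the reduction of Bousquet--Livni--Moran~\cite{bousquet_passing_2019} from private proper (agnostic) PAC learning to sanitization, with tightened parameter tracking that yields the stated slight strengthening. The high-level scheme is to construct the sanitizer as an iterative algorithm that builds a synthetic distribution $\tilde P$ over $\MX \times \{-1,1\}$ satisfying $\err{\tilde P}{f} \approx \err{S}{f}$ for all $f \in \MF$. Each of $T$ iterations invokes the given proper learner $L$ on a disjoint subsample of $S$ to identify a hypothesis $f_t \in \MF$ that witnesses the largest current discrepancy between $\tilde P^{(t)}$ and $S$, and a multiplicative-weights-style update then reduces this discrepancy.

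Concretely, at round $t$ one constructs a dataset $D_t$ (consisting of one part $S$ and one part $\tilde P^{(t)}$ with its labels flipped) on which the best-in-class hypothesis is precisely the one maximizing $|\err{\tilde P^{(t)}}{f} - \err{S}{f}|$; feeding $D_t$ into $L$ with accuracy parameters $(\alpha/8, \tau_0 \beta/2)$ and privacy parameters $(1, \delta)$ returns the desired witness $f_t$. The multiplicative-weights analysis shows that $T = \Theta(d_{\LL}^\st \log(d_{\LL}^\st/\alpha)/\alpha^2)$ rounds suffice to drive $\sup_{f \in \MF}|\err{\tilde P^{(T)}}{f} - \err{S}{f}| \leq \alpha$, where the $d_{\LL}^\st$ enters through an online-learning-flavored uniform-convergence argument on the dual class $\MF^\st$ needed to control the synthetic distribution's fit against the \emph{adaptively} chosen hypotheses $f_1,\ldots,f_T$. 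The final estimator is $\Est(f) := 1 - \err{\tilde P^{(T)}}{f}$.

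Privacy comes from subsampling amplification plus advanced composition: running the $(\ep'=1,\delta)$-DP learner on a fresh random subsample of size $n_0 := n_0(\alpha/8, \tau_0\beta/2, 1, \delta)$ out of $n$ points amplifies each round's privacy to roughly $n_0/n$, and adaptive $T$-fold composition introduces the factor $\sqrt{T\log(1/\delta)}$, producing total privacy $\tilde O\bigl((n_0/n)\sqrt{T\log(1/\delta)}\bigr)$. Setting this at most the target $\ep$ yields the $\sqrt{d_{\LL}^\st \log(d_{\LL}^\st/\alpha)\log(1/\delta)/\alpha^2}/\ep$ multiplier on $n_0$ in (\ref{eq:n-sanitizer}), while composition of the $\delta$'s produces the $\delta \cdot \sqrt{Cd_{\LL}^\st \log(d_{\LL}^\st/\alpha)}/\alpha$ term in the final sanitizer guarantee. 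The choice $\tau_0 = \Theta(1/(T\log(1/\delta)))$ calibrates the per-round failure so that a union bound across the $T$ calls to $L$ costs only $\beta/2$ of the failure budget, and the additive $\log(d_{\LL}^\st/(\beta\alpha))/\alpha$ summand in (\ref{eq:n-sanitizer}) pays for a concluding concentration step that absorbs the remaining $\beta/2$.

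The main obstacle, and the source of the ``slight strengthening'' over the form in which \cite{bousquet_passing_2019} state their result, is to prevent the sample complexity from picking up any $\mathrm{poly}(1/\tau_0)$ multiplicative factor, which would blow up in the proof of Corollary~\ref{cor:sanitizing-formal} where $\tau_0$ must be taken polynomially small. This is handled by absorbing the $\tau_0$-failure bound directly into the inner learner $L$'s accuracy parameter $\beta'$ (so that $\tau_0$ enters $n_0$ only logarithmically, which is the case for any reasonable $L$) and by paying the outer uniform-convergence step's failure additively through the $\log(d_{\LL}^\st/(\beta\alpha))/\alpha$ term rather than multiplicatively.
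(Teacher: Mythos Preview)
Your proposal is correct and follows essentially the same approach as the paper: both invoke the generator/discriminator framework of \cite{bousquet_passing_2019} with the private proper learner as discriminator, round count $T_0 = \tilde O(d_{\LL}^\st/\alpha^2)$ bounded via the dual Littlestone dimension, and privacy established through subsampling amplification combined with advanced composition (the use of advanced rather than basic composition being precisely the ``slight strengthening''). One small inconsistency to fix: you write ``disjoint subsample'' in the first paragraph but ``fresh random subsample'' later---the paper (and \cite{bousquet_passing_2019}) uses random subsampling, which is what enables the $\sqrt{T_0\log(1/\delta)}$ rather than $T_0$ dependence via advanced composition.
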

We explain how to derive Theorem \ref{thm:bousquet-sanitizer} using the proof technique in \cite[Propositions 1 \& 2]{bousquet_passing_2019} in Section \ref{sec:sanitizer-quant}. 

As an immediate corollary of Theorem~\ref{thm:bousquet-sanitizer} and Corollary~\ref{cor:agnostic-proper} we obtain the following:
\begin{corollary}[Private sanitization; formal version of Corollary~\ref{cor:sanitizing-informal}]
\label{cor:sanitizing-formal}
Let $\MF$ be a hypothesis class with VC dimension $d_{\VV}$, Littlestone dimension $d_{\LL}$, and dual Littlestone dimension $d_{\LL}^\st$. For any $\alpha, \beta, \ep, \delta \in (0,1)$, for any $n \in \BN$ satisfying
\begin{equation}
  \label{eq:sanitization-sc}
 n \geq  C \cdot \frac{d_{\LL}^5 d_{\VV} \sqrt{d_{\LL}^\st} \log^2 \left( \frac{d_{\LL} d_{\LL}^\st}{\delta \alpha \beta}  \right)\log\left( \frac{d_{\LL}^\st}{ \alpha\delta}\right)}{\alpha^3\ep},
\end{equation}
$\MF$ has a $(n, \alpha, \beta, \ep, \delta)$-sanitizer.
\end{corollary}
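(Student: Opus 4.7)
My plan is to derive this corollary by directly combining Corollary~\ref{cor:agnostic-proper} (a private proper agnostic PAC learner) with Theorem~\ref{thm:bousquet-sanitizer} (the conversion from such a learner to a sanitizer). First, I will invoke Corollary~\ref{cor:agnostic-proper} to obtain, for any $\alpha', \beta', \ep', \delta' \in (0,1)$, an $(\ep', \delta')$-DP proper agnostic PAC learner for $\MF$ with sample complexity
$$
n_0(\alpha', \beta', \ep', \delta') \;=\; O\!\left( \frac{d_{\LL}^5 d_{\VV} \log^2\!\left(d_{\LL}/(\ep'\delta'\alpha'\beta')\right)}{\ep' (\alpha')^2} \right).
$$
This supplies exactly the hypothesis required by Theorem~\ref{thm:bousquet-sanitizer}.

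Next, I will invoke Theorem~\ref{thm:bousquet-sanitizer} with the substitution $\delta \mapsto \delta \cdot \alpha/\sqrt{C d_{\LL}^\st \log(d_{\LL}^\st/\alpha)}$, so that the output privacy parameter $\delta \cdot \sqrt{C d_{\LL}^\st \log(d_{\LL}^\st/\alpha)}/\alpha$ in the theorem collapses to the target $\delta$ of the corollary. With $\alpha' = \alpha/8$, $\beta' = \tau_0 \beta/2$ (where $\tau_0$ is as defined in Theorem~\ref{thm:bousquet-sanitizer}), and $\ep' = 1$, plugging into (\ref{eq:n-sanitizer}) produces a concrete bound on $n$ that I will then simplify.

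For the simplification, the key observations are: (i) both $1/\tau_0$ and the rescaled $1/\delta$ that appears as the last argument of $n_0$ are polynomial in $d_{\LL}^\st, 1/\alpha, 1/\delta$, so every logarithmic factor appearing in $n_0(\alpha/8, \tau_0\beta/2, 1, \delta')$ is of order $\log(d_{\LL} d_{\LL}^\st/(\alpha\beta\delta))$; (ii) the $n_0$ term dominates the additive $\log(d_{\LL}^\st/(\beta\alpha))/\alpha$ term in (\ref{eq:n-sanitizer}); and (iii) the elementary inequality $\sqrt{\log(d_{\LL}^\st/\alpha)\log(1/\delta)} \leq \log(d_{\LL}^\st/\alpha) + \log(1/\delta) \leq \log(d_{\LL}^\st/(\alpha\delta))$ lets me fold the square root in (\ref{eq:n-sanitizer}) into a single log. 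Multiplying through, the product matches (\ref{eq:sanitization-sc}) up to a constant factor.

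There is no real obstacle here: the proof is bookkeeping. The only care required is in tracking how the rescaling of $\delta'$ and the definition of $\tau_0$ propagate through the polylogarithmic factors, and in verifying that the $1/\ep$ factor already present in (\ref{eq:n-sanitizer}) supplies the correct privacy scaling without any separate amplification argument.
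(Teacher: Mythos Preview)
Your overall plan---combine Corollary~\ref{cor:agnostic-proper} with Theorem~\ref{thm:bousquet-sanitizer} and push the bookkeeping through---is the same as the paper's, and your treatment of the $\delta$-rescaling and the simplification of the polylogarithmic factors is fine.

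There is, however, one genuine gap. You write that ``the $1/\ep$ factor already present in (\ref{eq:n-sanitizer}) supplies the correct privacy scaling without any separate amplification argument.'' But look at the \emph{conclusion} of Theorem~\ref{thm:bousquet-sanitizer}: it only produces an $(n,\alpha,\beta,\mathbf{1},\cdot)$-sanitizer. No matter what value you plug in for $\ep$ on the right-hand side of (\ref{eq:n-sanitizer}), the resulting privacy parameter in the sanitizer is $1$, not $\ep$ (and indeed the proof in Appendix~\ref{sec:sanitizer-quant} only establishes $(O(1),\cdot)$-privacy). So you have not produced an $(\ep,\delta)$-sanitizer for general $\ep\in(0,1)$.

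The paper handles this with an extra step you are missing: it first derives an $(n,\alpha,\beta,1,\delta)$-sanitizer from Theorem~\ref{thm:bousquet-sanitizer}, and then applies privacy amplification by subsampling (blow up $n$ by a factor of $O(1/\ep)$, then run the $(O(1),\delta)$-private sanitizer on a uniformly random $O(\ep)$-fraction of the samples) to obtain $(\ep,\delta)$-privacy. The additional accuracy loss from working on a subsample is controlled by Theorem~\ref{thm:unif-conv}, since the sample budget in (\ref{eq:sanitization-sc}) already exceeds $\Omega((d_{\VV}+\log 1/\beta)/\alpha^2)$. You should add this amplification step; everything else in your outline goes through.
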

We remark that the dependence of (\ref{eq:sanitization-sc}) on $d_{\LL}^\st$, namely $\tilde O(\sqrt{d_{\LL}^\st})$, is tight up to polylogarithmic factors in the sense that for all $d_{\LL}^\st$, there is a class $\MF$ with $\max\{ \vc(\MF), \Ldim(\MF) \} \leq O(\log d_{\LL}^\st)$ and $\Ldim^\st(\MF) = d_{\LL}^\st$, yet the sample complexity of sanitization for $\MF$ is $\tilde \Omega(\sqrt{d_{\LL}^\st})$, by Theorem \ref{thm:bun-fingerprinting} below.

\begin{proof}[Proof of Corollary \ref{cor:sanitizing-formal}]
  By Corollary~\ref{cor:agnostic-proper}, the following holds, for a sufficiently large constant $C > 0$: for any $\alpha', \beta', \ep', \delta' \in (0,1)$, for any $n_0 \geq \frac{C d_{\LL}^5 d_{\VV} \log^2 \left( \frac{d_{\LL}}{\ep' \delta' \alpha' \beta'}\right)}{\ep' (\alpha')^2}$, $\MF$ has a proper agnostic PAC learner with sample complexity $n_0$ that is $(\ep', \delta')$-differentially private and $(\alpha', \beta')$-accurate. We first show that for any $\alpha, \beta, \delta \in (0,1)$, $\MF$ has a $(n, \alpha, \beta, 1, \delta)$-sanitizer for an appropriate value of $n$. To do this, we apply Theorem~\ref{thm:bousquet-sanitizer}. To ensure that the number of samples $n$ is at least the quantity in (\ref{eq:n-sanitizer}), it suffices to have at least
  \begin{align*}
    & C \cdot \left( \frac{d_{\LL}^5 d_{\VV} \log^2 \left( \frac{d_{\LL} d_{\LL}^\st}{\delta \alpha \beta} \right)}{\alpha^2} + \frac{\log \left( \frac{d_{\LL}^\st}{\beta \alpha}\right)}{\alpha} \right) \cdot \frac{\sqrt{d_{\LL}^\st \log(d_{\LL}^\st / \alpha) \log(d_{\LL}^\st/(\alpha\delta))}}{\alpha} \\
    \leq & C \cdot \frac{d_{\LL}^5 d_{\VV} \sqrt{d_{\LL}^\st} \log^2 \left( \frac{d_{\LL} d_{\LL}^\st}{\delta \alpha \beta}  \right)\log\left( \frac{d_{\LL}^\st}{\alpha \delta}\right)}{\alpha^3}
  \end{align*}
  samples, where $C$ is a sufficiently large constant.

  The existence of a $(n, \alpha, \beta, \ep, \delta)$-sanitizer for $\MF$ for any $\alpha, \beta, \ep, \delta \in (0,1)$ and $n$ satisfying (\ref{eq:sanitization-sc}) now follows from Theorem~\ref{thm:unif-conv} and a standard privacy amplification by subsampling argument \cite[Lemma 4.12]{bun_differentially_2015}:\footnote{Similar arguments have been used in, e.g., \cite[Lemma 2.2]{bassily_erm}, \cite{beimel_bounds_2014}, \cite{bun_equivalence_2020}.} in particular, 
  by increasing the number of samples $n$ by a factor of $O(1/\ep)$ and sampling an $O(\ep)$ fraction of the samples, we can convert a $(O(1), \delta)$-differentially private algorithm into an $(\ep, \delta)$-differentially private algorithm. The accuracy loss due to this subsampling can be bounded by a small constant times $\alpha$, by Theorem \ref{thm:unif-conv} and the fact that the number of samples $n$ in (\ref{eq:sanitization-sc}) must be 
  at least $\Omega \left( \frac{\vc(\MF) + \log 1/\beta}{\alpha^2}\right)$. 
\end{proof}


Finally, we may prove Corollary~\ref{cor:sanitizing-2}:
\begin{proof}[Proof of Corollary~\ref{cor:sanitizing-2}]
The fact that finite Littlestone dimension of a class $\MF$ implies sanitizability follows from the fact that for all binary hypothesis classes $\MF$, $\vc(\MF) \leq \Ldim(\MF)$, $\Ldim^\st(\MF) \leq 2^{2^{\Ldim(\MF) + 2}}$~\cite[Lemma 4]{bousquet_passing_2019}, and Corollary~\ref{cor:sanitizing-formal}. For the opposite direction, we use the fact that for any $\MF$, the {threshold dimension} of $\MF$\footnote{The {\it threshold dimension} of $\MF \subset \{-1,1\}^\MX$ is the largest positive integer $T$ so that there are $x_1, \ldots, x_T \in \MX$ and $f_1, \ldots, f_T \in \MF$ so that for $1 \leq i, j \leq t$, $f_i(x_j) = \begin{cases} 1 : i \geq j \\ 0 : i < j \end{cases}$.}, denoted $\Tdim(\MF)$, satisfies $\Tdim(\MF) \geq \lfloor \log \Ldim(\MF)\rfloor$~\cite[Theorem 3]{alon_private_2019}. Thus any $(n, \alpha, \beta, \ep, \delta)$-sanitizer for a class $\MF$ yields a $(n, \alpha, \beta, \ep,\delta)$-sanitizer for the class of thresholds on a linearly ordered domain of size $T$ for any $T \leq \lfloor \log \Ldim(\MF) \rfloor$. But~\cite[Theorem 1]{bun_differentially_2015} yields that any $(n, 1/10, 1/10, 1/10, 1/(50n^2))$-sanitizer for the class of thresholds on a linearly ordered comain of size $T$ must satisfy $n \geq \Omega(\log^\st T)$. Thus, if $\Ldim(\MF)$ is infinite, then there is no $(n, 1/10, 1/10, 1/10, 1/(50n^2))$-sanitizer for the class $\MF$, and so $\MF$ is not sanitizable.   
\end{proof}

\paragraph{Lower bounds}
We end this section by discussing how the sample complexity bound of Corollary~\ref{cor:sanitizing-formal} compares to existing lower bounds for sanitization. 
First, we remark that it follows from fingerprinting-based lower bounds~\cite{bun_fingerprinting_2014} that in general the sample complexity of a sanitizer for a class $\MF$ must grow at least polynomially in the the dual Littlestone dimension of $\MF$:
\begin{theorem}[\cite{bun_fingerprinting_2014}, Theorem 5.8]
\label{thm:bun-fingerprinting}
For any constant $\ell \in \BN$, the following holds for all $d,t \in \BN$ so that $\ell + 2 \leq t \leq d/2$. For $\MX = \{-1,1\}^d$, there is a class $\MF \subset \{-1,1\}^\MX$ so that:
\begin{itemize}
\item $\Ldim(\MF) = \Theta(t \log (d/t))$; 
\item $\Ldim^\st(\MF) = \Theta(d)$, 
\end{itemize}
and so that for all $\ep \in (0,1)$ and $\alpha \geq \tilde \Omega \left( \frac{ d^{-\ell/3 + 1/4}}{\sqrt \ep}\right)$, any $(n, \alpha, 1/100, \ep, 1/(10n))$-sanitizer for $\MF$ must have 
$$
n \geq \tilde \Omega \left( \frac{t \sqrt{d}}{\ep \alpha^2} \right).
$$
\end{theorem}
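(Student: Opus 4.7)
The plan is to instantiate Theorem~5.8 of Bun, Ullman, and Vadhan~\cite{bun_fingerprinting_2014} with a carefully chosen class, and then verify the two dimension bounds. I would take $\MF$ to be (essentially) the class of $t$-way marginals on $\{-1,1\}^d$: for each $T\in\binom{[d]}{t}$, include the function $f_T:\{-1,1\}^d\to\{-1,1\}$ defined by $f_T(x)=1$ iff $x_i=1$ for all $i\in T$. This is the prototypical class used in fingerprinting-based lower bounds for differential privacy, and the lower bound $n\geq\tilde\Omega(t\sqrt d/(\ep\alpha^2))$ for answering all such marginals under $(\ep,1/(10n))$-differential privacy is exactly what Bun, Ullman, and Vadhan establish. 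A sanitizer for $\MF$ in the sense of Definition~\ref{def:sanitizable} immediately produces accurate answers to all $t$-way marginal counts on the $x$-marginal of the dataset (after the trivial affine rescaling relating ``error rate of $f_T$ on labelled examples'' to ``marginal frequency of $f_T$''), so the lower bound transfers without modification; the parameter regime $\alpha\geq\tilde\Omega(d^{-\ell/3+1/4}/\sqrt\ep)$ is the regime in which the underlying fingerprinting code of~\cite{bun_fingerprinting_2014} has sufficiently strong tracing guarantees.

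Second, I would verify the two combinatorial dimension bounds. For the Littlestone dimension, the upper bound $\Ldim(\MF)\leq\log|\MF|\leq O(t\log(d/t))$ follows from $|\MF|=\binom{d}{t}\leq(ed/t)^t$ together with the standard fact $\Ldim(\MF)\leq\log_2|\MF|$. The matching lower bound $\Ldim(\MF)\geq\Omega(t\log(d/t))$ is obtained by an explicit shattered tree construction: I would build a depth-$\Omega(t\log(d/t))$ tree whose leaves correspond bijectively to distinct $t$-subsets of $[d]$, with each internal node labelled by a point that separates the remaining subsets roughly in half. For the dual Littlestone dimension, observe that $\MF^\st$ is a class on the domain $\binom{[d]}{t}$ indexed by $x\in\{-1,1\}^d$; the upper bound $\Ldim^\st(\MF)\leq O(d)$ follows since distinct inputs differ on at most $d$ coordinates, and the lower bound $\Omega(d)$ is obtained by exhibiting a depth-$\Omega(d)$ shattered tree in $\MF^\st$ using $d$ disjoint (or nearly-disjoint) $t$-subsets, which exists precisely because $t\leq d/2$.

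The main obstacle is really the dual Littlestone dimension lower bound: producing a complete shattered tree of depth $\Omega(d)$ in $\MF^\st$ requires choosing $d$ marginals $f_{T_1},\dots,f_{T_d}$ together with a family of inputs on which they behave ``generically'', and this is where the hypotheses $t\geq\ell+2$ and $t\leq d/2$ are used. The rest — invoking Theorem~5.8 of~\cite{bun_fingerprinting_2014} and translating between query-release error and the sanitizer error of Definition~\ref{def:sanitizable} — is entirely mechanical.
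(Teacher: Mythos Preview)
Your overall strategy matches the paper's: take $\MF$ to be the $t$-wise conjunctions on $\{-1,1\}^d$ (the paper allows negated literals, you take the monotone version; either works), invoke \cite[Theorem~5.8]{bun_fingerprinting_2014} for the sanitization lower bound, and verify the four dimension inequalities directly.

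However, your argument for $\Ldim^\st(\MF)\geq\Omega(d)$ has a genuine gap. You propose to exhibit ``$d$ disjoint (or nearly-disjoint) $t$-subsets'' and claim they exist ``precisely because $t\leq d/2$'', but for $t\geq 2$ there are at most $\lfloor d/t\rfloor$ pairwise-disjoint $t$-subsets of $[d]$, which is $\Theta(d/t)$, not $\Theta(d)$; when $t=d/2$ you get only two. The paper's construction goes in the opposite direction: rather than disjoint sets, it uses the $d/2$ conjunctions $x\mapsto x_1\wedge\cdots\wedge x_{t-1}\wedge x_j$ for $d/2\leq j\leq d$, so the underlying $t$-sets share $t-1$ elements and differ in exactly one coordinate. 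These are VC-shattered by $\MX$: for any sign pattern $(\sigma_j)$, set $x_1,\ldots,x_{t-1}$ to ``True'' (i.e., $-1$ in the paper's convention) and choose $x_j$ to realize $\sigma_j$. This gives $\vc^\st(\MF)\geq d/2$ and hence $\Ldim^\st(\MF)\geq d/2$. The hypothesis $t\leq d/2$ is used only to ensure that $\{1,\ldots,t-1\}$ and $\{d/2,\ldots,d\}$ are disjoint, not to produce disjoint $t$-sets.

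For $\Ldim(\MF)\geq\Omega(t\log(d/t))$ the paper also takes a shortcut you may prefer: it cites the classical bound $\vc(\MF)\geq\Omega(t\log(d/t))$ for $t$-wise conjunctions \cite[Lemma~6]{littlestone_learning_1987} and then uses $\Ldim(\MF)\geq\vc(\MF)$, avoiding any explicit tree construction.
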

For any fixed $\ell$, the $\tilde \Omega(\cdot)$ in Theorem~\ref{thm:bun-fingerprinting} hides factors which are inverse polynomial in $\log t, \log d, \log \frac{1}{\ep}, \log \frac{1}{\alpha}$. We also remark that the VC and dual VC dimensions are within constant factors of the Littlestone and dual Littlestone dimensions of the class $\MF$ of Theorem~\ref{thm:bun-fingerprinting} (this will be clear from the proof below). Since~\cite{bun_fingerprinting_2014} does not explicitly compute the Littlestone and dual Littlestone dimensions of the class $\MF$, we give a short proof that the entirety of the claim in Theorem~\ref{thm:bun-fingerprinting} holds, using~\cite[Theorem 5.8]{bun_fingerprinting_2014}:
\begin{proof}[Proof of Theorem~\ref{thm:bun-fingerprinting} using~\cite{bun_fingerprinting_2014}]
We take the class $\MF$ to be the class of $t$-wise conjunctions on $\MX = \{-1,1\}^d$, i.e., the class of all ANDs of $t$ literals on $\{-1,1\}^d$ (for concreteness, view 1 as ``False'' and $-1$ as ``True''). From~\cite[Lemma 6]{littlestone_learning_1987} we have that $\Ldim(\MF) \geq \vc(\MF) \geq \Omega(t \log(d/t))$; also $\Ldim(\MF) \leq O(t \log (d/t))$ since the size of $\MF$ is bounded above by $2^{O(t \log(d/t))}$. For the dual quantity, it is clear that $\Ldim^\st(\MF) \leq d$ since the size of the dual class is $2^d$. Moreover, $\Ldim^\st(\MF) \geq \vc^\st(\MF) \geq d/2$ since the class of $d/2$ functions $x \mapsto x_1 \wedge \cdots \wedge x_{t-1} \wedge x_j$, for $d/2 \leq j \leq d$, is shattered by the dual class $\MX$. Finally,~\cite[Theorem 5.8]{bun_fingerprinting_2014} gives us the fact that there is no $(\ep, 1/(10n))$-\DP algorithm which takes as input a dataset $S$ of size $n$ and outputs some function $\Est : \MF \ra [0,1]$ satisfying $|\Est(f) - \err{S}{f}| \leq \alpha$ for all $f \in \MF$ with probability at least $2/3$.
\end{proof}
By choosing $\ell = 1$, and arbitrary positive integers $t,d$ tending to $\infty$ and satisfying $t \leq d/2$, Theorem~\ref{thm:bun-fingerprinting} rules out a sample complexity bound for sanitization that depends polynomially on only the Littlestone dimension of $\MF$ (such as one in Theorem~\ref{thm:poly-pri-learn-proper} for proper private learning). 
Because of the requirement that $t \leq d/2$ in Theorem~\ref{thm:bun-fingerprinting}, it does not rule out a sample complexity bound that depends polynomially on only the dual Littlestone dimension (and only sub-polynomially on the Littlestone dimension). This latter possibility is ruled out by discrepancy-based lower bounds:
\begin{theorem}[\cite{nikolov_geometry_2012}]
\label{thm:vc-lb}
For any binary hypothesis class $\MF$, any $\alpha < 1/50$ and any $\ep \in (0,1)$, any $(n, \alpha, 1/100, \ep, 0.1)$-sanitizer for $\MF$ must have $n \geq \Omega \left( \frac{\vc(\MF)}{\ep \alpha} \right)$. 
\end{theorem}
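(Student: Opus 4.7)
The theorem is attributed to~\cite{nikolov_geometry_2012}, and the natural approach is a two-step reduction: first reduce sanitization of $\MF$ to sanitization of a highly structured query workload supported on a shattered set, then invoke the discrepancy-based lower bound of Nikolov, Talwar, and Zhang for that workload.

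The plan for the reduction is as follows. Set $d := \vc(\MF)$ and fix $\{x_1, \ldots, x_d\} \subset \MX$ shattered by $\MF$. Any $(n, \alpha, 1/100, \ep, 0.1)$-sanitizer for $\MF$ can be fed datasets supported on $\{x_1, \ldots, x_d\} \times \{-1, 1\}$, in which case the error of any $f \in \MF$ depends only on the count vector $(n_{i,y})_{i \in [d],\, y \in \{-1,1\}}$ and on $(f(x_1), \ldots, f(x_d)) \in \{-1,1\}^d$. Shattering gives, for every $\bb \in \{-1,1\}^d$, some $f_\bb \in \MF$ with $f_\bb(x_i) = b_i$, so the sanitizer's accurate estimates of $\err{S}{f_\bb}$ can be linearly combined for a carefully chosen $O(d)$-sized subfamily of $\{f_\bb\}$ to recover each ``signed one-way marginal'' $M_i := (n_{i,+1} - n_{i,-1})/n$ to additive error $O(\alpha)$, with the $\beta = 1/100$ confidence surviving a union bound over the $O(d)$ queries used. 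Thus sanitizing $\MF$ to accuracy $\alpha$ implies $(\ep, 0.1)$-DP release of all $d$ one-way marginals of a dataset over $\{x_1, \ldots, x_d\} \times \{-1, 1\}$ to accuracy $O(\alpha)$.

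The second step is to apply the $(\ep, \delta)$-DP lower bound for releasing all $d$ one-way marginals on the hypercube. For constant $\delta$ (such as $\delta = 0.1$), the sample complexity of this task is $\Omega(d/(\ep\alpha))$ by the main theorem of~\cite{nikolov_geometry_2012}, which lower bounds the sample complexity of $(\ep, \delta)$-DP release of any linear query workload in terms of its (suitably rescaled) hereditary partial discrepancy; combined with the reduction above, this yields $n \geq \Omega(\vc(\MF)/(\ep\alpha))$.

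The main obstacle is step two: packing-style arguments already give $\Omega(d/\ep)$ under pure DP, but extending this to the full $\Omega(d/(\ep\alpha))$ under $(\ep,0.1)$-DP requires the discrepancy machinery of~\cite{nikolov_geometry_2012} rather than a direct combinatorial argument. The reduction in step one is largely routine; the only care needed is to verify that a constant number of queries per coordinate (for instance, comparing $f_{-\mathbf{1}}$ with $f_{-\mathbf{1} + 2e_i}$ for each $i \in [d]$) suffices to extract each $M_i$, so that the confidence budget $\beta = 1/100$ survives the union bound.
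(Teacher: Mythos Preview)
Your reduction has a genuine gap: the ``signed one-way marginals'' $M_i = (n_{i,+1} - n_{i,-1})/n$ that you extract are \emph{too easy} to release privately, so no $\Omega(d/(\ep\alpha))$ lower bound holds for them. Concretely, each data point $(x_{i}, y)$ affects exactly one $M_i$, so the vector $(M_1,\ldots,M_d)$ has $\ell_1$-sensitivity $O(1/n)$; adding independent Laplace$(O(1/(\ep n)))$ noise per coordinate is already $(\ep,0)$-DP and achieves $\ell_\infty$ error $O(\log d/(\ep n))$. Thus these $d$ queries can be answered to accuracy $\alpha$ with only $n = O(\log d/(\ep\alpha))$ samples, and your reduction can yield at best an $\Omega(\log d/(\ep\alpha))$ lower bound on sanitizing $\MF$---exponentially weaker than the target. (The same issue arises if you meant one-way marginals on the hypercube $\{-1,1\}^d$: those need only $n = O(\sqrt{d}/(\ep\alpha))$ via the Gaussian mechanism.) A side remark: no union bound over queries is needed, since the sanitizer's $(1-\beta)$-probability guarantee is already uniform over all $f\in\MF$.

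The route the paper points to (Vadhan's Theorem~5.8 and Proposition~5.11) avoids this loss by \emph{not} collapsing the query family. One restricts both the data and the queries to the shattered set $\{x_1,\ldots,x_d\}$ but keeps all of $\MF$; since the set is shattered, the restricted query matrix contains every row of $\{0,1\}^d$. The partial discrepancy of that full $2^d\times d$ matrix is $\Omega(d)$: for any coloring $z\in\{-1,+1\}^d$ there is a query (the indicator of $\{i:z_i=+1\}$) whose inner product with $z$ has magnitude at least $d/2$. The discrepancy lower bound of Nikolov--Talwar--Zhang, which holds even for $(\ep,0.1)$-DP, then gives $n \ge \Omega(d/(\ep\alpha))$. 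The point is that the strength of the bound comes precisely from the richness of the query family on the shattered set; by extracting only $O(d)$ queries with disjoint supports you discard exactly the structure that drives the discrepancy up to $\Omega(d)$.
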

For a proof of the precise statement of Theorem~\ref{thm:vc-lb}, see Theorem 5.8 and Proposition 5.11 of~\cite{vadhan2017complexity}. Note that for any positive integer $d$, there is a class $\MF$ for which $\Ldim(\MF)= \vc(\MF) = d$ and $\Ldim^\st(\MF) = \Theta(\log d)$ (for instance, we may take the class of all functions on $d$ distinct points). Thus Theorem~\ref{thm:vc-lb} rules out the existence of a sanitizer with sample complexity polynomial in only dual Littlestone dimension. 

Summarizing, from Theorems~\ref{thm:bun-fingerprinting} and~\ref{thm:vc-lb}, we obtain that Corollary~\ref{cor:sanitizing-formal} is ``best possible up to a polynomial'' in the sense that polynomial dependence on both $d_{\LL}$ {\it and} $d_{\LL}^\st$ is necessary in a worst-case sense. Moreover, when $d_{\LL}, d_{\LL}^\st$ are of the same order, then any sample complexity upper bound must be superlinear $\max\{ d_{\LL}, d_{\LL}^\st\}$ (Theorem~\ref{thm:bun-fingerprinting}). Finally, in light of Theorem \ref{thm:bun-fingerprinting}, the square-root dependence on $d_{\LL}^\st$ (up to polylogarithmic factors) in Corollary \ref{cor:sanitizing-formal} is best possible up to polylogarithmic factors.

\section{Conclusions}
\label{sec:conclusion}
In this paper we showed that it is possible to privately and properly learn binary hypothesis classes of Littlestone dimension $d$ with sample complexity polynomial in $d$. As a corollary we showed that such classes have sanitizers with sample complexity polynomial in $d$ and the dual Littlestone dimension $d^\st$. 
A central open question remaining (see, e.g.,~\cite[Section 1.6]{beimel_characterizing_2019}) is to determine a characterization of the sample complexity of (proper and improper) PAC learning with approximate differential privacy, up to (ideally) a constant factor, much like the VC dimension provides such a characterization for (non-private) PAC learning~\cite{vapnik_statistical_1998}, the Littlestone dimension provides such a characterization for online learning~\cite{littlestone_learning_1987,ben-david_agnostic_2009}, and the probabilistic representation dimension~\cite{beimel_characterizing_2019} and the one-way public coin communication complexity~\cite{feldman_sample_2014} both provide such a characterization for improper PAC learning with pure differential privacy. As noted by~\cite{alon_closure_2020}, current lower bounds even allow for the possibility that the sample complexity of (proper or improper) PAC learning with approximate differential privacy is linear in $\vc(\MF) + \log^\st (\Ldim(\MF))$. Below we list some intermediate questions which may be useful in attacking this question and the related question of characterizing the sample complexity of sanitization. (Throughout by ``private'' we mean $(\ep, \delta)$-differentially private with $\delta$ negligible in the number of users $n$.)
\begin{enumerate}
\item {\bf Sample complexity linear in Littlestone dimension.} The most immediate open question is to reduce the exponent of $d$ from the current value of 6 in Theorem~\ref{thm:pap-pac-informal}. In particular, one could hope for sample complexity that scales linearly with the Littlestone dimension $d$ (see the discussion following Theorem~\ref{thm:pap-pac-informal}). 
\item \label{it:learn-poly-char} {\bf Polynomial characterization of private learnability.} One could also attempt to show bounds with sublinear dependence on the Littlestone dimension, as long as there is at least linear dependence on the VC dimension. Rather optimistically, we ask: is the sample complexity of (properly or improperly) PAC learning a class $\MF$ with $(\ep, \delta)$-differential privacy at most $n = \poly \left( \vc(\MF), \log^\st(\Ldim(\MF))\right)$? (Here we omit dependence on $1/\alpha, 1/\ep, \log 1/\delta$, for which the dependence should be polynomial as well.) 
In light of the lower bound of $\Omega(\vc(\MF) + \log^\st(\Ldim(\MF)))$ by Alon et al.~\cite{alon_private_2019} on the sample complexity, this would give a characterization for the sample complexity of private PAC learning up to a polynomial factor. 
\item {\bf Proper vs.~improper learning.} Is there a family of hypothesis classes for which the sample complexity of proper private learning is asymptotically larger than the sample complexity of improper private learning? The answer to this question is ``yes'' for the case of pure privacy (e.g., exhibited by the class of point functions~\cite{beimel_bounds_2014}), but it remains open for approximate privacy to the best of our knowledge.
\item \label{it:san-direct-proof} {\bf Direct proof of Corollary~\ref{cor:sanitizing-formal}.} The current proof of Corollary~\ref{cor:sanitizing-formal} is quite long: it consists of first proving the existence of an improper private learner (Theorem~\ref{thm:poly-pri-learn}), then showing how to make it proper (Corollary~\ref{cor:agnostic-proper}), and finally applying Theorem~\ref{thm:bousquet-sanitizer} of Bousquet et al.~\cite{bousquet_passing_2019}, which itself has two fairly involved parts, the first of which shows that $\MF$ is ``Sequentially-Foolable''~\cite[Theorem 2]{bousquet_passing_2019}, and the second of which shows that $\MF$ is sanitizable~\cite[Theorem 1]{bousquet_passing_2019}. It would be interesting to find a more direct proof of Corollary~\ref{cor:sanitizing-formal}, namely one that does not ``go through'' a proper learner.
\item {\bf Improved bounds for sanitization.} Finally, it would be interesting to improve quantitatively upon the upper bound for sanitization of Corollary~\ref{cor:sanitizing-formal}. In particular, analogously to item~\ref{it:learn-poly-char}, it is natural to ask: is the sample complexity of sanitizating a class $\MF$ (with approximate privacy) at most $n = \poly \left( \vc(\MF), \vc^\st(\MF), \log^\st(\Ldim(\MF))\right)$? By \cite[Corollary 3.6]{bun_fingerprinting_2014}, Theorem~\ref{thm:vc-lb}, and~\cite[Theorems 3.2 \& 4.6]{bun_differentially_2015}, the sample complexity of sanitization is at least $\tilde \Omega \left( \vc(\MF) + \sqrt{\vc^\st(\MF)} + \log^\st(\Ldim(\MF))\right)$,\footnote{\cite[Corollary 3.6]{bun_fingerprinting_2014} gives a lower bound of $\tilde \Omega(\sqrt{d})$ on the sample complexity of private release of 1-way marginals on $\{-1,1\}^d$; the $\tilde \Omega(\vc^\st(\MF))$ lower bound on the sample complexity of sanitization in any class $\MF$ follows since a class of 1-way marginals on a copy of $\{-1,1\}^{\vc^\st(\MF)}$ may be embedded in any class $\MF$. Similarly, \cite{bun_differentially_2015} gives a $\Omega(\log^\st |\MX|)$ lower bound on the sample complexity of release of threshold functions on a domain $\MX$; the $\Omega(\log^\st |\MX|)$ lower bound on the sample complexity of sanitization in any class $\MF$ follows since $\log \Ldim(\MF)$ thresholds may be embedded in $\MF$.} so this would provide a characterization for the sample complexity of sanitization up to a polynomial factor. Since our approach of using the results of~\cite{bousquet_passing_2019} seems to necessarily incur at least a {\it square-root} dependence on the dual Littlestone dimension $\Ldim^\st(\MF)$, any positive answer to this question would likely involve a positive answer to the question in item~\ref{it:san-direct-proof}.
\end{enumerate}

\appendix

\section{Private proper learner for infinite $\MF$ and $\MX$}
\label{sec:infinite-spaces}
In this section we extend the arguments from Section~\ref{sec:proper-learning-finite} to cover the case where $\MX, \MF$ are allowed to be countably infinite. The techniques closely follow those in~\cite{bousquet_passing_2019}. 
\subsection{Preliminaries}
\label{sec:topology}
\paragraph{Product topology}
Let $\MV$ be an arbitrary set, and let $\{-1,1\}$ have the discrete topology. The {\it product topology} on the space $\{-1,1\}^\MV$ of functions $f : \MV \ra \{-1,1\}$ is defined to be the coarsest topology so that the functions $\pi_v : \{-1,1\}^\MV \ra \{-1,1\}$, defined by $\pi_v(f) := f(v)$ are all continuous. It is known that this topology is Hausdorff. The following fact is an immediate consequence of Tychanoff's theorem:
\begin{theorem}[Tychanoff's theorem; e.g.,~\cite{munkres_topology_2000}, Chapter 5, Theorem 1.1]
  \label{thm:tychanoff}
The space $\{-1,1\}^\MV$ is compact (under the product topology). 
\end{theorem}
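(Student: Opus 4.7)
The plan is to apply Alexander's subbase theorem, which states that a topological space is compact if and only if every open cover by elements of a fixed subbase admits a finite subcover. For the product topology on $\{-1,1\}^\MV$, the natural subbase is $\MS := \{\pi_v^{-1}(b) : v \in \MV,\ b \in \{-1,1\}\}$, since finite intersections of its elements form the standard basis of ``cylinder'' open sets.

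Given any cover $\MC \subseteq \MS$ of $\{-1,1\}^\MV$, I would analyze, for each $v \in \MV$, the set $B_v := \{b \in \{-1,1\} : \pi_v^{-1}(b) \in \MC\}$. If there exists some $v^\st \in \MV$ with $B_{v^\st} = \{-1,1\}$, then $\{\pi_{v^\st}^{-1}(-1), \pi_{v^\st}^{-1}(1)\} \subseteq \MC$ already covers $\{-1,1\}^\MV$, yielding a two-element subcover. Otherwise $B_v \subsetneq \{-1,1\}$ for every $v \in \MV$, and by the axiom of choice I may pick $f \in \{-1,1\}^\MV$ with $f(v) \in \{-1,1\} \setminus B_v$ for each $v$; this $f$ is then not contained in any $\pi_v^{-1}(b) \in \MC$, contradicting that $\MC$ covers $\{-1,1\}^\MV$. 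Hence a finite subcover always exists, and Alexander's theorem yields compactness.

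The main ``obstacle'' is simply that Alexander's subbase theorem must be invoked (its own proof uses Zorn's lemma to pass to a maximal open cover by subbasic sets with no finite subcover). Once that is granted, the argument above is essentially immediate because the factor space $\{-1,1\}$ has only two points, so a single coordinate suffices to obtain the contradiction; no inductive or diagonal iteration over $\MV$ is needed. (If desired, one could give an even more elementary proof in the countable case used in the body of the paper by metrizing $\{-1,1\}^\MV$ as a Cantor-space-like metric space and extracting a convergent subsequence by a standard diagonal argument, but the subbase approach above handles arbitrary $\MV$ uniformly.)
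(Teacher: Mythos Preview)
Your argument via Alexander's subbase theorem is correct. The subbase $\MS=\{\pi_v^{-1}(\{b\}):v\in\MV,\ b\in\{-1,1\}\}$ does generate the product topology, and your dichotomy (either some coordinate $v^\st$ has both half-spaces in the cover, or else a point $f$ escaping the cover can be assembled coordinatewise) is the standard way to verify the subbase compactness criterion for a product of two-point spaces. One tiny remark: the axiom of choice is not actually needed in the step where you pick $f(v)\in\{-1,1\}\setminus B_v$, since a uniform rule (e.g., take $f(v)=1$ unless $B_v=\{1\}$) suffices; the genuine dependence on choice is hidden inside Alexander's theorem, as you note.

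As for comparison with the paper: the paper does not give a proof of this statement at all. It is stated as a black-box citation to Munkres (Chapter~5, Theorem~1.1) and used only to conclude that $\{-1,1\}^\MX$ is compact so that Lemma~\ref{lem:deltaf-compact} and Sion's minimax theorem can be applied. Your write-up thus supplies strictly more detail than the paper, and the route you chose (Alexander subbase) is one of the standard textbook proofs of the general Tychonoff theorem; the alternative diagonal-argument proof you mention for countable $\MV$ would also suffice for the paper's purposes, since $\MX$ is assumed countable throughout.
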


\paragraph{Compactness}
Let $\MW$ be a compact Hausdorff topological space. Let $C(\MW)$ denote the space of real-valued continuous functions on $\MW$. Let $\rca(\MW)$ denote the space of Borel measures on $\MW$, and let $\Delta(\MW)$ denote the space of Borel {\it probability} measures on $\MW$ (a measure $\mu$ on $\MW$ is a probability measure if for all measurable subsets $A \subset \MW$, $\mu(A) \in [0,1]$, and $\mu(\MW) = 1$). The weak* topology on $\rca(\MW)$ (also known as the {\it vague} topology) is defined to be the coarsest topology so that all of the mappings $\mu \mapsto \int_{f \in \MW} \omega(f) d\mu(f)$, where $\omega \in C(\MW)$, are continuous. The following lemma is a consequence of the Banach-Alaoglu theorem (see, e.g.,~\cite[Theorem IV.21]{reed_functional_1981}) and the Riesz-Markov theorem which states that the dual space of the Banach space $C(\MW)$ is the space $\rca(\MW)$ of Borel measures on $\MW$ (see, e.g.,~\cite[Theorem IV.14]{reed_functional_1981}, and also~\cite[Claim 2]{bousquet_passing_2019}):
\begin{lemma}
  \label{lem:deltaf-compact}
The space $\Delta(\MW)$ is compact in the weak* topology.
\end{lemma}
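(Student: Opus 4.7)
The plan is to realize $\Delta(\MW)$ as a weak*-closed subset of the closed unit ball in $C(\MW)^*$ and then invoke the Banach--Alaoglu theorem. By the Riesz--Markov theorem, the dual Banach space $C(\MW)^*$ is isometrically isomorphic to $\rca(\MW)$ equipped with the total variation norm, with duality pairing $\langle \mu, \omega \rangle = \int_\MW \omega \, d\mu$; under this identification the weak* topology on $\rca(\MW)$ coincides with the ``vague'' topology defined immediately before the lemma. Every $\mu \in \Delta(\MW)$ satisfies $\|\mu\| = \mu(\MW) = 1$, so $\Delta(\MW)$ sits inside the closed unit ball $B := \{\mu \in \rca(\MW) : \|\mu\| \leq 1\}$, which is weak*-compact by Banach--Alaoglu.

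It then remains to check that $\Delta(\MW)$ is weak*-closed in $\rca(\MW)$. A measure $\mu$ lies in $\Delta(\MW)$ if and only if (i) $\mu(\MW) = 1$ and (ii) $\int_\MW \omega \, d\mu \geq 0$ for every nonnegative $\omega \in C(\MW)$. By the very definition of the weak* topology, for each fixed $\omega \in C(\MW)$ the functional $\mu \mapsto \int_\MW \omega \, d\mu$ is weak*-continuous; applying this with $\omega \equiv 1$ shows that (i) defines a weak*-closed set, while (ii) expresses nonnegativity of $\mu$ as an intersection of weak*-closed half-spaces $\{\mu : \int \omega \, d\mu \geq 0\}$ indexed by nonnegative $\omega \in C(\MW)$. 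An arbitrary intersection of closed sets is closed, so $\Delta(\MW)$ is a weak*-closed subset of the weak*-compact ball $B$, hence is itself weak*-compact.

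The one technical point to address carefully is the form of the Riesz--Markov theorem being invoked: for general compact Hausdorff $\MW$, the dual of $C(\MW)$ is the space of \emph{regular} finite signed Borel measures, so strictly speaking one should take $\rca(\MW)$ to consist of regular Borel measures. This is automatic whenever $\MW$ is metrizable (which covers every concrete $\MW$ to which the lemma is applied in this paper) and can otherwise be arranged by passing to regular representatives. Aside from this bookkeeping, the proof is a routine application of Banach--Alaoglu with no genuine obstacle.
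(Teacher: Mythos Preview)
Your argument is correct and follows exactly the approach the paper indicates: identify $\rca(\MW)$ with $C(\MW)^*$ via Riesz--Markov and then apply Banach--Alaoglu, with the extra (but standard) detail that $\Delta(\MW)$ is weak*-closed in the unit ball. The paper states the lemma as a direct consequence of these two theorems without spelling out the closedness step, so your write-up is simply a more explicit version of the same proof.
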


\paragraph{Spaces of distributions} Next, recall that $\Sigma$ is a $\sigma$-algebra on the data space $\MX$. We consider the product topology on the space $\{-1,1\}^\MX$; by Tychonoff's theorem (Theorem~\ref{thm:tychanoff}), $\{-1,1\}^\MX$ is compact (and Hausdorff). 
Let $\MF \subset \{-1,1\}^\MX$ have the subspace topology, so that $\MF$ is also compact. By Lemma~\ref{lem:deltaf-compact}, $\Delta(\MF)$ is compact in the weak* topology.

Following~\cite{bousquet_passing_2019}, let $\BR_\fin^\MX$ to be the space of real-valued functions $p : \MX \ra \BR$ so that there are only finitely many $x \in \MX$ so that $p(x) \neq 0$. Give $\BR_\fin^\MX$ the topology induced by the $\ell_1$ norm; more formally, a basis of open sets is given by the balls $\MB_{q,a}$, for $q \in \BR_\fin^\MX$, $a > 0$, where:
$$
\MB_{q,a} := \left\{ p \in \BR_\fin^\MX : \sum_{x \in \MX} | p(x) - q(x) | < a \right\}.
$$

Let $\Delta_\fin(\MX)$ be the subspace of $\BR_\fin^\MX$ consisting of functions $p$ so that for all $x \in \MX$, $p(x) \geq 0$ and $\sum_{x \in \MX} p(x) =1$. 
We will often identify $\Delta_\fin(\MX)$ with the space of probability measures on $\MX$ with {\it finite support}. In particular, for some $p : \MX \ra \BR$, the corresponding measure $P$ is the one defined by, for $A \in \Sigma$, 
$$
P(A) = \sum_{x \in \MX} p(x) \cdot \delta_{x}(A) = \sum_{x \in \MX} p(x) \cdot \One[x \in A].
$$

\paragraph{Semi-continuity, Sion's minimax theorem}
Let $\MW$ be a topological space. A function $f : \MW \ra \BR$ is {\it upper semi-continuous (u.s.c)} if for every $r \in \BR$, the set $\{ w : f(w) \geq r \}$ is closed. Similarly, $f$ is {\it lower semi-continuous (l.s.c)} if for every $r \in \BR$, the set $\{ w : f(w) \leq r \}$ is closed. We will use the following fact:
\begin{lemma}[\cite{bousquet_passing_2019}, Claim 3]
  \label{lem:measure-usc}
Let $\MW$ be a compact hausdorff space, and let $\MK \subset \MW$ be a closed subset. Consider the mapping $T_\MK : \Delta(\MW) \ra [0,1]$, defined by $T_\MK(\mu) := \mu(\MK)$. Then $T_\MK$ is u.s.c.~with respect to the weak* topology on $\Delta(\MW)$. 
\end{lemma}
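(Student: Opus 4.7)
The plan is to exhibit $T_\MK$ as a pointwise infimum of weak*-continuous functions on $\Delta(\MW)$; upper semi-continuity then follows from the general fact that an infimum of continuous functions is u.s.c.

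First, I would establish the approximation identity
$$
T_\MK(\mu) \;=\; \mu(\MK) \;=\; \inf\left\{ \int_\MW \omega \, d\mu \;:\; \omega \in C(\MW),\ 0 \le \omega \le 1,\ \omega \ge \One_\MK \right\}
$$
for every $\mu \in \Delta(\MW)$. The inequality ``$\le$'' is immediate from monotonicity of the integral, since $\omega \ge \One_\MK$ gives $\int \omega\, d\mu \ge \mu(\MK)$. For the matching inequality ``$\ge$'', fix $\varepsilon > 0$ and use outer regularity of the Borel probability measure $\mu$ to choose an open set $U \supset \MK$ with $\mu(U) \le \mu(\MK) + \varepsilon$ (outer regularity is built into the Riesz--Markov identification of $\rca(\MW)$ with the dual of $C(\MW)$ that is invoked in the paper, so the measures in $\Delta(\MW)$ are regular). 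Since $\MW$ is compact Hausdorff and therefore normal, and $\MK$ and $\MW \setminus U$ are disjoint closed sets, Urysohn's lemma produces a continuous $\omega : \MW \to [0,1]$ with $\omega \equiv 1$ on $\MK$ and $\omega \equiv 0$ on $\MW \setminus U$. Then $\int \omega\, d\mu \le \mu(U) \le \mu(\MK) + \varepsilon$, so the infimum on the right is at most $\mu(\MK)$.

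Second, for each fixed $\omega \in C(\MW)$ the evaluation map $\Phi_\omega : \mu \mapsto \int_\MW \omega\, d\mu$ is, by the very definition of the weak* topology on $\Delta(\MW)$, weak*-continuous. Thus the previous step expresses $T_\MK$ as a pointwise infimum of a family of weak*-continuous functions $\Phi_\omega$. Finally, any pointwise infimum of continuous real-valued functions on a topological space is u.s.c.: for each $r \in \BR$ the sublevel set
$$
\{\mu \in \Delta(\MW) : T_\MK(\mu) < r\} \;=\; \bigcup_{\omega \ge \One_\MK} \{\mu : \Phi_\omega(\mu) < r\}
$$
is a union of weak*-open sets, hence weak*-open, so its complement $\{\mu : T_\MK(\mu) \ge r\}$ is closed, which is precisely the definition of upper semi-continuity. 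The only conceptual subtlety is justifying outer regularity, which is handled by the Radon convention implicit in the Riesz--Markov identification already used in the paper; no other difficulty arises.
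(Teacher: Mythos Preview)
Your proof is correct and follows the standard route: represent $\mu(\MK)$ as an infimum of integrals $\int \omega\, d\mu$ over continuous $\omega$ sandwiched between $\One_\MK$ and $1$ (via outer regularity of Radon measures and Urysohn's lemma, both available since $\MW$ is compact Hausdorff hence normal), and then use that each such integral is weak*-continuous by definition, so their pointwise infimum is u.s.c. Note, however, that the paper does not supply its own proof of this lemma; it is merely quoted from \cite{bousquet_passing_2019} (Claim 3), so there is no in-paper argument to compare against. Your argument is exactly the one typically given for this fact and would serve as a self-contained justification.
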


Sion's minimax theorem, stated below, is a generalization of the von Neumann minimax theorem.
\begin{theorem}[\cite{sion_general_1958}]
  \label{thm:sion}
  Let $\MW$ be a compact and convex subset of a topological vector space and $\MU$ be a convex subset of a topological vector space. Suppose $F : \MW \times \MU \ra \BR$ is a real-valued function so that:
  \begin{itemize}
  \item For all $u \in \MU$, the function $w \mapsto F(w,u)$ is l.s.c.~and convex on $\MW$. 
  \item For all $w \in \MW$, the function $u \mapsto F(w,u)$ is u.s.c.~and concave on $\MU$.
  \end{itemize}
  Then
  $$
\inf_{w \in \MW} \sup_{u \in \MU} F(w,u) = \sup_{u \in \MU} \inf_{w \in \MW} F(w,u).
  $$
  
\end{theorem}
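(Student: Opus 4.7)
The plan is to follow the classical connectedness-plus-compactness argument of Sion. The easy direction
\[
\sup_{u \in \MU}\inf_{w \in \MW} F(w,u) \leq \inf_{w \in \MW}\sup_{u \in \MU} F(w,u)
\]
is immediate from the pointwise inequality $\inf_w F(w,u_0) \leq F(w_0,u_0) \leq \sup_u F(w_0,u)$, which requires none of the hypotheses. So the real work is the reverse inequality. I would argue by contradiction: suppose there exists a real number $r$ with
\[
\sup_{u \in \MU}\inf_{w \in \MW} F(w,u) \;<\; r \;<\; \inf_{w \in \MW}\sup_{u \in \MU} F(w,u).
\]

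The first step is to pass from this situation to an empty finite intersection of convex sets in $\MW$. For each $u \in \MU$, define the sublevel set
\[
L_u := \{w \in \MW : F(w,u) \leq r\}.
\]
By lower semi-continuity and convexity of $F(\cdot,u)$, each $L_u$ is closed and convex in $\MW$, and it is nonempty because $\inf_w F(w,u) \leq \sup_{u'} \inf_w F(w,u') < r$. If $\bigcap_{u \in \MU} L_u$ were nonempty, any point $w^\st$ in the intersection would satisfy $\sup_u F(w^\st,u) \leq r$, contradicting the right half of the above chain. Hence the intersection is empty, and compactness of $\MW$ together with the finite intersection property yields some $u_1,\dots,u_n \in \MU$ with $L_{u_1} \cap \cdots \cap L_{u_n} = \emptyset$.

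The heart of the proof is then to rule out such a finite empty intersection by induction on $n$, the base case $n=2$ being the crucial one. Given disjoint closed convex $L_{u_1}, L_{u_2}$, consider the one-parameter family $u_\lambda := \lambda u_1 + (1-\lambda)u_2$ for $\lambda \in [0,1]$ and the associated sublevel sets $L_\lambda := L_{u_\lambda}$. Concavity of $F(w,\cdot)$ gives $L_\lambda \subset L_0 \cup L_1$, and since $L_\lambda$ is convex while $L_0$ and $L_1$ are disjoint closed sets, $L_\lambda$ must lie entirely in one of them. Define $\Lambda_i := \{\lambda \in [0,1] : L_\lambda \subset L_{u_i}\}$ for $i=1,2$; upper semi-continuity of $F(w,\cdot)$ implies that each $\Lambda_i$ is closed in $[0,1]$. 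Since $[0,1]$ is connected and $\Lambda_1 \cup \Lambda_2 \supset \{0,1\}$ with $0 \in \Lambda_2, 1 \in \Lambda_1$, there must exist some $\lambda^\st$ in $[0,1]$ with $L_{\lambda^\st}$ \emph{empty}; but then $\inf_w F(w,u_{\lambda^\st}) > r$, contradicting $\sup_u \inf_w F(w,u) < r$. The inductive step for general $n$ reduces the problem to $n-1$ sets by applying the same averaging trick to two of the $u_i$ in the relative interior of their convex hull.

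The main obstacle I expect is the base case $n=2$: making the ``$L_\lambda$ lies entirely in $L_0$ or in $L_1$'' dichotomy into a rigorous partition of $[0,1]$ into two closed sets requires using every hypothesis of the theorem --- l.s.c. and convexity in $w$ (to keep $L_u$ closed and convex), u.s.c. and concavity in $u$ (to keep the family $\lambda \mapsto L_\lambda$ well-behaved and contained in $L_0 \cup L_1$), and compactness of $\MW$ (to ensure that emptiness of $L_{\lambda^\st}$ translates into a strict positive gap $\inf_w F(w,u_{\lambda^\st}) > r$ rather than a mere infimum attained in the limit). Relaxing any one of these would force a substantially different argument, which is why this base case is where all the assumptions must work in concert.
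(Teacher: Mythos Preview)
The paper does not prove this theorem: it is stated with a citation to Sion's 1958 paper and invoked as a black box in the proof of Lemma~\ref{lem:like-soa-infinite}. There is therefore no ``paper's own proof'' to compare your attempt against.

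That said, your sketch does follow the classical connectedness argument of Sion (and its later elementary variants, e.g., Komiya). The overall outline is correct, but one step is not justified as written: the claim that upper semi-continuity of $F(w,\cdot)$ alone forces each $\Lambda_i$ to be closed in $[0,1]$. Upper semi-continuity gives that $\{\lambda : F(w,u_\lambda) < c\}$ is open, but says nothing directly about the non-strict sublevel sets $\{\lambda : F(w,u_\lambda) \leq r\}$ that define $L_\lambda$; in particular, for a fixed $w$ the set $\{\lambda : F(w,u_\lambda) > r\}$ need not be open. The usual remedies either introduce an auxiliary threshold $r'$ with $\sup_u \inf_w F < r' < r$ and interlace strict and non-strict sublevel sets, or run a compactness-plus-l.s.c.\ limit argument in $\MW$ to extract the contradiction. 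You correctly identify the $n=2$ base case as the place ``where all the assumptions must work in concert,'' but the sketch does not yet exhibit that concert at the level of the $\Lambda_i$-closedness step.
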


\subsection{Modifications to the finite case}
In this section we detail the modifications that it is necessary to make to the proofs in Section~\ref{sec:proper-learning-finite} to establish Theorem~\ref{thm:poly-pri-learn-proper} (and thus Corollary~\ref{cor:agnostic-proper}) for the case that $\MX, \MF$ are countably infinite.

We begin with Lemma~\ref{lem:game-value}; notice that nowhere in the proof of Lemma~\ref{lem:game-value} do we use that $\MX, \MF$ are finite; i.e., it holds if $\MX, \MF$ are allowed to be infinite. 
Corollary~\ref{cor:game-value-infinite} is then an immediate corollary of Lemma~\ref{lem:game-value} (with infinite $\MX, \MF$), since $\Delta_\fin(\MX) \subset \Delta(\MX)$.  
\begin{corollary}
  \label{cor:game-value-infinite}
There is a constant $C > 0$ so that the following holds. Fix any $\alpha \in (0,1)$ and $\MG \subset \MF$ which is $\left\lceil \frac{C( d +  \log1/\alpha)}{\alpha^2} \right\rceil$-irreducible and suppose $\vc(\MF) \leq d$ for some $d \in \BN$. 
  Then it holds that
  \begin{equation}
    \label{eq:sup-p}
\sup_{P \in \Delta_\fin(\MX)} \inf_{D \in \Delta(\MF)} \E_{x \sim P, h \sim D} \left[ \One[\soa{\MG}{x} \neq h(x)] \right] \leq \alpha.
  \end{equation}
\end{corollary}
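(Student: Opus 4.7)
I would derive Corollary~\ref{cor:game-value-infinite} as a direct consequence of Lemma~\ref{lem:game-value} applied in the countably infinite setting. The first thing I would verify is that the proof of Lemma~\ref{lem:game-value} given in Section~\ref{sec:proper-learning-finite} nowhere uses finiteness of $\MX$ or $\MF$: the only nontrivial ingredients are (i) Theorem~\ref{thm:unif-conv}, which already applies to any countable $\MF$ and any distribution on $\MX \times \{-1,1\}$, and (ii) the observation that an $n$-irreducible class $\MG$ admits some $g \in \MG$ with $g(x_i) = \soa{\MG}{x_i}$ for all $i \in [n]$. Part (ii) holds for any $x_1,\ldots,x_n \in \MX$ by applying the ``constant'' tree $\bx_t(\cdot) \equiv x_t$ to the $n$-irreducibility of $\MG$ and using Lemma~\ref{lem:red-hierarchy} inductively along the resulting path, and it has no dependence on the cardinality of $\MX$ or $\MF$. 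Consequently Lemma~\ref{lem:game-value} holds verbatim in the countably infinite setting.

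Next, I would choose the constant $C$ in the statement of Corollary~\ref{cor:game-value-infinite} so that $C \geq C_0$, which guarantees
\[
\left\lceil \frac{C(d + \log 1/\alpha)}{\alpha^2} \right\rceil \;\geq\; \left\lceil \frac{C_0 \, d}{\alpha^2} \right\rceil.
\]
Thus the hypothesis that $\MG$ is $\lceil C(d + \log 1/\alpha)/\alpha^2 \rceil$-irreducible implies the $\lceil C_0 d/\alpha^2\rceil$-irreducibility hypothesis of Lemma~\ref{lem:game-value}. Invoking that lemma yields
\[
\sup_{P \in \Delta(\MX)} \inf_{D \in \Delta(\MF)} \E_{x \sim P,\, h \sim D}\!\left[\One[\soa{\MG}{x} \neq h(x)]\right] \;\leq\; \alpha.
\]
Since $\Delta_\fin(\MX) \subset \Delta(\MX)$, restricting the outer supremum to finitely-supported distributions can only decrease it, giving (\ref{eq:sup-p}).

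The main (and essentially only) thing to check is that the proof of Lemma~\ref{lem:game-value} transfers unchanged to the countably infinite case; there is no real obstacle, since every step of that proof was already written in a cardinality-agnostic way. The additional $\log(1/\alpha)$ factor appearing in the irreducibility hypothesis of Corollary~\ref{cor:game-value-infinite} is pure slack over what Lemma~\ref{lem:game-value} actually requires; it costs nothing here and is included to provide convenient cushion for the uses of this corollary in the remainder of Appendix~\ref{sec:infinite-spaces}.
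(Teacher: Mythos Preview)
Your proposal is correct and follows essentially the same approach as the paper: the paper simply observes that the proof of Lemma~\ref{lem:game-value} never uses finiteness of $\MX$ or $\MF$, and then notes that $\Delta_\fin(\MX) \subset \Delta(\MX)$ to conclude. Your additional remarks about the constant $C$ and the slack $\log(1/\alpha)$ term are accurate and consistent with the paper's treatment.
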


Lemma~\ref{lem:like-soa-infinite} is a generalization of Lemma~\ref{lem:like-soa} to the case that $\MX, \MF$ are infinite; the main technical portion of the proof departing from that of Lemma~\ref{lem:like-soa} is the verification that the preconditions of Sion's minimax theorem hold.
\begin{lemma}
  \label{lem:like-soa-infinite}
  There is a constant $C > 0$ so that the following holds. Fix any $\alpha \in (0,1)$ and $\MG \subset \MF$ which is $\left\lceil \frac{C(d+ \log1/\alpha)}{\alpha^2} \right\rceil$-irreducible and suppose $\vc(\MF) \leq d$ and $\vc^\st(\MF) \leq d^\st$ for some $d, d^\st \in\BN$. Then there is a set $\MH \subset \MF$, depending only on the function $\soaf{\MG}: \MX \ra \{-1,1\}$, and of size $|\MH| \leq \left\lceil \frac{C(d^\st+ \log 1/\alpha)}{\alpha^2} \right\rceil$, so that for any distribution $P \in \Delta(\MX)$, it holds that
  \begin{equation}
    \label{eq:minh-ub-infinite}
\min_{h \in \MH} \E_{X \sim P} \left[ \One[h(X) \neq \soa{\MG}{X}] \right]  \leq 3\alpha.
  \end{equation}
\end{lemma}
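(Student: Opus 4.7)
The plan is to mimic the proof of Lemma~\ref{lem:like-soa}, but substitute Sion's minimax theorem (Theorem~\ref{thm:sion}) for von Neumann's, and use the topological machinery of Section~\ref{sec:topology} to legitimize the infimum-attainment and uniform-convergence steps. Define
$$
F : \Delta(\MF) \times \Delta_\fin(\MX) \ra [0,1], \qquad F(D,P) := \E_{x \sim P, h \sim D}\bigl[\One[\soa{\MG}{x} \neq h(x)]\bigr].
$$
For each $x \in \MX$, the set $\MK_x := \{h \in \MF : h(x) \neq \soa{\MG}{x}\}$ is clopen in $\MF$ (equipped with the product topology inherited from $\{-1,1\}^\MX$), since it is the preimage of a clopen set under the continuous evaluation map $h \mapsto h(x)$. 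Applying Lemma~\ref{lem:measure-usc} to $\MK_x$ and to $\MF \setminus \MK_x$ shows that $D \mapsto D(\MK_x)$ is both u.s.c.~and l.s.c., hence continuous. Thus for any fixed $P \in \Delta_\fin(\MX)$, $F(D,P) = \sum_{x} p(x)\, D(\MK_x)$ is a finite sum of continuous linear functionals of $D$, hence continuous (and therefore l.s.c.) and convex on the compact set $\Delta(\MF)$ (compactness from Lemma~\ref{lem:deltaf-compact}). For fixed $D$, $F(D,\cdot)$ is linear in $p \in \BR_\fin^\MX$ and continuous with respect to the $\ell_1$ norm, so u.s.c.~and concave.

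The preconditions of Sion's theorem are satisfied, so
$$
\inf_{D \in \Delta(\MF)} \sup_{P \in \Delta_\fin(\MX)} F(D,P) = \sup_{P \in \Delta_\fin(\MX)} \inf_{D \in \Delta(\MF)} F(D,P) \leq \alpha,
$$
where the final inequality is Corollary~\ref{cor:game-value-infinite}. Next I attain the infimum on the left: the function $D \mapsto \sup_{P \in \Delta_\fin(\MX)} F(D,P) = \sup_{x \in \MX} D(\MK_x)$ is a supremum, over the countable set $\MX$, of continuous functions of $D$, hence l.s.c.~on the compact space $\Delta(\MF)$. An l.s.c.~function on a compact set attains its infimum, so there exists $D^\st \in \Delta(\MF)$ with $\sup_{x \in \MX} D^\st(\MK_x) \leq \alpha$. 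Crucially, $D^\st$ depends only on the function $\soaf{\MG}$.

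Now apply Theorem~\ref{thm:unif-conv} to the countable dual class $\MF^\st$, which has VC dimension $d^\st$, with respect to the distribution $D^\st$ on $\MF$. For $m = O\bigl((d^\st + \log(1/\alpha))/\alpha^2\bigr)$, with positive probability an i.i.d.~sample $h_1, \ldots, h_m \sim D^\st$ satisfies
$$
\sup_{x \in \MX} \left| \frac{1}{m} \sum_{j=1}^m \One[h_j(x) \neq \soa{\MG}{x}] - D^\st(\MK_x) \right| \leq \alpha.
$$
Fix such a sample and take $\MH := \{h_1, \ldots, h_m\}$. By the previous display combined with $D^\st(\MK_x) \leq \alpha$, for every $x \in \MX$ we get $\frac{1}{m}\sum_j \One[h_j(x) \neq \soa{\MG}{x}] \leq 2\alpha$. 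Integrating over $x \sim P$ for any (not necessarily finitely supported) $P \in \Delta(\MX)$ gives $\E_{x \sim P, h \sim_U \MH}[\One[h(x) \neq \soa{\MG}{x}]] \leq 2\alpha \leq 3\alpha$, so some $h \in \MH$ achieves (\ref{eq:minh-ub-infinite}).

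The main obstacle is verifying the topological hypotheses of Sion's theorem and the attainment of the infimum — in particular, establishing that $D \mapsto D(\MK_x)$ is continuous (rather than merely u.s.c.), which requires exploiting the clopen structure of $\MK_x$ together with Lemma~\ref{lem:measure-usc}. Everything else is an essentially mechanical translation of the finite-case argument, using the pointwise-over-$x$ bound on $D^\st$ (valid because each point mass $\delta_x \in \Delta_\fin(\MX)$) to lift the guarantee from $\Delta_\fin(\MX)$ to all of $\Delta(\MX)$.
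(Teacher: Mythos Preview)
Your proof is correct and follows the paper's overall strategy (Sion's minimax on $\Delta(\MF) \times \Delta_\fin(\MX)$, then dual uniform convergence), but is cleaner in two respects. First, you establish that $D \mapsto D(\MK_x)$ is \emph{continuous} by exploiting that $\MK_x$ is clopen; the paper only checks upper semi-continuity, whereas Sion's theorem (as stated there) requires lower semi-continuity in the $D$-variable, so your argument actually fills this small gap. Second, to lift the guarantee from $\Delta_\fin(\MX)$ to all $P \in \Delta(\MX)$, the paper approximates an arbitrary $P$ by a finitely supported $P'$ via an additional primal uniform-convergence step, which costs an extra $\alpha$ and produces the stated bound $3\alpha$; you instead integrate the pointwise inequality $\frac{1}{m}\sum_j \One[h_j(x)\neq\soa{\MG}{x}] \leq 2\alpha$ (valid for every $x$ because each $\delta_x\in\Delta_\fin(\MX)$) against an arbitrary $P$, yielding the sharper bound $2\alpha$. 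Your explicit attainment of the infimum via lower semi-continuity plus compactness also sidesteps the paper's $\alpha'$-slack device.
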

\begin{proof}
We will first use Sion's minimax theorem (Theorem~\ref{thm:sion}) to argue that
  \begin{align}
    \label{eq:infd-infinite}
    & \inf_{D \in \Delta(\MF)} \sup_{P \in \Delta_{\fin}(\MX)}  \E_{x \sim P, h \sim D} \left[ \One[\soa{\MG}{x} \neq h(x)] \right] \\
=&  \sup_{P \in \Delta_{\fin}(\MX)} \inf_{D \in \Delta(\MF)} \E_{x \sim P, h \sim D} \left[ \One[\soa{\MG}{x} \neq h(x)] \right] \leq \alpha.\label{eq:infd-infinite-2}
  \end{align}
  (Notice that the inequality in (\ref{eq:infd-infinite-2}) is from Corollary~\ref{cor:game-value-infinite}; below we argue that the equality in the above display.) 
  In particular, we will have $\MW = \Delta(\MF), \MU = \Delta_{\fin}(\MX)$, and for $D \in \MW, P \in \MU$,
  $$
F(D, P) := \E_{x \sim P, h \sim D} \left[ \One[\soa{\MG}{x} \neq h(x)] \right].
$$
Notice that $\MW, \MU$ are subsets of the topological vector spaces $\rca(\MF), \BR_\fin^\MX$, respectively. Moreover, it is immediate that both $\MW, \MU$ are convex, and by Theorem~\ref{thm:tychanoff} and Lemma~\ref{lem:deltaf-compact} we have that $\MW$ is compact. To check the u.s.c.~and l.s.c.~preconditions of Theorem~\ref{thm:sion}, we argue as follows:
\begin{itemize}
\item Fix any $D \in \MW$. Notice that the function $P \mapsto F(D, P)$ may be written as
  $$
F(D,P) = \sum_{x \in \MX} P(x) \cdot \E_{h \sim D} \left[ \One[\soa{\MG}{x} \neq h(x) ] \right].
$$
It is evident that $P \mapsto F(D,P)$ is a linear function, hence convex. Moreover, it is continuous (hence l.s.c.) since for each $x \in \MX$, $| \E_{h \sim D}\left[ \One[\soa{\MG}{x} \neq h(x)]\right]| \leq 1$ and since the topology on $\MW$ is induced by the $\ell_1$ norm on $\BR_\fin^\MX$.
\item Fix any $P \in \MU$. It is evident that $D \mapsto F(D,P)$ is a linear function, hence concave. Note that for any $x \in \MX$, by definition of the product topology, the map from $\rca(\MF)$ to $\BR$ that sends $h \mapsto \One[\soa{\MG}{x} \neq h(x)]$ is continuous. Thus $\{ h \in \MF : \soa{\MG}{x} \neq h(x) \}$ is a closed subset of $\MF$. 
By Lemma~\ref{lem:measure-usc}, the mapping $D \mapsto \E_{h \sim D} \left[ \One[\soa{\MG}{x} \neq h(x)] \right]$ is u.s.c.~with respect to the weak* topology on $\Delta(\MF)$. That $D \mapsto F(D,P)$ is u.s.c.~follows since a finite sum of u.s.c.~functions is u.s.c.
\end{itemize}
We have verified that all of the conditions of Theorem~\ref{thm:sion} hold, and thus we may conclude that the equality (\ref{eq:infd-infinite}) holds.

Fix any $P \in \Delta(\MX)$. 
Since $\vc(\MF)$ is finite, by Theorem~\ref{thm:unif-conv}, there is some $P' \in \Delta_\fin(\MX)$ so that
\begin{equation}
  \label{eq:delta-fin-x}
\sup_{h \in \MF}  \left| \E_{x \sim P} \left[ \One[\soa{\MG}{x} \neq h(x)] \right] - \E_{x \sim P'} \left[ \One[\soa{\MG}{x} \neq h(x)] \right] \right| \leq \alpha.
\end{equation}
  
Fix an arbitrary $\alpha' > 0$.  Fix some $D \in \Delta(\MF)$ obtaining a value of at most $\alpha + \alpha'$ in (\ref{eq:infd-infinite}). 
Note that $D$ depends only on $\soaf{\MG} \in \{0,1\}^\MX$, i.e., it can be written as a function of $\soaf{\MG}$. 
By Theorem~\ref{thm:unif-conv}, for a sufficiently large $C > 0$, for $m = \left\lceil \frac{C(d^\st +\log1/\alpha)}{\alpha^2} \right\rceil$, then with probability at least $1/2$ over an i.i.d.~sample $h_1, \ldots, h_m \sim D$, we have that
  \begin{align}
    \label{eq:h-conc-infinite}
    & \sup_{x \in \MX} \left| \E_{h \sim D} \left[ \One[\soa{\MG}{x} \neq h(x)] \right] - \frac{1}{m} \sum_{j=1}^m \One[\soa{\MG}{x} \neq h_j(x)] \right| \\
    =& \sup_{x \in \MX} \left| \E_{h \sim D} \left[\One[h(x) \neq 1] \right] - \frac{1}{m} \sum_{j=1}^m\One[h_j(x) \neq 1] \right| \leq \alpha.\nonumber
\end{align}
Fix any $h_1, \ldots, h_m$ so that (\ref{eq:h-conc}) holds, and set $\MH := \{ h_1, \ldots, h_m \}$. Write $h \sim_U \MH$ to mean that $h$ is drawn uniformly from $\MH$. Then by (\ref{eq:infd-infinite}), (\ref{eq:delta-fin-x}), and (\ref{eq:h-conc-infinite}), we have, for the given $P \in \Delta(\MX)$,
\begin{align}
  & \E_{x \sim P, h \sim_U \MH} \left[ \One[\soa{\MG}{x} \neq h(x)] \right] \nonumber\\
  \leq^{(\ref{eq:delta-fin-x})} & \E_{x \sim P', h \sim_U \MH} \left[ \One[\soa{\MG}{x} \neq h(x)] \right] + \alpha \nonumber\\
  \leq^{(\ref{eq:h-conc-infinite})}& \E_{x \sim P, h \sim D} \left[ \One[\soa{\MG}{x} \neq h(x)] \right] + 2 \alpha \nonumber\\
  \leq^{(\ref{eq:infd-infinite}) \& (\ref{eq:infd-infinite-2})} & 3\alpha + \alpha'.\nonumber
\end{align}
Since $\alpha' > 0$ is arbitrary, (\ref{eq:minh-ub-infinite}) is an immediate consequence of the above.
\end{proof}
No modifications to the algorithm \PPPLearn (Algorithm~\ref{alg:poly-pri-prop-learn}) are necessary to deal with the case of infinite $\MX, \MF$, except the reference to Lemma~\ref{lem:like-soa} on Step~\ref{it:choose-h} should instead be to Lemma~\ref{lem:like-soa-infinite}. 
That Theorem~\ref{thm:poly-pri-learn-proper} holds for the case that $\MX, \MF$ are countably infinite follows without any modifications to its proof. 

\section{On the Littlestone dimension of SOA classes}
\label{sec:ldim-soa}
Given a class $\MF$, the algorithm \PolyPriLearn (Algorithm~\ref{alg:poly-pri-learn}) will output with high probability a hypothesis of the form $\soaf{\MG}$ for some $\MG \subset \MF$. To address the question of whether this hypothesis has small population error, we used Lemma~\ref{lem:gen-soa} to upper bound the Littlestone dimension (and thus VC dimension) of the class of hypotheses $\soaf{\MG}$ for which $\MG \subset \MF$ is $(d+1)$-irreducible, where $d$ is the Littlestone dimension of $\MF$. In this section, we show that one cannot drop the requirement that $\MG$ be $(d+1)$-irreducible; in particular, the VC dimension of
\begin{equation}
\label{eq:tildef-inf}
\tilde \MF := \{ \soaf{\MG} : \MG\subset \MF \}
\end{equation}
can be infinite even if $\vc(\MF)$ is finite. 

Let $\MF^{\negpt}$ be the class of point functions and negated point functions on an infinite set $\MX$. In particular, for $x \in \MX$, write $\delta_x$ to be the point function for $x$, defined by $\delta_x(y) = 1$ if $y = x$ and else $\delta_x(y) = -1$. Then:
$$
\MF^{\negpt} := \{ \delta_x : x \in \MX \} \cup \{ - \delta_x : x \in \MX\}.
$$
It is straightforward to check that $\Ldim(\MF^{\negpt}) = \vc(\MF^{\negpt}) = 3$. However, the Littlestone dimension of the class $\tilde\MF^{\negpt}$ defined in (\ref{eq:tildef-inf}) is infinite, as shown in the following proposition:
\begin{proposition}
It holds that $\vc(\tilde \MF^{\negpt}) = \Ldim(\tilde \MF^{\negpt}) = \infty$.
\end{proposition}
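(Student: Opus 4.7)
Since $\vc(\MH) \leq \Ldim(\MH)$ for any class $\MH$, it suffices to show that $\vc(\tilde\MF^{\negpt}) = \infty$, which I will do by exhibiting, for every $n \in \BN$, a shattered set of size $n$. Fix distinct points $y_1, \ldots, y_n \in \MX$ and an arbitrary labeling $\ep \in \{-1,1\}^n$; my goal is to construct $\MG_\ep \subset \MF^{\negpt}$ such that $\soa{\MG_\ep}{y_i} = \ep_i$ for all $i$. Writing $A := \{y_i : \ep_i = 1\}$, I first enlarge $A$ to a set $A' \subset \MX$ with $|A'| \geq 2$ and $A' \cap \{y_1, \ldots, y_n\} = A$ by adjoining one or two fresh points from $\MX \setminus \{y_1, \ldots, y_n\}$ (possible since $\MX$ is infinite), then pick an auxiliary point $z \in \MX \setminus (A' \cup \{y_1, \ldots, y_n\})$ and define
$$
\MG_\ep := \{\delta_x : x \in A'\} \cup \{-\delta_z\} \subset \MF^{\negpt}.
$$

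The central claim is that $\soa{\MG_\ep}{y} = 1$ if and only if $y \in A'$, which when restricted to $\{y_1, \ldots, y_n\}$ gives precisely the labeling $\ep$ and thus realizes an arbitrary dichotomy. The verification is a short case analysis on $y \in \MX$. If $y \in A'$, then $\MG_\ep|_{(y,1)} = \{\delta_y, -\delta_z\}$ has Littlestone dimension $1$ (the two functions disagree at any point of $\MX \setminus \{y,z\}$), while $\MG_\ep|_{(y,-1)} = \{\delta_x : x \in A' \setminus \{y\}\}$ has Littlestone dimension at most $1$, so $\soa{\MG_\ep}{y} = 1$. If $y \notin A' \cup \{z\}$, then $\MG_\ep|_{(y,1)} = \{-\delta_z\}$ has Littlestone dimension $0$, while $\MG_\ep|_{(y,-1)} = \{\delta_x : x \in A'\}$ has Littlestone dimension exactly $1$ because $|A'| \geq 2$, so $\soa{\MG_\ep}{y} = -1$. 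Finally, $\MG_\ep|_{(z,1)} = \emptyset$ and $\MG_\ep$ is nonempty, so $\soa{\MG_\ep}{z} = -1$.

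The only ingredient requiring an auxiliary computation is the Littlestone dimension of classes of point functions $\{\delta_{x_1}, \ldots, \delta_{x_k}\}$ (with the $x_j$ pairwise distinct), which I plan to show equals $1$ for $k \geq 2$: a depth-$1$ tree labeled $x_1$ separates $\delta_{x_1}$ from $\delta_{x_2}$, while no depth-$2$ tree can be shattered because whichever point labels the root is a base point of at most one of the $\delta_{x_j}$, so one of the two children's classes collapses to a singleton or the empty set and cannot realize both bits at depth $1$. No step appears to present a genuine obstacle; the main conceptual move is the design of $\MG_\ep$, where the single negated point function $-\delta_z$ is used to tilt the Littlestone dimension at points of $A'$ in favor of the $+1$ side, while on points outside $A'$ the block $\{\delta_x : x \in A'\}$ dominates and forces $-1$.
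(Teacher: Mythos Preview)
Your proof is correct and takes a somewhat different, more elementary route than the paper. The paper fixes distinct $x_1,\ldots,x_d$ (with $d\geq 2$) and takes the large subclass $\MG := \{f \in \MF^{\negpt} : f(x_j)=1 \text{ for some } j\}$, which contains all the $\delta_{x_j}$ together with every negated point function $-\delta_x$ for $x\notin\{x_1,\ldots,x_d\}$; it then shows $\soaf{\MG}$ is the indicator of $\{x_1,\ldots,x_d\}$ via Littlestone-dimension computations that go up to $2$ (in particular, exhibiting a shattered depth-$2$ tree inside $\MG|_{(x,-1)}$). Your construction instead uses a minimal subclass $\MG_\ep = \{\delta_x : x\in A'\}\cup\{-\delta_z\}$ with only $|A'|+1$ functions, so all relevant Littlestone dimensions are at most $1$ and the case analysis is shorter. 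Both arguments realize the indicator of any finite set of size $\geq 2$ as an SOA classifier; you also make explicit the auxiliary-point trick (enlarging $A$ to $A'$) needed to handle labelings with $|A|\leq 1$, which the paper leaves implicit. One small point worth flagging: when $|A'|\geq 3$ and $y\in A'$, both $\MG_\ep|_{(y,1)}$ and $\MG_\ep|_{(y,-1)}$ have Littlestone dimension exactly $1$, so your conclusion $\soa{\MG_\ep}{y}=1$ relies on the paper's tie-breaking convention in Definition~\ref{def:soa}, which indeed outputs $1$ on ties.
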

\begin{proof}
  We show that for any $d \in \BN$, $d \geq 2$, and distinct points $x_1, \ldots, x_d \in \MX$, there is some $h \in \tilde \MF^{\negpt}$ so that $h(x_1) = \cdots = h(x_d) = 1$ and $h(x) = -1$ for all $x \not \in \{x_1, \ldots, x_d \}$.

  To do so, fix $d \geq 2$ and the points $x_1, \ldots, x_d$. Define
  $$
\MG := \{ f \in \MF^{\negpt} : \exists j \in [d] \text{ s.t. } f(x_j) = 1 \}.
$$
We first show that for all $x \not \in \{x_1, \ldots, x_d\}$, it holds that $\soa{\MG}{x} = -1$. This in turn follows from the following two facts:
\begin{itemize}
\item $\Ldim(\MG|_{(x,1)}) = 1$. To see this, note first that any $f \in \MG|_{(x,1)}$ must be of the form $f(y) = -\delta_z(y)$ for $z,y \in \MX$. The class of such $f$ has Littlestone dimension at most 1. The Littlestone dimension is exactly 1 since $-\delta_{x_1}, -\delta_{x_2} \in \MG|_{(x,1)}$. 
\item $\Ldim(\MG|_{(x,-1)}) = 2$. To see that the Littlestone dimension is at most 2, note that $\Ldim(\MF^{\negpt}|_{(x,-1)}) =2$ and $\MG \subset \MF^{\negpt}$. To see that the Littlestone dimension is at least 2, consider the tree $\bx$ of depth 2 defined by
  $$
\bx_1 = x_1, \qquad  \bx_2(-1) = \bx_2(1) = x_2.
$$
This tree is shattered by $\MG|_{(x,-1)}$ since:
\begin{align*}
  \delta_{x_3}(x_1) = \delta_{x_3}(x_1) = -1 \\
  \delta_{x_1}(x_1) = 1, \delta_{x_2}(x_2) = -1 \\
  \delta_{x_2}(x_1) = -1, \delta_{x_2}(x_2) = 1 \\
  -\delta_x(x_1) = -\delta_x(x_2) = 1,
\end{align*}
and $\delta_{x_1}, \delta_{x_2}, \delta_{x_3}, -\delta_x \in \MG|_{(x,-1)}$.
\end{itemize}

We next show that for all $x \in \{x_1, \ldots, x_d \}$, it holds that $\soa{\MG}{x} = 1$. This in turn follows from the following facts. We note that by symmetry we may assume without loss of generality that $x = x_1$:
\begin{itemize}
\item $\Ldim(\MG|_{(x_1,1)}) = 2$. Since $\MG \subset \MF^{\negpt}$, we have that $\MG|_{(x_1,1)} \subset \MF^{\negpt}|_{(x_1,1)}$. On the other hand, any $f \in \MF^{\negpt}$ with $f(x_1) = 1$ necessarily lies in $\MG$, so $\MF^{\negpt}|_{(x_1,1)} \subset \MG|_{(x_1,1)}$. Thus $\MF^{\negpt}|_{(x_1,1)} = \MG|_{(x_1,1)}$, so we have that $\Ldim(\MG|_{(x_1,1)}) = \Ldim(\MF^{\negpt}|_{(x_1,1)}) = 2$.
\item $\Ldim(\MG|_{(x_1,-1)}) = 2$. The Littlestone dimension is at most 2 since $\Ldim(\MF^{\negpt}|_{(x_1,-1)}) = 2$. A similar argument as the one used above to establish that $\Ldim(\MG|_{(x,-1)}) = 2$ may be used to show that here $\Ldim(\MG|_{(x_1,-1)}) = 2$; however, we note that this argument isn't necessary to show that $\soa{\MG}{x_1} = 1$. 
\end{itemize}
We have shown that $\soa{\MG}{x} = 1$ if and only if $x \in \{x_1, \ldots, x_d\}$, which completes the proof of the proposition.
\end{proof}

\section{Proof of Theorem \ref{thm:bousquet-sanitizer}}
\label{sec:sanitizer-quant}
In this section we sketch how the quantitative bound in Theorem \ref{thm:bousquet-sanitizer} (restated below for convenience) may be derived from the argument in \cite{bousquet_passing_2019}.
\BLMSanitizer*
\begin{proof}[Proof of Theorem \ref{thm:bousquet-sanitizer} using \cite{bousquet_passing_2019}]
  We assume familiarity with the notation and terminology of \cite{bousquet_passing_2019}. We first remark that by Proposition 2 of \cite{bousquet_passing_2019}, it suffices to show the existence of a DP-fooling algorithm (for an arbitrary distribution $p_{real}$ over $\MX$) with sample complexity given by (\ref{eq:n-sanitizer}). In turn, the proof of existence of a DP-fooling algorithm is a slight modification of the proof of Proposition 1 of \cite{bousquet_passing_2019} with the parameter $\kappa$ therein taken to be equal to 1; the main difference is the use of the advanced composition lemma for differential privacy as opposed to the basic composition lemma used in \cite{bousquet_passing_2019}.

  For any $\alpha, \beta, \delta \in (0,1)$, let the number of samples in the input dataset $S$ to the DP-fooling algorithm 
  be the quantity on the right-hand side of (\ref{eq:n-sanitizer}). As in \cite{bousquet_passing_2019}, we let $G$ be a generator that fools $\MF$ with round complexity $T(\alpha') \leq O \left( \frac{d_{\LL}^\st \log(d_{\LL}^\st / \alpha')}{(\alpha')^2}\right)$, for any $\alpha' \in (0,1)$. (Such a $G$ is guaranteed by \cite[Theorem 2]{bousquet_passing_2019}.) Let $D$ be the discriminator used in \cite[Figure 2]{bousquet_passing_2019}. 
  We use exactly the same fooling algorithm as in \cite{bousquet_passing_2019}, except with the number of rounds set to $T_0 = T(\alpha/4)$, and set $\tau_0 = 1/\sqrt{T_0\log(1/\delta)}$ (in \cite{bousquet_passing_2019} the settings were instead $T_0 = \min \{ |S|^\kappa, T(\alpha/4) \}$ and $\tau_0 = 1/T_0$). Thus there is some constant $C$ so that $T_0 \leq C \cdot \left(\frac{d_{\LL}^\st \log(d_{\LL}^\st / \alpha)}{\alpha^2} \right)$. 
  To analyze privacy and utility, we use \cite[Lemma 6]{bousquet_passing_2019}, which uses as a black-box a proper PAC learner with sample complexity $n_0(\alpha', \beta', \ep', \delta'$): 
  in particular, in our usage of Lemma 6, the privacy parameters $(\ep', \delta')$ of this PAC learner, which are referred to as $(\alpha(\tau|S|), \beta(\tau|S|))$ in \cite{bousquet_passing_2019}, are $(\ep', \delta') := (1, \delta)$. Moreover, the parameter $\alpha'$ (referred to as $\ep$ in \cite[Lemma 6]{bousquet_passing_2019}) is set to $\alpha' := \alpha/8$, the parameter $\beta'$ (referred to as $\delta$ in \cite{bousquet_passing_2019}) is set to $\beta' := \beta\tau_0/2$.\footnote{The sample complexity bound in \cite[Lemma 6]{bousquet_passing_2019} includes an additional factor of $\tau_0$, explaining the dependence of $\tau_0^2 \beta/2$ in (\ref{eq:n-sanitizer}).} The number of samples in (\ref{eq:n-sanitizer}) satisfies Eq.~(12) of \cite{bousquet_passing_2019}, thus allowing us to apply Lemma 6 therein.

  To analyze privacy of this algorithm, note that \cite[Lemma 6]{bousquet_passing_2019} gives that the discriminator $D$ is $(6\tau_0 \ep' + \tau_0, 4e^{6\tau_0 \ep'} \tau_0 \delta')$-differentially private, where $\ep', \delta'$ are the privacy parameters used in the proper PAC learner for $\MF$ which has sample complexity $n_0(\alpha', \beta', \ep', \delta')$. By the advanced composition lemma (e.g., \cite[Theorem 3.20]{DworkRothBook}), the overall algorithm is
  $$
\left( \sqrt{2T_0 \ln(1/\delta')} \cdot (6 \tau_0 \ep' + \tau_0) + 3T_0 (6 \tau_0 \ep' + \tau_0)^2,\  T_0 \cdot 4 e^{6 \tau_0 \ep'} \tau_0 \delta' + \delta' \right)
$$
differentially private. Using our choice of $\tau_0$, as well as our settings $\ep' = 1, \delta' = \delta$ (recall that $\delta$ is the target approximate privacy parameter), the overall algorithm is $(O(1), O(\sqrt{T_0} ) \cdot \delta)$-differentially private.

To analyze utility of the overall algorithm, the argument is essentially identical as in \cite{bousquet_passing_2019}: the choice of $T_0 = T(\alpha/4)$ and the fact that the generator $G$ has round complexity $T(\cdot)$ implies that if the guarantee of \cite[Lemma 6]{bousquet_passing_2019} holds for all $T_0$ iterations (which is the case with probability at least $1 - T_0 \cdot (\tau_0^2 \beta/2) \geq 1-\beta/2$), then the distribution $p_{syn}$ output by the generator $G$ at its termination satisfies $\sup_{f \in \MF} \left| \E_{x \sim p_{syn}}[f(x)] - \E_{x \sim p_S}[f(x)] \right| \leq \alpha/2$. Moreover, we observe that since the number of samples in (\ref{eq:n-sanitizer}) is at least $\Omega \left( \frac{d_{\VV} + \log(1/\beta)}{\alpha^2} \right)$ , with probability at least $1-\beta/2$, it will also hold that $\sup_{f \in \MF} \left| \E_{x \sim p_S}[f(x)] - \E_{x \sim p_{real}}[f(x)] \right| \leq \alpha/2$. Thus, with probability at least $1-\beta$, we have $\sup_{f \in \MF} \left| \E_{x \sim p_{syn}}[f(x)] - \E_{x \sim p_{real}}[f(x)]\right|$, which establishes the desired DP-foolability property.
\end{proof}

\bibliographystyle{alpha}
\bibliography{privacy}
\end{document}